\newtheorem{theorem}{Theorem} 
\newtheorem{remark}{Remark} 
\newtheorem{lemma}{Lemma}
\newtheorem*{lemma*}{Lemma}
\newtheorem*{remark*}{Remark} 
\newtheorem*{theorem*}{Theorem} 
\newcommand{\ie}{{\em i.e.,~}}
\newcommand{\eg}{{\em e.g.,~}}
\newif\ifshowedits
\title{Robust Distributed Estimation: Extending Gossip Algorithms to Ranking and Trimmed Means}
\author{
  Anna van Elst \quad Igor Colin \quad Stephan Cl\'emen\c{c}on \\
  LTCI, T\'el\'ecom Paris, Institut Polytechnique de Paris\\
  \texttt{\{anna.vanelst, igor.colin, stephan.clemencon\}@telecom-paris.fr}
}
\begin{document}

\maketitle

\begin{abstract}
  This paper addresses the problem of robust estimation in gossip algorithms over arbitrary communication graphs. Gossip algorithms are fully decentralized, relying only on local neighbor-to-neighbor communication, making them well-suited for situations where communication is constrained. A fundamental challenge in existing mean-based gossip algorithms is their vulnerability to malicious or corrupted nodes. In this paper, we show that an outlier-robust mean can be computed by globally estimating a robust statistic. More specifically, we propose a novel gossip algorithm for rank estimation, referred to as \textsc{GoRank}, and leverage it to design a gossip procedure dedicated to trimmed mean estimation, coined \textsc{GoTrim}. In addition to a detailed description of the proposed methods, a key contribution of our work is a precise convergence analysis: we establish an $\mathcal{O}(1/t)$ rate for rank estimation and an $\mathcal{O}(1 / {t})$ rate for trimmed mean estimation, where by $t$ is meant the number of iterations. Moreover, we provide a breakdown point analysis of \textsc{GoTrim}. We empirically validate our theoretical results through experiments on diverse network topologies, data distributions and contamination schemes.
\end{abstract}

\section{Introduction}
Distributed learning has gained significant attention in machine learning applications where data is naturally distributed across a network, either due to resource or communication constraints \citep{chen2021distributed}. The rapid development of the Internet of Things has further amplified this trend, as the number of connected devices continues to grow, producing large volumes of data at the edge of the network. As a result, edge computing, where computation is performed closer to the data source, has emerged as a viable alternative to conventional cloud computing \citep{cao2020overview, villar2023edge}. By reducing reliance on centralized servers, edge computing offers several advantages, including lower energy consumption, improved data security, and reduced latency \citep{cao2020overview, villar2023edge}. In this context, several distributed learning frameworks, including Gossip Learning and Federated Learning, have been developed. Federated Learning depends on a central server for model aggregation \citep{mammen2021federated}, which introduces challenges such as communication bottlenecks and a single point of failure. In contrast, Gossip Learning provides a fully decentralized alternative where nodes communicate only with their nearby neighbors \citep{hegedHus2021decentralized}. In addition, Gossip learning is particularly well-suited for situations where communication is constrained, such as in peer-to-peer or sensor networks.  

One of the central challenges in distributed learning is robustness to corrupted nodes---scenarios where some nodes may contain contaminated data due to hardware faults, or even adversarial attacks \citep{chen2017distributed, yin2018byzantine}. Consider a network of sensors deployed to monitor the temperature in a small region, as described in \cite{boyd2006randomized}. In idealized scenarios, sensor noise is typically modeled by a zero-mean Gaussian. However, a recent survey (see \cite{ayadi2017outlier}) mentions that sensor networks are especially prone to outliers due to their reliance on imperfect sensing devices. Indeed, in practice, sensors can malfunction (e.g., become miscalibrated, get stuck at constant values, or report extreme values due to environmental interference). These issues introduce outliers in the dataset which can corrupt the estimated temperature. Specifically, we consider a setting in which a fraction of the nodes hold outlying data \citep{feng2014distributed}. In Federated Learning, significant progress has been made in designing robust aggregation methods based on statistics, such as the median and trimmed mean \citep{blanchard2017machine, baruch2019little, yin2018byzantine}. These statistics are known to be more robust to outliers than the standard mean \citep{huber1992robust}. However, ensuring robustness in Gossip Learning remains challenging. Unlike Federated Learning, where a central server can enforce robust aggregation rules, gossip algorithms inherently rely on local communication. To the best of our knowledge, approaches relying on globally estimating robust statistics remain largely unexplored in the gossip literature.

In this paper, we address the problem of robust estimation in gossip algorithms over arbitrary communication graphs. Specifically, we show that an outlier-robust mean can be computed by globally estimating a robust statistic. Ranks play a crucial role in the framework we develop here, as they allow us to identify outliers and compute robust statistics on the one hand \cite{Rieder}, and because they can be obtained by means of pairwise comparisons on the other. This allows us to exploit the gossip approach proposed in \cite{colin2015extending} to compute pairwise averages (\(U\)-statistics) in a decentralized way. Note that both ranks and trimmed means are inherently global statistics, which makes them more challenging to estimate in a decentralized manner compared to statistics like averages, minima, or maxima that rely on local conservation principles. 

Our main contributions are summarized as follows:
\smallskip

%\begin{itemize}
  %  \item 
    \noindent $\bullet$ We propose \textsc{GoRank}, a new gossip algorithm for rank estimation, and establish the first theoretical convergence rates for gossip-based ranking on arbitrary communication graphs, proving an \( \mathcal{O}(1/t) \) convergence rate, where \( t\geq 1 \) denotes the number of iterations.\\
    %\item 
   \noindent $\bullet$ We introduce \textsc{GoTrim}, the first gossip algorithm for trimmed mean estimation which can be paired with any ranking algorithm. \textsc{GoTrim} does not rely on strong graph topology assumptions---a common restriction in robust gossip algorithms. We prove a competitive convergence rate of order $\mathcal{O}(1 / {t})$ and provide a breakdown point analysis, demonstrating the robustness of its estimate.\\ 
  %  \item 
    \noindent $\bullet$ Finally, we conduct extensive experiments on various contaminated data distributions and network topologies. Numerical results empirically show that (1) \textsc{GoRank} consistently outperforms previous work in large and poorly connected communication graphs, and (2) \textsc{GoTrim} effectively handles outliers---its estimate quickly improves on the naive mean and converges to the true trimmed mean.

%\end{itemize}

\begin{figure}[t]
    \centering
    \includegraphics[width=1\linewidth]{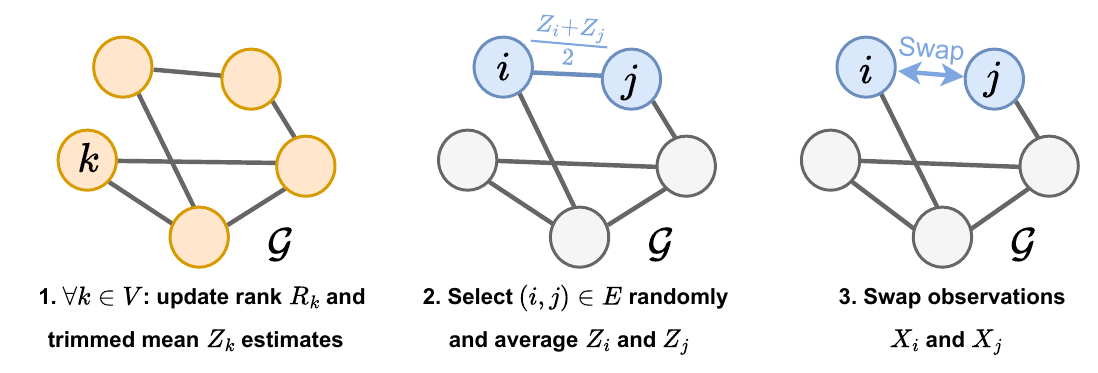}
    \vspace{-5mm}
    \caption{Illustration of a \textsc{GoTrim}'s iteration paired with \textsc{GoRank} on a communication graph \( \mathcal{G} = (V, E) \). Initially, each node \( k \in V \) stores the observation \( X_k \). 
     }
    \label{fig:viz}
\vspace{-5mm}
\end{figure}

This paper is organized as follows. Section 2 introduces the problem setup and reviews the related works. In Section 3, we present our new gossip algorithm for ranking, along with its convergence analysis and supporting numerical experiments. Section 4 introduces our novel gossip algorithm for trimmed mean estimation, together with a corresponding convergence analysis and experimental results. Finally, Section 5 explores potential extensions of our work. Due to space constraints, technical details, further discussions, and results are deferred to the Supplementary Material.
\section{Background and Preliminaries}
This section briefly introduces the concepts of decentralized learning, describes the problem studied and the framework for its analysis.
\subsection{Problem Formulation and Framework}

Here, we formulate the problem using a rigorous framework and introduce the necessary notations.

\paragraph{Notation.} Let $n\geq 1$. We denote scalars by normal lowercase letters \(x \in \mathbb{R}\), vectors (identified as column vectors) by boldface lowercase letters \(\mathbf{x} \in \mathbb{R}^n\), and matrices by boldface uppercase letters \(\mathbf{X} \in \mathbb{R}^{n \times n}\). The set \(\{ 1, \dots, n \}\) is denoted by \([n]\), $\mathbb{R}_n$'s canonical basis by $\{\mathbf{e}_k:\; k\in [n]\}$, the indicator function of any event \(\mathcal{A}\) by \(\mathbb{I}_{\mathcal{A}}\), the transpose of any matrix \(\mathbf{M}\) by \(\mathbf{M}^{\top}\) and the cardinality of any finite set \(F\) by \(|F|\). By \(\mathbf{I}_n\) is meant the identity matrix in \(\mathbb{R}^{n \times n}\), by \(\mathbf{1}_n = (1,\; \ldots,\; 1)^{\top}\) the vector in \(\mathbb{R}^n\) whose coordinates are all equal to one, by \(\|\cdot\|\) the usual \(\ell_2\) norm, by $\lfloor \cdot \rfloor$ the floor function, and by \(\mathbf{A}\odot \mathbf{B}\) the Hadamard product of matrices \(\mathbf{A}\) and \(\mathbf{B}\). We model a network of size \(n > 0\) as an undirected graph \(\mathcal{G} = (V, E)\), where \(V = [n]\) denotes the set of vertices and \(E \subseteq V \times V\) the set of edges. We denote by \(\mathbf{A}\) its adjacency matrix, meaning that for all \((i, j) \in V^2\), \([\mathbf{A}]_{ij} = 1\) iff \((i, j) \in E\), and by \(\mathbf{D}\) the diagonal matrix of vertex degrees. The graph Laplacian of \(\mathcal{G}\) is defined as \(\mathbf{L} = \mathbf{D} - \mathbf{A}\).

\paragraph{Setup.} We consider a decentralized setting where $n\geq 2$ real-valued observations $X_1, \ldots, X_n$ are distributed over a communication network represented by a \textit{connected} and \textit{non-bipartite} graph \( \mathcal{G} = (V, E) \), see \cite{Diestelbook}: the observation $X_k$ is assigned to node $k\in [n]$. For simplicity, we assume no ties: for all $k \neq l$, $X_k \neq X_l$. Communication between nodes occurs in a stochastic and pairwise manner: at each iteration, an edge of the communication graph \(\mathcal{G}\) is chosen uniformly at random, allowing the corresponding neighboring nodes to exchange information. This popular setup is robust to network changes and helps reduce communication overhead and network congestion \citep{boyd2006randomized, duchi2011dual, colin2015extending}. We focus on the \textit{synchronous gossip} setting, where nodes have access to a global clock and synchronize their updates \citep{boyd2006randomized, duchi2011dual, colin2015extending}. We assume that a fraction \( 0 < \varepsilon < 1/2 \) of the data is corrupted and may contain outliers, as stipulated in Huber's contamination model, refer to \citep{huber1992robust}.

\paragraph{The Decentralized Estimation Problem.} The goal pursued here is to develop a robust gossip algorithm that accurately estimates the mean over the network despite the presence of outliers, specifically the \( \alpha \)-trimmed mean with $\alpha\in (0,1/2)$, \ie the average of the middle $(1-2\alpha)$-th fraction of the observations. This statistic, discarding the observations of greater or smaller rank, is a widely used location estimator when data contamination is suspected; see \citep{ van2000asymptotic, serfling2009approximation}. Formally, let \( X_{n(1)} \leq X_{n(2)} \leq \dots \leq X_{n(n)} \) be the order statistics (\ie the observations sorted in ascending order), and define \( m = \lfloor \alpha n \rfloor \) for \(\alpha \in (0, 1/2)\). The \( \alpha \)-trimmed mean is given by:
\begin{equation}
\label{eq:trimmed-mean}
    \bar{x}_{\alpha} = \frac{1}{n-2m} \sum_{k=m+1}^{n-m} X_{n(k)}.
\end{equation}
Based on the rank of each node's observation \(X_k\), namely \(r_k=1+\sum_{l=1}^n\mathbb{I}_{\{X_k>X_l\}}\) in the absence of ties, for $k\in[n]$, it can also be formulated as a weighted average of the observations, just like many other robust statistics:
\begin{equation}
\label{eq:fn-average}
    \bar{x}_{\alpha} = \frac{1}{n}\sum_{k=1}^{n} w_{n, \alpha}(r_k) X_k,
\end{equation}
where \( w_{n, \alpha} \) is a weight function defined as \( w_{n, \alpha}(r_k) = (n/(n - 2m)) \mathbb{I}_{\{r_k \in I_{n, \alpha}\}} \), with the inclusion interval given by \( I_{n, \alpha} = [u, v] \) where \( u = m+1 \) and \( v = n-m \).  

\begin{remark}\label{rk:ties}
The framework can be extended to the case of $\ell\geq 2$ tied observations with the mid-rank method: the rank assigned to the $\ell$ tied values is the average of the ranks they would have obtained in absence of ties, that is, the average of $p+1, p+2, \ldots, p+\ell$, which equals $p + (\ell + 1)/2$, see \cite{kendall1945treatment}. 
To account for the possibility of non-integer rank estimates, one may use the adjusted inclusion interval to define the weights in \eqref{eq:fn-average}:  \(I_{n, \alpha} = [u - 1/2, v + 1/2] \). Note that when ranks are integers, this adjustment has no effect.
\end{remark}

\begin{remark}
\label{rk:byzantine}
Our setup assumes honest nodes, meaning they perform updates correctly and consistently based on their local observations. However, under Huber’s contamination model, these observations may include outliers. Note that this setup is fundamentally different from the Byzantine model where nodes may behave arbitrarily or maliciously, potentially sending incorrect updates with the intent to disrupt consensus or degrade performance. This model is still realistic in many practical settings: a sensor could be miscalibrated, stuck at a fixed value, or may consistently report incorrect readings due to environmental factors, without necessarily being attacked.  
\end{remark}

\subsection{Related Works -- State of the Art}

The overview of related literature below highlights the novelty of the gossip problem analyzed here.
\vspace{-2mm}
\paragraph{Distributed Ranking.} Distributed ranking (or ordering) is considered in \cite{chiuso2011gossip}, where a gossip algorithm for estimating ranks on any communication graph is proposed. The algorithm is proved to converge, in the sense of yielding the correct rank estimate, in finite time with probability one. However, their work does not provide any convergence rate bound, and empirical results suggest that the algorithm is suboptimal in scenarios with long return times, \ie when the expected time for a random walk on the graph to return to its starting node is long (see the Supplementary Material). Alternatively, here we take advantage of the fact that ranks can be calculated using pairwise comparisons $\mathbb{I}_{\{X_k>X_l\}}$, so as to build on the \textit{GoSta} approach in \cite{colin2015extending}, originally introduced for the distributed estimation of \( U \)-statistics, with a proved convergence rate bound of order \( \mathcal{O}(1/t) \). 
\vspace{-2mm}
\paragraph{Robust Mean Estimation.} To the best of our knowledge, the estimation of \(\alpha\)-trimmed means has not yet been explored in the gossip literature. Several related works examine the estimation of medians and quantiles in sensor networks \citep{shrivastava2004medians, greenwald2004power, kempe2003gossip}. A key limitation of these works is their reliance on a special node, such as a base station or leader, which initiates queries, broadcasts information, and collects data from other sensors---an assumption that does not apply to our setting \citep{shrivastava2004medians, greenwald2004power, kempe2003gossip}. Another work has proposed an algorithm for estimating quantiles \citep{haeupler2018optimal}; however, there seems to be no guarantees of convergence to the true quantiles. Moreover, their algorithm assumes a fully connected communication graph and the ability to sample from four nodes at each step and to sample $K$ random nodes at the end of the protocol, whereas we consider the more challenging setting of arbitrary communication graphs and pairwise communication. Recently, \citet{he2022byzantine} proposed a novel local aggregator, \textit{ClippedGossip}, for Byzantine-robust gossip learning. In our pairwise setup with a fixed clipping radius, their approach---though reasonable for robust optimization---does not work for robust estimation. Specifically, \textit{ClippedGossip} ultimately converges to the corrupted mean and, therefore, fails to reduce the impact of outliers. A detailed analysis is provided in the Supplementary Material.

The following table assesses whether each method is fully decentralized, whether the estimator is unbiased, and whether theoretical convergence rates exist (for both complete and arbitrary graphs). 

% We consider Chiuso et al.’s method not fully decentralized, as it requires an initial global ranking, which may not be available in real-world decentralized settings. Similarly, Haeupler et al.'s method is not fully decentralized since it requires sampling $K$ random nodes at the end of the protocol. Furthermore, their algorithm estimates only an approximate median, making the bias unclear. Finally, Shrivastava et al.'s method relies on a base station for coordination is thus not fully decentralized.

\begin{tabularx}{\textwidth}{l c c c}
\toprule
\textbf{Method} & \textbf{\small Decentralized?} & \textbf{\small Unbiased?} & \textbf{\small Rates on: Any Graph? Complete Graph?} \\
\midrule
Chiuso et al. (Baseline) & $\times$ & $\checkmark$ & Complete Graph: $\mathcal{O}(\exp(-t/|E|))$  \\
Baseline++ (ours) & $\times$ & $\checkmark$ & Complete Graph: $\mathcal{O}(\exp(-t/|E|))$  \\
\textbf{GoRank (ours)} & $\checkmark$ & $\checkmark$ & Any Graph: $\mathcal{O}(1/t)$  \\
\midrule
Haeupler et al. & $\times$ & $\sim$ & Complete Graph: $\mathcal{O}(\log (n))$ rounds \\
He et al. & $\checkmark$ & $\times$ & $\times$ \\
Shrivastava et al. & $\times$ & $\checkmark$ & $\times$  \\
\textbf{GoTrim (ours)} & $\checkmark$ & $\checkmark$ & Any Graph: $\mathcal{O}(1/t)$  \\
\bottomrule
\end{tabularx}

\section{A Gossip Algorithm for Distributed Ranking -- \textsc{GoRank}}

In this section, we introduce and analyze the \textsc{GoRank} algorithm. We establish that the expected estimates converge to the true ranks at a \(\mathcal{O}(1/ct)\) rate, where the constant $c>0$ (given in Theorem \ref{thm:exp-gorank} below) quantifies the degree of connectivity of $\mathcal{G}$: the more connected the graph, the greater this quantity and the smaller the rate bound. We also prove that the expected absolute error decreases at a rate of \( \mathcal{O}(1/\smash{\sqrt{ct}}) \). In addition, we empirically validate these results with experiments involving graphs of different types, showing that the observed convergence aligns with the theoretical bounds.

\subsection{Algorithm -- Convergence Analysis}

We introduce \textsc{GoRank}, a gossip algorithm for estimating the ranks of the observations distributed on the network, see Algorithm~\ref{alg:gorank}. It builds on \textsc{GoSta}, an algorithm originally designed for estimating pairwise averages (\( U \)-statistics of degree $2$) proposed and analyzed in \cite{colin2015extending}. The \textsc{GoRank} algorithm exploits the fact that ranks can be computed by means of pairwise comparisons: 
\begin{equation}\label{eq:ranks}
r_k = 1 + n\left( \frac{1}{n}\sum_{l=1}^n \mathbb{I}_{\{X_k > X_l\}}\right)=1+nr'_k, \text{ for } k=1\; \ldots,\; n.
\end{equation}

\begin{algorithm}[htbp]
        \caption{GoRank: a synchronous gossip algorithm for ranking.}
        \label{alg:gorank}
        \begin{algorithmic}[1]
        \STATE \textbf{Init:} For each \(k\in[n]\), initialize \(Y_k \gets X_k\) and \(R'_k \gets 0\).  
        \quad\texttt{// init(k)}
        \FOR{\(s=1, 2, \ldots\)}
        \FOR{\(k=1, \ldots, n\)}
        \STATE Update estimate: \(R'_k \leftarrow (1-1/s) R'_k + (1/s)\mathbb{I}_{\{X_k > Y_k\}}\).
        \STATE Update rank estimate: \(R_k \leftarrow n R'_k + 1\). \quad\texttt{// update(k, s)}
        \ENDFOR
         \STATE Draw \((i, j) \in E\) uniformly at random.
         \STATE Swap auxiliary observation: \(Y_i \leftrightarrow Y_j\). \quad\texttt{// swap(i, j)}
        \ENDFOR
        \STATE \textbf{Output:} Estimate of ranks \(R_k\). 
        \end{algorithmic}
\end{algorithm}

Let \( R_k(t) \) and \( R'_k(t) \) denote the local estimates of $r_k$ and $r'_k$ respectively at node \( k\in [n] \) and iteration \( t\geq 1 \). Each node maintains an auxiliary observation, denoted \( Y_k(t) \), which enables the propagation of observations across the network, despite communication constraints. For each node \( k \), the variables are initialized as \( Y_k(0) = X_k \) and \( R'_k(0) = 0 \). At each iteration \( t \geq 1 \), node \( k \) updates its estimate \( R'_k(t) \) by computing the running average of \( R'_k(t-1) \) and \( \mathbb{I}_{\{X_k > Y_k(t-1)\}} \). The rank estimate is then computed as \(R_k(t) = n R'_k(t) + 1\). Next, an edge \( (i, j) \in E \) is selected uniformly at random, and the corresponding nodes exchange their auxiliary observations: \( Y_i(t) = Y_j(t-1) \) and \( Y_j(t) = Y_i(t-1) \). This random swapping procedure allows each observation to perform a random walk on the graph, which is described by the permutation matrix \(\mathbf{W}_1(t) = \mathbf{I}_n - (\mathbf{e}_i - \mathbf{e}_j)(\mathbf{e}_i - \mathbf{e}_j)^\top\), which plays a key role in the convergence analysis \cite{colin2015extending}. By taking the expectation with respect to the edge sampling process, we obtain \(\mathbf{W}_1 := \mathbb{E}[\mathbf{W}_1(t)] = \mathbf{I}_n - ({1/|E|}) \mathbf{L}\). This stochastic matrix \(\mathbf{W}_1\) shares similarities with the transition matrix used in gossip averaging \cite{shah2009gossip, boyd2006randomized, koloskova2019decentralized}: it is symmetric and doubly stochastic. Consequently, \(\mathbf{W}_1\) has eigenvector \(\mathbf{1}_n\) with eigenvalue \(1\), resulting in a random walk with uniform stationary distribution. In other words, after sufficient iterations (\ie in a nearly stationary regime), each observation has (approximately) an equal probability of being located at any given node---a property that would naturally hold (exactly) without swapping if the communication graph were complete. In addition, it can be shown that, if the graph is connected and non-bipartite, the spectral gap of \(\mathbf{W}_1\) satisfies \(0 < c <  1\) and is given by \(c = \lambda_2/|E|\) where \(\lambda_2\) is the second smallest eigenvalue (or spectral gap) of the Laplacian \cite{colin2015extending}. This spectral gap plays a crucial role in the mixing time of the random walk, reflecting how quickly it (geometrically) converges to its stationary distribution \cite{shah2009gossip}. In fact, the spectral gap of the Laplacian is also known as the graph’s algebraic connectivity, a larger spectral gap meaning a higher graph connectivity \cite{mohar1991laplacian}.

\begin{remark}
We assume that the network size \(n\geq 2\) is known to the nodes. If not, it can be easily estimated by injecting a value \(+1\) into the network, where all initial estimates are set to \(0\), and applying the standard gossip algorithm for averaging \cite{boyd2006randomized}, see the Supplementary Material. The quantities computed for each node will then converge to \(1/n\).
\end{remark}

\begin{remark}
The asynchronous extension of \textsc{GoRank} is straightforward: it replaces the global iteration counter $s$ with a local counter $C_k$ maintained at each node $k$. The update rule then becomes $(1 - 1/C_k) R_k' + (1/C_k) \mathbb{I}_{X_k > Y_k}.$ Empirically, we find that Asynchronous \textsc{GoRank} converges slightly faster and is more efficient than its synchronous counterpart. Due to space constraints, the detailed algorithm and experimental results are deferred to Appendix H. 
\end{remark}

We now establish convergence results for \textsc{GoRank}, by adapting the analysis in \cite{colin2015extending}, originally proposed to derive convergence rate bounds for \textsc{GoSta} as follows. For \(k \in [n]\), set \(\mathbf{h}_k = (\mathbb{I}_{\{X_k > X_1\}}, \ldots, \mathbb{I}_{\{X_k > X_n\}})^\top\) and observe that the true rank of observation \(X_k\) is given by \(r_k = \mathbf{h}_k^\top \mathbf{1}_n + 1\). At iteration \(t = 1\), the auxiliary observation has not yet been swapped, so the expected estimate is updated as \(\mathbb{E}\left[R'_{k}(1)\right] = \mathbf{h}_k^\top \mathbf{e}_k\). At the end of the iteration, the auxiliary observation is randomly swapped, yielding the update:
$\mathbb{E}[R'_{k}(2)] = (1/2) \mathbb{E}[R'_{k}(1)] + (1/2) \mathbf{h}_k^\top \mathbf{W}_1 \mathbf{e}_k$,
where \(\mathbb{E}[\cdot]\) denotes the expectation taken over the edge sampling process. Using recursion, the evolution of the estimates, for any \(t \geq 1\) and \(k \in [n]\), is given by
$$\mathbb{E}[R'_{k}(t)] = (1/t) \sum_{s=0}^{t-1} \mathbf{h}_k^\top \mathbf{W}_1^s \mathbf{e}_k \text{ and } \mathbb{E}[R_{k}(t)] = n\mathbb{E}[R'_{k}(t)] +1.$$
Note that \( \mathbb{E}\left[R'_k(t)\right] \) can be viewed as the average of \( t \) terms of the form \( \mathbb{I}_{\{X_k > X_l\}} \), where \( X_l \) is picked at random. Observe also that \(\mathbf{R}(t)=(R_1(t), \ldots, R_n(t))\) is not a permutation of $[n]$ in general: in particular, we initially have $R_k(1)=1$ for all $k\in[n]$. We now state a convergence result for \textsc{GoRank}, which claims that the expected estimates converge to the true ranks at a rate of \(\mathcal{O}(1/ct)\).

\begin{theorem}[Convergence of Expected \textsc{GoRank} Estimates]
\label{thm:exp-gorank}
 We have: $\forall k \in[n]$, $\forall t\geq 1$,
\begin{equation*}\label{eq:bound1}
|\mathbb{E}[R_k(t)] - r_k| \leq \frac{ \sigma_k}{ct},
\end{equation*}
where the constant \( c = \lambda_2/|E|\) represents the connectivity of the graph, with \(\lambda_2\) being the spectral gap of the graph Laplacian, and the rank functional \( \sigma_k=n^{3/2}\cdot\phi((r_k-1)/n) \) is determined by the score generating function $\phi:u\in (0,1)\to \sqrt{u(1-u)}$.
\end{theorem}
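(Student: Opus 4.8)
The plan is to reduce the statement to a spectral bound on a Cesàro-averaged operator. Since $R_k(t) = nR'_k(t)+1$ and $r_k = nr'_k+1$, the error factors as $\mathbb{E}[R_k(t)] - r_k = n\bigl(\mathbb{E}[R'_k(t)] - r'_k\bigr)$, so it suffices to control $\mathbb{E}[R'_k(t)] - r'_k$. Setting $\boldsymbol{\pi} = (1/n)\mathbf{1}_n$, the identity $r'_k = \mathbf{h}_k^\top\boldsymbol{\pi}$ together with the closed form $\mathbb{E}[R'_k(t)] = (1/t)\sum_{s=0}^{t-1}\mathbf{h}_k^\top\mathbf{W}_1^s\mathbf{e}_k$ already derived in the excerpt gives $\mathbb{E}[R'_k(t)] - r'_k = \mathbf{h}_k^\top\mathbf{M}_t\mathbf{e}_k/t$, where $\mathbf{M}_t = \sum_{s=0}^{t-1}(\mathbf{W}_1^s - \boldsymbol{\Pi})$ and $\boldsymbol{\Pi} = (1/n)\mathbf{1}_n\mathbf{1}_n^\top$, using that $\boldsymbol{\Pi}\mathbf{e}_k = \boldsymbol{\pi}$.

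Next I would diagonalize. Because $\mathbf{W}_1$ is symmetric and doubly stochastic, it has an orthonormal eigenbasis $\mathbf{v}_1 = (1/\sqrt{n})\mathbf{1}_n, \mathbf{v}_2,\ldots,\mathbf{v}_n$ with eigenvalues $1 = \beta_1 > \beta_2 \geq \cdots \geq \beta_n$, where $\beta_i = 1 - \mu_i/|E|$ for the Laplacian eigenvalues $\mu_i$; connectivity and non-bipartiteness ensure $\beta_i \in (-1,1)$ for $i \geq 2$, with $\beta_2 = 1 - c$. Since $\boldsymbol{\Pi}$ projects onto $\mathbf{v}_1$, the operator $\mathbf{M}_t$ annihilates $\mathbf{1}_n$ and acts on each $\mathbf{v}_i$, $i\geq 2$, by the scalar $\gamma_i = \sum_{s=0}^{t-1}\beta_i^s = (1-\beta_i^t)/(1-\beta_i)$. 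The key claim is that $|\gamma_i| \leq 1/c$ uniformly in $i \geq 2$ and $t \geq 1$, whence $\|\mathbf{M}_t\|_{\mathrm{op}} \leq 1/c$.

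This uniform bound on the partial geometric sums is the main obstacle, and it requires separating the two sign regimes of $\beta_i$. For $\beta_i \in [0, 1-c]$ one has $0 \leq \gamma_i \leq 1/(1-\beta_i) \leq 1/c$, since $\beta_i \leq 1-c$ forces $1-\beta_i \geq c$. The delicate case is $\beta_i \in (-1,0)$, which arises precisely because swaps on a non-complete graph produce negative eigenvalues; there I would show $\beta_i^t \geq \beta_i$ for every $t \geq 1$ (for even $t$ this is clear as $\beta_i^t>0$, and for odd $t$ one writes $\beta_i^t = \beta_i\,|\beta_i|^{t-1}$ with $0<|\beta_i|^{t-1}\leq 1$), so that $0 < \gamma_i \leq 1$, and since $0 < c < 1$ gives $1 \leq 1/c$, the claim follows. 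Non-bipartiteness keeps $\beta_i > -1$, so these terms remain genuinely contractive.

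Finally I would combine this with Cauchy--Schwarz. As $\mathbf{1}_n^\top\mathbf{M}_t = 0$, I may center $\mathbf{h}_k$, so that $|\mathbf{h}_k^\top\mathbf{M}_t\mathbf{e}_k| = |(\mathbf{h}_k - r'_k\mathbf{1}_n)^\top\mathbf{M}_t\mathbf{e}_k| \leq \|\mathbf{h}_k - r'_k\mathbf{1}_n\|\,\|\mathbf{M}_t\|_{\mathrm{op}} \leq (1/c)\|\mathbf{h}_k - r'_k\mathbf{1}_n\|$. A direct computation on the $0$--$1$ vector $\mathbf{h}_k$, whose entries average to $r'_k = (r_k-1)/n$, yields $\|\mathbf{h}_k - r'_k\mathbf{1}_n\|^2 = n\,r'_k(1-r'_k) = n\,\phi(r'_k)^2$, i.e. $\|\mathbf{h}_k - r'_k\mathbf{1}_n\| = \sqrt{n}\,\phi((r_k-1)/n) = \sigma_k/n$. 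Putting the pieces together, $|\mathbb{E}[R_k(t)] - r_k| = (n/t)\,|\mathbf{h}_k^\top\mathbf{M}_t\mathbf{e}_k| \leq (n/(ct))\cdot(\sigma_k/n) = \sigma_k/(ct)$, which is the desired bound.
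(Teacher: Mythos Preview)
Your proof is correct and follows essentially the same route as the paper's: both write $\mathbb{E}[R'_k(t)]-r'_k$ as $(1/t)\sum_{s=0}^{t-1}\mathbf{h}_k^\top(\mathbf{W}_1^s-\tfrac{1}{n}\mathbf{1}_n\mathbf{1}_n^\top)\mathbf{e}_k$, center $\mathbf{h}_k$, control the result via the spectral gap $c=1-\lambda_2(1)$, and compute $\|\mathbf{h}_k-r'_k\mathbf{1}_n\|=\sqrt{n}\,\phi(r'_k)$. The only organizational difference is that the paper applies the triangle inequality termwise and bounds each $|\mathbf{h}_k^\top\tilde{\mathbf{W}}_1^s\mathbf{e}_k|$ by $\lambda_2(1)^s\|\mathbf{h}_k-\bar h_k\mathbf{1}_n\|$ (invoking Lemma~B.1(e) for $\|\tilde{\mathbf{W}}_1\|_{\mathrm{op}}\leq\lambda_2(1)$), whereas you bound the operator norm of the full Ces\`aro sum $\mathbf{M}_t$ at once by analyzing each eigenvalue's partial geometric series; your case split on the sign of $\beta_i$ is a slightly more self-contained treatment of the negative-eigenvalue regime, but the underlying argument and constants are identical.
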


More details, as well as the technical proof, are deferred to section C of the Supplementary Material. \cref{thm:exp-gorank} establishes that, for each node, the estimates converge in expectation to the true ranks at \( \mathcal{O}(1/ct) \) rate, where \(c\) is a constant depending on the graph’s algebraic connectivity: higher network connectivity leads to faster convergence. In addition, the shape of the function \(\phi\) involved in the bound \eqref{eq:bound1}, resp. of \(u\in (0,1) \mapsto \sqrt{u(1 - u)}\), suggests that extreme (\ie the lowest and largest) values are intrinsically easier to rank than middle values, see also Fig. \ref{fig:main}.
Regarding its shape, observe that \textsc{GoRank} can be seen as an algorithm that estimates Bernoulli parameters \((1/n)\sum_{l=1}^n\mathbb{I}_{ \{ X_k > X_l \}}=(r_k-1)/n \)
and, from this perspective, the \(\sigma_k\)'s can be viewed as the related standard deviations, up to the factor $n^{3/2}$.

We state an additional result below that builds upon the previous analysis, and establishes a bound of order \( \mathcal{O}(1/\smash{\sqrt{ct}}) \) for the expected absolute deviation.

\begin{theorem}[Expected Gap]
\label{thm:expected-gap}
Let \( k \in [n] \), and let \( c \) and \( \sigma_k \) be as defined in \cref{thm:exp-gorank}. For all \( t \geq 1 \), we have: $ \mathbb{E}[ \left| R_k(t) - r_k \right|^2] 
    \leq \mathcal{O}\left( 1/ct \right) \cdot \sigma_k^2\, .$
Consequently,
\begin{equation*}
    \mathbb{E}\left[ \left| R_k(t) - r_k \right| \right] 
    \leq \mathcal{O}\left( \frac{1}{\sqrt{ct}} \right) \cdot \sigma_k\enspace.
\end{equation*}
\end{theorem}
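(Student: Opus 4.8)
The plan is to bound the second moment $\mathbb{E}[|R_k(t)-r_k|^2]$ by a bias--variance decomposition and then to deduce the first-moment statement by Cauchy--Schwarz, using $\mathbb{E}[|R_k(t)-r_k|]\le(\mathbb{E}[|R_k(t)-r_k|^2])^{1/2}$. Unrolling the running-average recursion gives $R'_k(t)=\frac1t\sum_{s=1}^t Z_k(s)$ with $Z_k(s):=\mathbb{I}_{\{X_k>Y_k(s-1)\}}\in\{0,1\}$ and $q_s:=\mathbb{E}[Z_k(s)]=\mathbf{h}_k^\top\mathbf{W}_1^{s-1}\mathbf{e}_k$. Since $R_k(t)-r_k=n(R'_k(t)-r'_k)$, I would write
\[
\mathbb{E}\big[|R_k(t)-r_k|^2\big]=n^2\,\mathrm{Var}\big(R'_k(t)\big)+\big(\mathbb{E}[R_k(t)]-r_k\big)^2,
\]
and control the squared bias directly by \cref{thm:exp-gorank}, namely $(\mathbb{E}[R_k(t)]-r_k)^2\le\sigma_k^2/(ct)^2$. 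The whole difficulty is therefore concentrated in the variance of the empirical average of the \emph{dependent} indicators $Z_k(s)$.

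For the variance I would expand $\mathrm{Var}(R'_k(t))=t^{-2}\sum_{s,s'}\mathrm{Cov}(Z_k(s),Z_k(s'))$ and estimate the covariances through the mixing of the swap-driven walk. For $s\le s'$, set $d:=s'-s$, condition on the configuration at time $s-1$, and use that the relative permutation over the following $d$ swaps is independent of it, with $\Pr(\text{node }m\text{ feeds into }k)=(\mathbf{W}_1^d)_{km}$; this gives
\[
\mathbb{E}[Z_k(s)Z_k(s')]=\sum_{m}(\mathbf{W}_1^d)_{km}\,g_s(k,m),\qquad g_s(k,m):=\mathbb{E}\big[\mathbb{I}_{\{X_k>Y_k(s-1)\}}\mathbb{I}_{\{X_k>Y_m(s-1)\}}\big].
\]
Here I would exploit the exact, permutation-invariant identity $\sum_{l=1}^n\mathbb{I}_{\{X_k>Y_l(s-1)\}}=r_k-1=nr'_k$, which yields $\tfrac1n\sum_m g_s(k,m)=r'_k q_s$. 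Writing $(\mathbf{W}_1^d)_{km}=\tfrac1n+u_m$ with the vector $\mathbf{u}=(u_m)_m$ satisfying $\langle\mathbf{u},\mathbf{1}_n\rangle=0$ and $\|\mathbf{u}\|=\|\mathbf{W}_1^d\mathbf{e}_k-\tfrac1n\mathbf{1}_n\|\le(1-c)^d$, the leading part collapses to $r'_k q_s$ and, because $\langle\mathbf{u},\mathbf{1}_n\rangle=0$, the remainder can be centred: $\mathbb{E}[Z_k(s)Z_k(s')]=r'_k q_s+\langle\mathbf{u},\,\mathbf{g}_s-r'_k q_s\mathbf{1}_n\rangle$. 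Subtracting $q_s q_{s'}$ and using $|q_{s'}-r'_k|\le(\sigma_k/n)(1-c)^{s'-1}$ (exactly as in the proof of \cref{thm:exp-gorank}, since $\|\mathbf{h}_k-r'_k\mathbf{1}_n\|=\sigma_k/n$), the covariance splits into a geometric piece bounded by $(1-c)^d\,\|\mathbf{g}_s-r'_k q_s\mathbf{1}_n\|$ plus terms whose double sum is again $\mathcal{O}(\sigma_k^2/(ct))$.

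The main obstacle is to show that the two-point fluctuation $\|\mathbf{g}_s-r'_k q_s\mathbf{1}_n\|$ carries the correct amplitude $\mathcal{O}(\sigma_k/n)=\mathcal{O}(\sqrt{n\,r'_k(1-r'_k)})$, so that the geometric summation $\sum_{d\ge0}(1-c)^d=1/c$ over the $\mathcal{O}(t)$ near-diagonal pairs produces precisely $n^2\,\mathrm{Var}(R'_k(t))\le\mathcal{O}(\sigma_k^2/(ct))$. A crude bound $\|\mathbf{g}_s\|^2\le\sum_m g_s(k,m)=nr'_k q_s$ already gives $\|\mathbf{g}_s-r'_k q_s\mathbf{1}_n\|^2=\|\mathbf{g}_s\|^2-n(r'_k)^2 q_s^2\le nr'_k q_s(1-r'_k q_s)$, which is of order $\sigma_k^2/n^2$ in the near-stationary regime $q_s\approx r'_k$; the delicate point is to handle the transient indices $s=\mathcal{O}(1/c)$ and the regime $r'_k\to1$ uniformly, where one must retain the $(1-r'_k)$ factor rather than bound it by $1$. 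This is exactly the second-moment (two-particle) estimate of the swapping process, which I would carry out by adapting the matrix second-moment recursion used for \textsc{GoSta} in \cite{colin2015extending}.

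Finally I would assemble the two contributions. For $ct\ge1$ the squared bias satisfies $\sigma_k^2/(ct)^2\le\sigma_k^2/(ct)$, so $\mathbb{E}[|R_k(t)-r_k|^2]\le\mathcal{O}(\sigma_k^2/(ct))$. For $ct<1$ I would instead invoke the deterministic bound $|R_k(t)-r_k|\le n$ together with $\sigma_k^2=n^3 r'_k(1-r'_k)\ge n^2/2$ whenever $r'_k\neq0$ (and the exact equality $R_k(t)=r_k$ when $r'_k=0$, i.e.\ $r_k=1$, for which both sides vanish), giving $\mathbb{E}[|R_k(t)-r_k|^2]\le n^2\le2\sigma_k^2\le2\sigma_k^2/(ct)$. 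This establishes the second-moment bound for every $t\ge1$, and Cauchy--Schwarz then yields $\mathbb{E}[|R_k(t)-r_k|]\le\mathcal{O}(1/\sqrt{ct})\,\sigma_k$, as claimed.
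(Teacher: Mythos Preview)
Your bias--variance route is sound in outline, but it is both more involved than necessary and leaves open exactly the point you flag as ``delicate.'' Controlling $\|\mathbf{g}_s-r'_k q_s\mathbf{1}_n\|$ by $\sqrt{n\,r'_k q_s(1-r'_k q_s)}$ does not deliver the factor $\sqrt{r'_k(1-r'_k)}$ uniformly in $s$: for $r'_k$ close to $1$ and a transient $q_s$ near $1/2$, the bound is of order $\sqrt{n}$ while $\sigma_k/n\asymp 1$, so the two-particle estimate you defer to really is the crux, not a detail. The appeal to the \textsc{GoSta} second-moment recursion is plausible but not carried out, and as written the argument does not establish the stated $\mathcal{O}(\sigma_k^2/(ct))$.

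The paper avoids this entirely by skipping the bias--variance split and expanding the \emph{centered} error directly. Writing $v(s)=\tilde{\mathbf{h}}_k^\top\tilde{\mathbf{W}}_1(s{:})^\top\mathbf{e}_k$ with $\tilde{\mathbf{h}}_k=\mathbf{h}_k-\bar h_k\mathbf{1}_n$, one has $R'_k(t)-\bar h_k=\tfrac1t\sum_{s=0}^{t-1}v(s)$, and two elementary facts do all the work: (i) each $\mathbf{W}_1(s{:})$ is a permutation (hence orthogonal), so $v(s)^2\le\tilde{\mathbf{h}}_k^\top\mathbf{W}_1(s{:})^\top\mathbf{W}_1(s{:})\tilde{\mathbf{h}}_k=\|\tilde{\mathbf{h}}_k\|^2$; (ii) for $s<u$, conditioning on $\mathbf{W}_1(s{:})$ and using $\mathbb{E}[\tilde{\mathbf{W}}_1(u{:}s{+}1)]=\tilde{\mathbf{W}}_1^{\,u-s}$ with $\|\tilde{\mathbf{W}}_1\|_{\mathrm{op}}\le 1-c$ yields $|\mathbb{E}[v(s)v(u)]|\le(1-c)^{u-s}\|\tilde{\mathbf{h}}_k\|^2$. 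Summing gives
\[
\mathbb{E}\big[(R'_k(t)-\bar h_k)^2\big]\le\Big(\tfrac{1}{t}+\tfrac{2}{ct}\Big)\|\tilde{\mathbf{h}}_k\|^2\le\frac{3}{ct}\,\|\tilde{\mathbf{h}}_k\|^2,
\]
which is exactly $\sigma_k^2/n^2$ times $3/(ct)$; multiplying by $n^2$ proves the second-moment bound with constant $3$, and Cauchy--Schwarz gives the first-moment claim. The $\sigma_k$ factor appears automatically because the centering is built into $\tilde{\mathbf{h}}_k$ from the start, so no separate transient analysis or two-point function is needed. Your case-split for $ct<1$ is likewise unnecessary, since $1/t\le 1/(ct)$ already.
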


Refer to section C in the Supplementary Material for the technical proof. \Cref{thm:expected-gap} shows that the expected error in absolute deviation decreases at a rate of \( \mathcal{O}(1/\smash{\sqrt{ct}}) \), similar to the convergence rate for the expectation $\mathbb{E}[\mathbf{R}(t)]$, but with a square root dependence. In addition, \Cref{thm:expected-gap} can be combined with Markov's inequality to derive high-probability bounds on the ranking error. As will be shown in the next section, this is a key component in the convergence analysis of \textsc{GoTrim}.

Building upon the previous analysis, we derive another gossip algorithm called \textit{Baseline++}, an improved variant of the one proposed by \citeauthor{chiuso2011gossip} for decentralized rank estimation \cite{chiuso2011gossip}. It is described at length in section A of the Supplementary Material. The algorithm's steps and propagation closely resemble those of \textsc{GoRank}. However, instead of estimating the means \eqref{eq:ranks} based on pairwise comparisons directly, \textit{Baseline++} aims to minimize, for each node \( k\in[n] \), a specific ranking loss function, namely the Kendall $\tau$ distance \(\phi_k(\mathbf{X}, \mathbf{R}) = \sum_{l = 1}^n \mathbb{I}_{ \left\{\left(X_k - X_l\right) \cdot \left(R_k - R_l\right) < 0\right\}} \), counting the number of discordant pairs among $((R_k,X_k),\; (R_l,X_l))$ with $l=1,\; \ldots,\; n$. While \textsc{GoRank} provides a quick approximation of the ranks, especially in poorly connected graphs,  \textit{Baseline++}, despite being slower at first, may ultimately achieve a lower overall error as it minimizes a discrete loss function. However, \textsc{GoRank} has a practical advantage: it does not require an initial ranking, which may not always be available in real-world settings. Moreover, it comes with convergence rate guarantees that apply to any graph topology. In contrast, analyzing the convergence of \textit{Baseline++} on arbitrary communication graphs is challenging and is left for future work. 
\subsection{Numerical Experiments}

\paragraph{Setup.} We conduct experiments on a dataset \( S = \{1, \dots, n\} \) with \( n=500\), distributed across nodes of a communication graph. Our evaluation metric is the normalized absolute error between estimated and true ranks, \ie for node \( k \) at iteration \( t \), the error is defined as \(\ell_k(t) = |R_k(t) - r_k|/n \). While more sophisticated ranking metrics such as Kendall \( \tau \) exist, they are not relevant in our context, as we focus on individual rank estimation accuracy. We first examine the impact of the constants appearing in our theoretical bounds (see Theorem 1), the one related to graph connectivity and the one reflecting rank centrality. To this end, we consider three graph topologies: the complete graph (\(c = 4.01 \times 10^{-3}\)), in which every node is directly connected to all others, yielding maximal connectivity; the two-dimensional grid (\(c = 1.65 \times 10^{-5}\)), where each node connects to its four immediate neighbors; and the Watts–Strogatz network (\(c = 3.31 \times 10^{-4}\)), a randomized graph with average degree \(k = 4\) and rewiring probability \(p = 0.2\), offering intermediate connectivity between the complete and grid graphs. Then, we compare \textsc{GoRank} with the two other ranking algorithms \textit{Baseline} and \textit{Baseline++}. Figure (a) was generated on a Watts-Strogatz graph, averaged over \(1e3\) trials. Figure (b) and (c) show the convergence of ranking algorithms with mean and standard deviation computed from \(100\) trials: figure (b) shows the convergence of \textsc{GoRank} on different graph topologies; figure (c) compares the convergence of the ranking algorithms on a 2D Grid graph. All experiments were run on a single CPU with 32 GB of memory for \(8e4\) iterations, with a total execution time of approximately two hours. The code for our experiments is publicly available.\footnote{The code is available at \href{https://github.com/anna-vanelst/robust-gossip}{github.com/anna-vanelst/robust-gossip}.}

% \vspace{-5mm}

% \vspace{-5mm}

\begin{figure}[htbp]
    \centering
    % First subfigure
    \begin{subfigure}[b]{0.335\textwidth}
        \centering
        \includegraphics[width=1\textwidth]{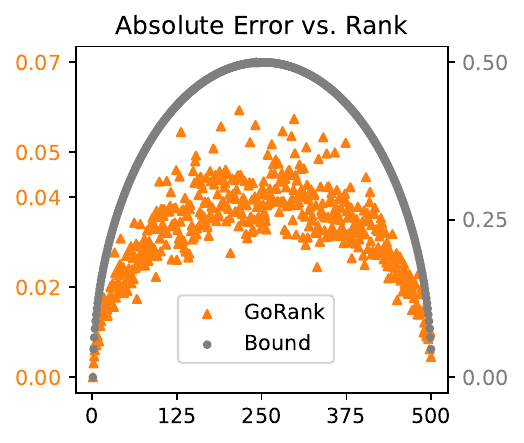}
        \caption{Role of \(\phi\)}
        \label{fig:sub1}
    \end{subfigure}
    \hfill
    % Second subfigure
    \begin{subfigure}[b]{0.329\textwidth}
        \centering
        \includegraphics[width=0.95\textwidth]{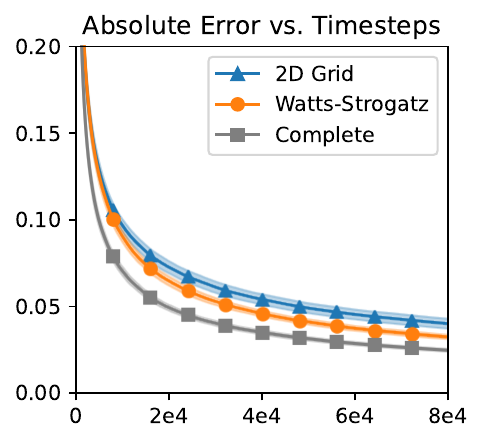}
        \caption{Role of \(c\)}
        \label{fig:sub2}
    \end{subfigure}
    \hfill
    % Third subfigure
    \begin{subfigure}[b]{0.322\textwidth}
        \centering
        \includegraphics[width=0.95\textwidth]{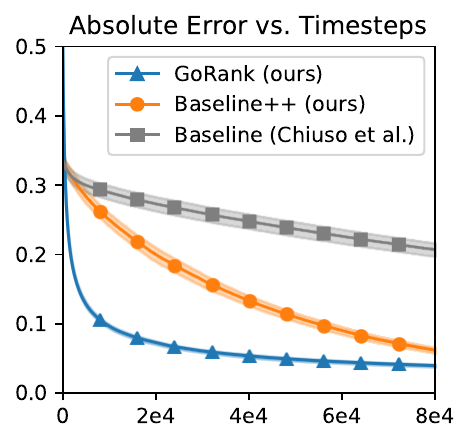}
        \caption{\textsc{GoRank} vs. Baseline}
        \label{fig:sub3}
    \end{subfigure}
   \caption{Illustration of the behavior of \textsc{GoRank}:  
(a) shows how the absolute error of the rank estimates of \textsc{GoRank} aligns with the shape of the function \(\phi\), and highlights the role of the constant \(\sigma_k\) for $k \in [n]$ in the error bound;  
(b) compares the convergence rate of \textsc{GoRank} across different graph topologies (\ie levels of connectivity), illustrating the influence of the constant \(c\) in the bound;  
(c) compares \textsc{GoRank} with existing method (\textit{Baseline}) and an alternative ranking algorithm developed in this work (\textit{Baseline++}).}
    \label{fig:main}
\end{figure}

\paragraph{Results.} Together, the figures illustrate key properties of \textsc{GoRank}. Figure (a) empirically confirms that extreme ranks are easier to estimate than those in the middle---consistent with the shape of the theoretical bound. Figure (b) highlights the impact of graph topology: as connectivity decreases, convergence slows, in line with the theoretical bounds. Finally, Figure (c) compares the different methods. \textsc{GoRank} and \textit{Baseline++} provide fast rank approximations even for large, poorly connected graphs (\eg a 2D grid graph). \textit{Baseline++} appears to converge faster in the end, as it directly optimizes a discrete loss. \textit{Baseline}, on the other hand, converges more slowly throughout, which aligns with its dependence on the return time (\ie the expected time for a random walk to revisit its starting node), a quantity known to be large for poorly connected graphs. More extensive experiments and discussion, confirming the results above, can be found in the Supplementary Material.

\section{\textsc{GoTrim} -- A Gossip Algorithm for Trimmed Means Estimation}

In this section, we present \textsc{GoTrim}, a gossip algorithm for trimmed means estimation. We establish a convergence in \( \mathcal{O}(1/{t}) \) with constants depending on the network and data distribution.

\subsection{Algorithm -- Convergence Analysis}

\begin{algorithm}[htbp]
\caption{GoTrim: a synchronous gossip algorithm for estimating \(\alpha\)-trimmed means.}
\label{alg:gotrim}
\begin{algorithmic}[1]
\STATE \textbf{Input:} Trimming level $\alpha\in (0,1/2)$, function \(w_{n, \alpha}\) defined in \ref{eq:fn-average} and choice of ranking algorithm \texttt{rank} (\eg GoRank).
\STATE \textbf{Init:} For all node \(k\), set \(Z_k \leftarrow 0\), \(W_k \leftarrow 0\) and \(R_k \leftarrow \texttt{rank}.\texttt{init}(k)\).
\FOR{\(s=1, 2, \ldots\)}
\FOR{\(k=1, \ldots, n\)}
\STATE Update rank: \(R_k \leftarrow \texttt{rank.update}(k, s)\).
\STATE Set \(W^{\prime}_k \leftarrow w_{n, \alpha}(R_k)\).
\STATE Set \(Z_k \leftarrow Z_k + (W^{\prime}_k - W_k) \cdot X_k\).
\STATE Set \(W_k \leftarrow W^{\prime}_k\).
\ENDFOR
\STATE Draw \((i, j) \in E\) uniformly at random.
\STATE Set \(Z_i, Z_j \leftarrow (Z_i + Z_j)/2\).
\STATE Swap auxiliary variables: \texttt{swap(i, j)}
\ENDFOR
\STATE \textbf{Output:} Estimate of trimmed mean \(Z_k\).
\end{algorithmic}
\end{algorithm}

We introduce \textsc{GoTrim}, a gossip algorithm to estimate trimmed mean statistics (see \cref{alg:gotrim}), which dynamically computes a weighted average using current estimated ranks. Notably, \textsc{GoTrim} can be paired with any ranking algorithm, including \textsc{GoRank} consequently.  Let \( Z_k(t) \) and \( W_k(t) \) denote the local estimates of the statistic and weight at node \( k \) and iteration \( t \). First, by \cref{eq:fn-average}, the \(\alpha\)-trimmed mean can be computed via standard gossip averaging: for all \( k \in [n] \), \( Z_k(t) = (1/n) \sum_{l=1}^n W_l(t) \cdot X_l \), where \( W_l(t) = w_{n, \alpha}(R_l(t)) \) with \( w_{n, \alpha}(\cdot) \) defined in Section 2. Secondly, since ranks \( R_k(t) \) vary over iterations, the algorithm dynamically adjusts to correct past errors: at each step, it compensates by injecting \( \left(W_k(t) - W_k(t-1)\right) \cdot X_k \) into the averaging process. On the one hand, we have an averaging operation which is captured by the averaging matrix: \(\mathbf{W}_2(t) = \mathbf{I}_n - (e_i - e_j)(e_i - e_j)^\top/2\). Similarly to the permutation matrix  (see the previous section), the expectation of the averaging matrix is symmetric, doubly stochastic and has spectral gap that satisfies \(0 < c_2 <  1\) and is given by \(c_2 = c/2\) \cite{shah2009gossip, boyd2006randomized}. On the other hand, we have a non-linear operation that depends on the estimated ranks: at each iteration \( t > 0 \), each node $k$ is updated as \( Z_k(t) = Z_k(t-1) + \delta_k(t) \cdot X_k \), where \( \delta_k(t) = W_k(t) - W_k(t-1) \). Hence, the evolution of the estimates can be expressed as  
\( \mathbf{Z}(t) = \mathbf{W}_2(t) \left(\mathbf{Z}(t-1) + \boldsymbol{\delta}(t) \odot \boldsymbol{X}\right),\) where \( \mathbf{Z}(t) = (Z_1(t), \ldots, Z_n(t)) \) and \( \boldsymbol{\delta}(t) = (\delta_1(t), \ldots, \delta_n(t)) \).  
Taking the expectation over the sampling process, the expected estimates are given by:  \(\mathbb{E}[\mathbf{Z}(t)] = \mathbf{W}_2 \left(\mathbb{E}[\mathbf{Z}(t-1)] + \Delta \boldsymbol w(t) \odot \boldsymbol{X}\right) \) with \( \mathbf{W}_2 = \mathbf{I}_n - (1/2|E|) \mathbf{L}\) and \( \Delta \boldsymbol w(t) = \mathbb{E}[\boldsymbol{\delta}(t)] \).  For \( t = 1 \), since \( Z_k(0) = 0 \), we have  \( \mathbb{E}[\boldsymbol{Z}(1)] = \mathbf{W}_2 \Delta \boldsymbol w(1) \odot \boldsymbol{X} \).  
Recursively, for any \( t > 0 \),  
\begin{equation*}
    \mathbb{E}[\boldsymbol{Z}(t)] = \sum_{s=1}^{t} \mathbf{W}_2^{t+1-s} \Delta \boldsymbol w(s) \odot \boldsymbol{X}.
\end{equation*}
We first state a lemma that claims that \( W_k(t) \) converges in expectation to \( w_{n,\alpha}(r_k) \).
 
\begin{lemma}[Convergence in Expectation of \(W_k(t)\)]
\label{lem:conv-weight}
Let \( \mathbf{R}(t) \) and \( \mathbf{W}(t) \) be defined as in \cref{alg:gorank} and \cref{alg:gotrim}, respectively. For all \( k \in [n] \) and \( t > 0 \), we have:
\begin{equation*}
\left|\mathbb{E}[W_k(t)] - w_{n,\alpha}(r_k)\right| \leq \mathcal{O}\left(\frac{1}{\gamma_k^2 {c t}}\right) \cdot \sigma_k^2,
\end{equation*}
where \( \gamma_k = \min(\left|r_k - a\right|, \left|r_k - b\right|) \geq 1/2\) with \(a = \lfloor \alpha n \rfloor +1/2\) and  \(b = n-\lfloor \alpha n \rfloor +1/2\) being the endpoints of interval  \(I_{n, \alpha}\). The constants \( c \) and \( \sigma_k \) are those defined in Theorem 2.  
\end{lemma}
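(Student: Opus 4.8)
The plan is to exploit the fact that the weight function is a \emph{scaled indicator}, $w_{n,\alpha}(r) = (n/(n-2m))\,\mathbb{I}_{\{r\in I_{n,\alpha}\}}$, so that $W_k(t) = w_{n,\alpha}(R_k(t))$ can differ from the target $w_{n,\alpha}(r_k)$ only when the estimate $R_k(t)$ and the true rank $r_k$ fall on opposite sides of the inclusion interval $I_{n,\alpha} = [a,b]$. Writing $c_0 = n/(n-2m)$ and taking expectation over the sampling process, $\mathbb{E}[W_k(t)] = c_0\,\mathbb{P}(R_k(t)\in I_{n,\alpha})$, whereas $w_{n,\alpha}(r_k)\in\{0,c_0\}$ according to whether $r_k\in I_{n,\alpha}$. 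In either case this gives
\begin{equation*}
\bigl|\mathbb{E}[W_k(t)] - w_{n,\alpha}(r_k)\bigr| = c_0\cdot\mathbb{P}\bigl(\mathcal{B}_k(t)\bigr),
\end{equation*}
where $\mathcal{B}_k(t)$ denotes the \emph{misclassification} event that $R_k(t)$ lies on the side of $I_{n,\alpha}$ opposite to $r_k$. The whole argument then reduces to bounding this single probability.

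Next I would convert the misclassification event into a deviation (tail) event for the rank estimate, which is where the margin $\gamma_k$ enters. By a short case analysis on the position of $r_k$ relative to the endpoints $a = \lfloor\alpha n\rfloor + 1/2$ and $b = n-\lfloor\alpha n\rfloor + 1/2$, one checks that whenever $R_k(t)$ crosses to the wrong side of the nearest endpoint, the estimate must have moved by at least the distance from $r_k$ to that endpoint; since $\gamma_k = \min(|r_k-a|,|r_k-b|)$ is the smaller of the two endpoint distances, this yields the inclusion $\mathcal{B}_k(t)\subseteq\{\,|R_k(t)-r_k|\geq\gamma_k\,\}$. The lower bound $\gamma_k\geq 1/2$ follows immediately because the true ranks are integers while $a$ and $b$ are half-integers (cf. Remark~\ref{rk:ties}), so an integer can never sit exactly on an endpoint.

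It then remains to apply a second-moment (Chebyshev-type) tail bound. Using Markov's inequality on $|R_k(t)-r_k|^2$ together with the second-moment estimate of \cref{thm:expected-gap}, I obtain
\begin{equation*}
\mathbb{P}\bigl(\mathcal{B}_k(t)\bigr) \leq \mathbb{P}\bigl(|R_k(t)-r_k|^2\geq\gamma_k^2\bigr) \leq \frac{\mathbb{E}\bigl[|R_k(t)-r_k|^2\bigr]}{\gamma_k^2} \leq \frac{1}{\gamma_k^2}\,\mathcal{O}\!\left(\frac{1}{ct}\right)\sigma_k^2.
\end{equation*}
Multiplying by the constant $c_0$, which depends only on $n$ and $\alpha$ and is absorbed into the $\mathcal{O}(\cdot)$, produces exactly the claimed bound $\mathcal{O}(1/(\gamma_k^2 ct))\cdot\sigma_k^2$.

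The main obstacle is the discontinuity of $w_{n,\alpha}$: unlike a Lipschitz weight, an indicator cannot be controlled by the convergence of $\mathbb{E}[R_k(t)]$ alone, so the proof must pass through the probability of a thresholding error rather than through $|\mathbb{E}[R_k(t)]-r_k|$. The crux is therefore the geometric reduction $\mathcal{B}_k(t)\subseteq\{|R_k(t)-r_k|\geq\gamma_k\}$, together with the careful bookkeeping of the endpoint cases and of the half-integer margin needed to make it exact; once that inclusion is in place, the tail bound inherited from \cref{thm:expected-gap} finishes the argument routinely.
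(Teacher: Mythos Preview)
Your proposal is correct and follows essentially the same route as the paper: both express $\mathbb{E}[W_k(t)]-w_{n,\alpha}(r_k)$ as a scaled probability of the event that $R_k(t)$ and $r_k$ disagree on membership in $I_{n,\alpha}$, reduce that event to $\{|R_k(t)-r_k|\geq\gamma_k\}$ via a case analysis on the position of $r_k$ relative to $a$ and $b$, and finish with Markov's inequality applied to the second-moment bound of \cref{thm:expected-gap}. Your framing via the misclassification event $\mathcal{B}_k(t)$ is a clean way to state what the paper does through its three-case argument; the paper additionally makes the constant explicit, bounding $c_0\leq 1/(1-2\alpha)$ rather than hiding it in the $\mathcal{O}(\cdot)$.
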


Further details and the complete proof are provided in Section D of the Supplementary Material. Lemma 1 establishes that, for each node \( k \), the estimates of the weight \( W_k \) converge in expectation to the true weight \( w_{n,\alpha}(r_k) \) at a rate of \( \mathcal{O}\left(1/\gamma_k^2 {ct}\right) \). In addition, this lemma suggests that points closer to the interval endpoints are subject to larger errors, as reflected in the constant \( \gamma_k \), which is consistent with the intuition that these points require greater precision in estimating ranks.

Having established the convergence of the weights \( W_k(t) \), we now focus on the convergence of the estimates \( Z_k(t) \). The following theorem demonstrates the convergence in expectation of \textsc{GoTrim} when paired with the \textsc{GoRank} ranking algorithm.

\begin{theorem}[Convergence in Expectation of \textsc{GoTrim}]
\label{thm:gotrim}
Let \( \mathbf{Z}(t) \) be defined as in \cref{alg:gotrim}, and assume the ranking algorithm is  \cref{alg:gorank}. 
Then, for any \( t > T^* =  \min \left\{ t > 1 \,\middle|\, c t > 2\log(t) \right\} \),
\[
\left \|\mathbb{E}[\boldsymbol{Z}(t)] - \bar x_{\alpha} \mathbf{1}_n \right \| 
\leq  \mathcal{O}\left(\frac{1}{{c^2t}}\right) \|\mathbf{K} \odot \mathbf{X}\|,
\]
where \( \mathbf{K} = ( \sigma_1^2/\gamma_1^2 ,\; \ldots,\; \sigma_n^2/\gamma_n^2  ) \) and \( c \), \( \sigma_k \), \( \gamma_k \) are the constants defined in \cref{lem:conv-weight}.
\end{theorem}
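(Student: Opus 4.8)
The plan is to start from the closed-form expectation $\mathbb{E}[\boldsymbol{Z}(t)] = \sum_{s=1}^{t}\mathbf{W}_2^{t+1-s}(\Delta\boldsymbol{w}(s)\odot\boldsymbol{X})$ derived above and to split each power of the averaging operator through its consensus projector. Writing $\mathbf{J} := (1/n)\mathbf{1}_n\mathbf{1}_n^\top$, and using that $\mathbf{W}_2$ is symmetric and doubly stochastic on a connected non-bipartite graph, we have $\mathbf{W}_2\mathbf{J} = \mathbf{J}\mathbf{W}_2 = \mathbf{J}^2 = \mathbf{J}$, hence $\mathbf{W}_2^{m}-\mathbf{J} = (\mathbf{W}_2-\mathbf{J})^{m}$ and $\|\mathbf{W}_2^{m}-\mathbf{J}\| \le (1-c_2)^{m}$ with $c_2 = c/2$ the spectral gap of $\mathbf{W}_2$. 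Substituting $\mathbf{W}_2^{t+1-s} = \mathbf{J} + (\mathbf{W}_2^{t+1-s}-\mathbf{J})$ splits $\mathbb{E}[\boldsymbol{Z}(t)]$ into a \emph{consensus} term and a \emph{disagreement} term, which I bound separately.

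For the consensus term, $\mathbf{J}(\Delta\boldsymbol{w}(s)\odot\boldsymbol{X}) = (1/n)\mathbf{1}_n\sum_{k}\Delta w_k(s)X_k$, and summing over $s$ the increments telescope: since $W_k(0)=0$, $\sum_{s=1}^{t}\Delta w_k(s) = \mathbb{E}[W_k(t)]$. Thus the consensus term equals $(1/n)\mathbf{1}_n\sum_k X_k\,\mathbb{E}[W_k(t)]$, whereas the target reads $\bar x_\alpha\mathbf{1}_n = (1/n)\mathbf{1}_n\sum_k X_k\,w_{n,\alpha}(r_k)$ by \cref{eq:fn-average}. Their difference is $(1/n)\mathbf{1}_n\sum_k X_k\big(\mathbb{E}[W_k(t)]-w_{n,\alpha}(r_k)\big)$, whose norm equals $n^{-1/2}$ times the absolute scalar sum. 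Bounding each summand with \cref{lem:conv-weight} and using $\|\cdot\|_1\le\sqrt n\,\|\cdot\|$ gives a contribution of order $\mathcal{O}(1/(ct))\,\|\mathbf{K}\odot\mathbf{X}\|$, already within the claimed rate since $c<1$.

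The disagreement term is the crux. I bound $\big\|\sum_{s}(\mathbf{W}_2^{t+1-s}-\mathbf{J})(\Delta\boldsymbol{w}(s)\odot\boldsymbol{X})\big\| \le \sum_{s}(1-c_2)^{t+1-s}\|\Delta\boldsymbol{w}(s)\odot\boldsymbol{X}\|$. Writing $\Delta w_k(s)=(\mathbb{E}[W_k(s)]-w_{n,\alpha}(r_k))-(\mathbb{E}[W_k(s-1)]-w_{n,\alpha}(r_k))$ and applying \cref{lem:conv-weight} at both $s$ and $s-1$ (with the $s=1$ term vanishing, since $R_k(1)=1$ makes $W_k(1)=w_{n,\alpha}(1)=0$ under mild trimming) yields $|\Delta w_k(s)|\le\mathcal{O}(\sigma_k^2/(\gamma_k^2 cs))$, so that $\|\Delta\boldsymbol{w}(s)\odot\boldsymbol{X}\| \le \mathcal{O}(1/(cs))\,\|\mathbf{K}\odot\mathbf{X}\|$ with $\mathbf{K}=(\sigma_1^2/\gamma_1^2,\dots,\sigma_n^2/\gamma_n^2)$. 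Everything then reduces to the scalar convolution $\Sigma_t := \sum_{s=1}^{t}(1-c_2)^{t+1-s}/s$. Splitting it into a recent block (indices $s$ near $t$, where $1/s\approx 1/t$ and the geometric weights sum to $\mathcal{O}(1/c_2)=\mathcal{O}(1/c)$, giving $\mathcal{O}(1/(ct))$) and a distant tail (large powers $t+1-s$), the threshold $t>T^*=\min\{t>1: ct>2\log t\}$ is precisely what guarantees $(1-c_2)^{\Theta(t)}\lesssim 1/t$, so the tail is dominated and $\Sigma_t=\mathcal{O}(1/(ct))$. Multiplying by the $\mathcal{O}(1/c)$ prefactor from $\|\Delta\boldsymbol{w}(s)\odot\boldsymbol{X}\|$ produces $\mathcal{O}(1/(c^2 t))\,\|\mathbf{K}\odot\mathbf{X}\|$, which dominates the consensus contribution and gives the stated bound.

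I expect the main obstacle to be the careful control of $\Sigma_t$ and the exact placement of the split point that yields the threshold $T^*$: one must show that the geometric memory of $\mathbf{W}_2$ effectively forgets the early, less accurate weight estimates so that the error concentrates around $s\approx t$, and verify that the two factors of $1/c$ (one from the per-iteration weight-increment bound, one from the geometric summation) combine into the $1/c^2$ dependence without a hidden logarithmic loss. A secondary care point is tracking the per-coordinate \cref{lem:conv-weight} bounds so that the squared factors $(\sigma_k^2/\gamma_k^2)^2$ aggregate under the square root into exactly $\|\mathbf{K}\odot\mathbf{X}\|$.
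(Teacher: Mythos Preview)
Your proposal is correct and follows essentially the same route as the paper's own proof: the same splitting $\mathbf{W}_2^{t+1-s}=\mathbf{J}+\tilde{\mathbf{W}}_2^{t+1-s}$, the same telescoping of the consensus part to $(1/n)\mathbf{1}_n\sum_k X_k\,\mathbb{E}[W_k(t)]$ controlled via \cref{lem:conv-weight}, and the same treatment of the disagreement part by bounding $|\Delta w_k(s)|$ with \cref{lem:conv-weight} at $s$ and $s-1$ and then splitting the resulting geometric-harmonic convolution at $T=t-\log(t)/c_2$, which is exactly where the condition $ct>2\log t$ arises. The only cosmetic difference is that you dispose of the $s=1$ term by observing $R_k(1)=1$ forces $W_k(1)=0$ under $m\ge 1$, whereas the paper simply bounds $\|\mathbf{S}(1)\|\le\|\mathbf{C}\odot\mathbf{X}\|$; both work.
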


See Section D of the Supplementary Material for the detailed proof. Theorem 3 shows that, for each node \( k \), the estimate of the trimmed mean \( Z_k \) converges in expectation to the true trimmed mean \( \bar{x}_\alpha \) at a rate of \( \mathcal{O}\left(1/{c^2t}\right) \). While the presence of the \( c^2 \) term may appear pessimistic, empirical evidence suggests that the actual convergence may be faster in practice. The vector \( \mathbf{K} \) acts as a rank-dependent mask over \( \mathbf{X} \), modulating the contribution of each data point \( X_k \) to the error bound. Specifically:  (1) extreme values are more heavily penalized by the mask, as they come with better rank estimation;  (2) values with ranks near the trimming interval endpoints are amplified, as small inaccuracies in rank estimation can lead to disproportionately larger errors.

\subsection{Robustness Analysis - Breakdown Points}

Consider a dataset \( S \) of size \( n\geq 1 \) and \( T_n=T_n(S) \) a real-valued statistic based on it. The \textit{breakdown point} for the statistic \( T_n(S) \) is defined as \( \varepsilon^* = p_{\infty}/n \), where \( p_{\infty} = \min \{ p \in \mathbb{N^*}:\,  \sup _{S'_{p}} |T_n(S) - T_n(S'_{p})| = \infty \} \), the supremum being taken over all corrupted datasets \( S'_{p} \) obtained by replacing \( p \) samples in \( S \) by arbitrary samples. The breakdown point is a popular notion of robustness \cite{huber2011robust}, corresponding here to the fraction of samples that need to be corrupted to make the statistic $T_n$ arbitrarily large (\ie "break down"). For example, the breakdown point of the \( \alpha \)-trimmed mean is given by \( \lfloor \alpha n \rfloor / n \), which is approximately \( \alpha \). We consider here a generalization of this notion, namely the \textit{\( \tau \)-breakdown point} by replacing \( p_{\infty} \) with \( p_{\tau} = \min \{ p \in \mathbb{N^*} \enspace \text{, } \sup _{S'_{p}} |T_n(S) - T_n(S'_{p})| \geq \tau \} \),
where \( \tau > 0 \) is a threshold parameter. Since our algorithm does not compute the exact \( \alpha \)-trimmed mean, we focus instead on determining the \( \tau \)-breakdown point \( \varepsilon^*_k(t) \) of the \textit{partial \( \alpha \)-trimmed mean} at iteration \( t \) for each node \( k \), \ie the estimate \( Z_k(t) \). Given that this quantity was previously shown to converge to the \( \alpha \)-trimmed mean, we expect that \( \varepsilon^*_k(t) \leq \alpha \). The following theorem provides framing bounds for the breakdown point of the estimates of \textsc{GoTrim} when paired with \textsc{GoRank}.

\begin{theorem}
\label{thm:breakdown}
Let \( \tau > 0 \) and \( \delta, \alpha \in (0,1) \). With probability at least \( 1 - \delta \), the \( \tau \)-breakdown point \( \varepsilon^*_k(t) \) of the \textit{partial \( \alpha \)-trimmed mean} at iteration \( t > T \) for any node \( k \) satisfies
\[
\frac{1}{n}\max\left( \left\lfloor \lfloor \alpha n \rfloor + \frac{1}{2} - \frac{K(\delta)}{\sqrt{t - T}} \right\rfloor, 0 \right) \leq \varepsilon^*_k(t) \leq \frac{\lfloor \alpha n \rfloor}{n} \enspace ,
\]
where \( K(\delta) = \mathcal{O}( 1/ \delta) \) is a constant and  \( T = \mathcal{O}(\log (1/\tau \delta))\) represents the time allowed for the mean to propagate.
\end{theorem}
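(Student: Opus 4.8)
The plan is to prove the two inequalities separately, reducing everything to a single binary question: for a data point whose value is pushed to \(+\infty\), does its \emph{estimated} rank \(R_k(t)\) land inside the inclusion interval \([a,b]=[\lfloor\alpha n\rfloor+1/2,\; n-\lfloor\alpha n\rfloor+1/2]\)? If it does, that point receives weight \(n/(n-2m)\) and drives \(Z_k(t)\) to infinity; if every corrupted point is trimmed (weight \(0\)), then no infinite value enters the weighted average and the deviation of \(Z_k(t)\) is controlled entirely by the averaging dynamics. The upper bound is the easy constructive direction; the lower bound is where the ranking guarantees of \textsc{GoRank} do the work.

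For the \emph{lower bound} I would first combine \cref{thm:expected-gap} with Markov's inequality to obtain, with probability at least \(1-\delta\), a rank-accuracy event on which \(|R_k(t)-r_k|\le K(\delta)/\sqrt{ct}\) for the relevant nodes, where \(K(\delta)=\mathcal{O}(1/\delta)\) follows directly from the \(\mathcal{O}(\sigma_k/\sqrt{ct})\) expected-deviation bound (a union bound over the corrupted nodes only affects the hidden \(n\)-dependence). If an adversary corrupts \(p\) points, the worst case places them with values tending to \(+\infty\) at the top of the order, so the smallest corrupted true rank is \(n-p+1\). On the accuracy event its estimated rank is at least \((n-p+1)-K(\delta)/\sqrt{ct}\), which exceeds the endpoint \(b=n-\lfloor\alpha n\rfloor+1/2\) precisely when \(p<\lfloor\alpha n\rfloor+1/2-K(\delta)/\sqrt{ct}\). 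Under this condition all corrupted points are trimmed, so no infinite value reaches the average and the only residual perturbation of \(Z_k(t)\) is the relabelling of \emph{honest} ranks. To close the gap to \(\tau\) I would invoke the geometric mixing of the averaging matrix \(\mathbf{W}_2\) (spectral gap \(c_2=c/2\)): after a propagation time \(T=\mathcal{O}(\log(1/\tau\delta))\) the estimate lies within \(\tau\) of the uncorrupted partial mean, and shifting the clock past this phase replaces \(\sqrt{ct}\) by \(\sqrt{t-T}\), yielding the stated threshold \(\lfloor\lfloor\alpha n\rfloor+1/2-K(\delta)/\sqrt{t-T}\rfloor\).

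For the \emph{upper bound} \(\varepsilon^*_k(t)\le\lfloor\alpha n\rfloor/n\) it suffices to exhibit one corruption of \(m=\lfloor\alpha n\rfloor\) points forcing \(|Z_k(t)-\bar x_\alpha|\ge\tau\) for all \(t>T\). I would corrupt the \(m\) smallest observations and send them to large values: either finite-time estimation error lets the boundary corrupted point (true rank \(n-m+1\), only \(1/2\) above \(b\)) slip into \([a,b]\), so that pushing its value to \(+\infty\) makes the deviation exceed any \(\tau\); or, when all \(m\) are correctly trimmed, the inclusion window slides by \(m\) positions across the honest order statistics, and once the average has propagated for \(t>T\) this induced shift of \(Z_k(t)\) already surpasses \(\tau\). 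Either way \(p_\tau\le m\), giving the claimed bound.

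The main obstacle is the lower bound, specifically turning the in-expectation guarantee of \cref{thm:expected-gap} into a \emph{simultaneous} high-probability exclusion of all corrupted nodes, and doing so for the corrupted configuration rather than the clean one. The delicate point is that a corrupted auxiliary observation equal to \(+\infty\) always loses its comparison, \(\mathbb{I}_{\{X_k>Y_k\}}=0\), as it random-walks through the network, which perturbs the honest nodes' rank estimates; one must verify that \cref{thm:expected-gap} still applies to this perturbed ranking instance, which it does since its proof uses only the doubly-stochastic mixing of \(\mathbf{W}_1\) and the boundedness of the comparison indicators. The remaining bookkeeping—cleanly separating the averaging propagation time \(T\) from the ranking time to produce the \(\sqrt{t-T}\) dependence and the \(T=\mathcal{O}(\log(1/\tau\delta))\) threshold—is routine but must be carried out carefully to match the stated constants.
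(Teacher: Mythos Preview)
Your lower-bound approach matches the paper's exactly: the paper proves it via two lemmas, one for the ``instant'' breakdown of the weighted average $\frac{1}{n}\sum_k w_{n,\alpha}(R_k(t))X_k$ using Markov on the rank-deviation bound of \cref{thm:expected-gap} (yielding $K(\delta)=\mathcal{O}(1/\delta)$), and one for the propagation delay from the geometric contraction of $\mathbf{W}_2$ (yielding $T=\mathcal{O}(\log(1/\tau\delta))$ with $\varepsilon=\tau/\max_i|X_i|$), then combines them by a union bound with $\delta_1=(1-1/n)\delta$ and $\delta_2=\delta/n$. Your identification of the two error sources, the use of Markov on the ranking error, and the ``clock shift'' producing the $\sqrt{t-T}$ dependence are all the same ingredients.

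One caveat on your upper-bound argument: case (b), where the shifted inclusion window alone forces deviation $\geq\tau$, fails for $\tau$ large relative to the spread of the honest data (e.g.\ for the median of $\{1,2,3\}$, a single corruption moves it by at most $1$, so $p_\tau>1=m$ whenever $\tau>1$). The paper does not actually prove the upper bound either---it is asserted as inherited from the exact trimmed mean's breakdown---so this is less a divergence from the paper than a shared loose end; but your case-(b) justification as written does not go through for arbitrary $\tau$.
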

See Section E of the Supplementary Material for the technical proof, which relies on the idea that, when estimating a partial \( \alpha \)-trimmed mean, there are two sources of error: (1) the uncertainty in rank estimation, which can lead to incorrect data points being included in the mean, and (2) the delay from the gossip averaging, which requires a certain propagation time \( T \) to update the network estimates. Note that, as \( t \rightarrow \infty \), \cref{thm:breakdown} recovers the breakdown point of the exact \( \alpha \)-trimmed mean.

\subsection{Numerical Experiments}

\paragraph{Setup.} The experimental setup is identical to that of the previous section, with the key difference being the introduction of corrupted data. Specifically, the dataset \( S \) is contaminated by replacing a fraction \( \varepsilon=0.1\) of the values with outliers. We consider two types of corruption, each affecting \( \lfloor \varepsilon n \rfloor \) randomly selected data points: (a) scaling, where a value \( x \) is changed to \( sx \), and (b) shifting, where \( x \) becomes \( x + s \). While this is a relatively simple form of corruption, it is sufficient to \textit{break down} the classical mean. We measure performance using the absolute error between the estimated and true trimmed mean. For node \( k \) at iteration \( t \), the error is given by \( \ell(t) = (1/n)\sum_k |Z_k(t) - \bar{x}_{\alpha}| \).  Experiments (a) and (b) are run on dataset $S$, corrupted with scaling \( s = 10 \), using a Watts-Strogatz and a 2D grid graph, respectively. Experiment (c) uses the Basel Luftklima dataset, corrupted with shift \( s = 100 \). This dataset includes temperature measurements from $n=105$ sensors across Basel. A graph with connectivity $c=4.7\times 10^{-4}$ is constructed by connecting sensors within 1 km of each other. The code and dataset for our experiments is publicly available.\footnote{The code and Basel Luktklima dataset are available at \href{https://github.com/anna-vanelst/robust-gossip}{github.com/anna-vanelst/robust-gossip}.}

\begin{figure}[htbp]
    \centering
    % First subfigure
    \begin{subfigure}[b]{0.32\textwidth}
        \centering
        \includegraphics[width=0.98\textwidth]{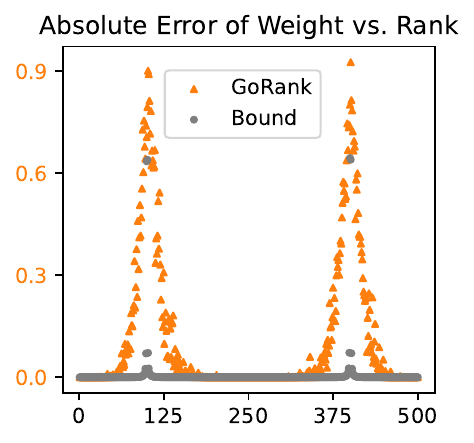}
        \caption{Role of \(\sigma_k^2/\gamma_k^2\)}
        \label{fig:gotrim_1}
    \end{subfigure}
    \hfill
    % Second subfigure
    \begin{subfigure}[b]{0.327\textwidth}
        \centering
        \includegraphics[width=0.95\textwidth]{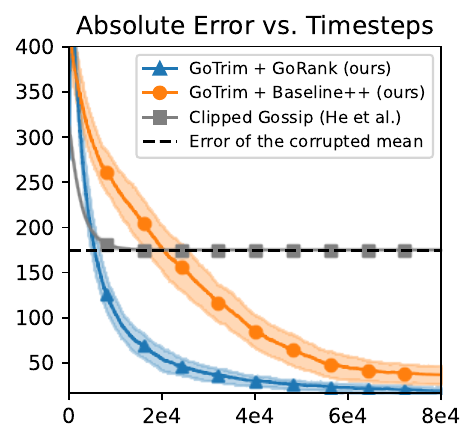}
        \caption{Simulated Dataset}
        \label{fig:gotrim_2}
    \end{subfigure}
    \hfill
    % Third subfigure
    \begin{subfigure}[b]{0.331\textwidth}
        \centering
        \includegraphics[width=0.95\textwidth]{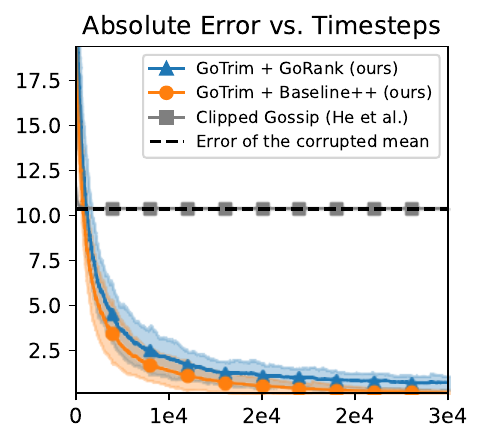}
        \caption{Basel Luftklima Dataset}
        \label{fig:gotrim_3}
    \end{subfigure}
   \caption{Convergence behavior of \textsc{GoTrim} in combination with different ranking algorithms. Figure (a) illustrates how the constant in the bound reflects the error of the weight estimate of \textsc{GoTrim}. Figures (b) and (c) demonstrate that, for \( \alpha = 0.2 \) and \( \varepsilon = 1 \), \textsc{GoTrim} quickly improves on the naive corrupted mean and converges to the trimmed mean.}

    \label{fig:gotrim}
\end{figure}

\paragraph{Results.} Figure (a) empirically confirms that the uncertainty in the weight estimates is highest near the boundaries of the interval, consistent with our theoretical bound. Figures (b) and (c) show that \textsc{GoTrim}, when combined with \textsc{GoRank} or even alternative ranking methods, quickly approximates the trimmed mean and significantly improves over the corrupted mean (indicated by the black dashed line). Overall, \textsc{GoTrim} quickly outperforms the naive mean under corruption and \textit{ClippedGossip} which will ultimately converge to the corrupted mean. Additional experiments and implementation details are provided in the Supplementary Material. 

% \section{Conclusion and Future Work }

 % Although our analysis focuses on the synchronous setting, real-world systems are often asynchronous. We present the asynchronous version of \textsc{GoRank} in the Supplementary Material; however, similar to \textit{Baseline++}, its convergence analysis is left for future work. 

\section{Conclusion and Discussion}

We introduced and analyzed two novel gossip algorithms: \textsc{GoRank} for rank estimation and \textsc{GoTrim} for trimmed mean estimation. We proved convergence rates of $\mathcal{O}(1/t)$ and established robustness guarantees for \textsc{GoTrim} through breakdown point analysis. Empirical results show both methods perform well on large, poorly connected networks: \textsc{GoRank} quickly estimates ranks, and \textsc{GoTrim} is robust to outliers and improves on the naive mean. 
\vspace{-2mm}
\paragraph{Byzantine Robustness.}
In this work, we focused on robustness to data contamination in the sense of Huber’s framework. Extending these results to the more adversarial setting of Byzantine robustness remains an interesting direction. Developing a rigorous theoretical foundation for this setting is still an open problem, and we plan to address it in future work.
\vspace{-2mm}
\paragraph{Asynchronous Extension.} Although our analysis focuses on the synchronous setting, real-world systems are often asynchronous. We present the asynchronous version of \textsc{GoRank} in Appendix H. The theoretical analysis in the asynchronous setting is carried out in an extension to this work \cite{van2025asynchronous}.
\vspace{-2mm}
\paragraph{Scalability.} To demonstrate the scalability of our method on large networks, we repeated the experiments from Fig.~(c) in Sections~3.2 and~4.3, originally conducted with $n=500$, on larger networks with $n = 1000$ and $n = 5000$. The detailed results are provided in Appendix I. 
\vspace{-2mm}
\paragraph{Robustness to Network Disruptions.} While robustness to data contamination is important, the robustness of our proposed algorithms to network disruptions (e.g., edge or node failures, network partitioning) is equally crucial in real-world applications. In Appendix J, we provide a detailed analysis of how our current framework can be extended.
\vspace{-2mm}
\paragraph{Performance on Sparse Graphs.} An interesting question is whether our algorithms perform well on sparse graphs. In practice, however, performance depends more on the graph’s connectivity than on its sparsity. To illustrate this, we present experiments in Appendix K on sparse graphs with varying levels of connectivity.
\vspace{-2mm}
\paragraph{Rank-based Statistics.} \textsc{GoTrim} naturally extends to the decentralized estimation of rank-based statistics. This includes rank statistics \citep{HAJEK1999}, which are key tools in data analysis—particularly for robust hypothesis testing—as well as L-statistics (such as the Winsorized mean).
\vspace{-2mm}
\paragraph{Further Discussion.} In appendix L, we provide extended discussion on several topics, including the optimality of the bounds, faster gossip algorithms, extension to multivariate data, and potential applications like robust decentralized optimization.

\section*{Acknowledgments}

This research was supported by the PEPR IA Foundry and Hi!Paris ANR Cluster IA France 2030 grants. The authors thank the program for its funding and support. 

\bibliography{neurips_2025}
\newpage
\appendix

\appendix

\setcounter{theorem}{0}  % Reset theorem counter
\setcounter{lemma}{0}   % Reset remark counter
\setcounter{algorithm}{0}
\setcounter{algorithm}{2}

\section*{Outline of the Supplementary Material}
\vspace{-2mm}
The Supplementary Material is organized as follows. Section A introduces two alternative gossip algorithms for distributed ranking: \textit{Baseline} and \textit{Baseline++}. In Section B, we present auxiliary results related to gossip matrices and the convergence analysis of the standard gossip algorithm for averaging. The detailed convergence analysis of \textsc{GoRank}, including the proofs of Theorems 1 and 2, is provided in Section C. Section D presents the convergence proofs for \textsc{GoTrim}, specifically Lemma 1 and Theorem 3. In Section E, we provide the robustness analysis of \textsc{GoTrim} (\ie the proof of Theorem 4). The convergence analysis of \textit{ClippedGossip} is covered in Section F. Section G includes additional experiments and implementation details. Section H introduces Asynchronous \textsc{GoRank}, along with corresponding experimental results. Section I details experiments on larger networks. Section J describes how the current framework can be extended to include network disruptions. Section K provides experiments on sparse networks. Finally, Section L offers further discussion and outlines future work.
\vspace{-2mm}
\section{Gossip Algorithms for Ranking}

Here, we detail two gossip algorithms for ranking, which can be considered natural competitors to \textsc{GoRank}. The algorithm presented first was proposed in \cite{chiuso2011gossip} and, to our knowledge, is the only decentralized ranking algorithm documented in the literature. The algorithm presented next can be seen as a variant of the latter, incorporating the more efficient communication scheme of \textsc{GoRank}.

\subsection{Baseline: Algorithm from Chiuso et al.}

\citeauthor{chiuso2011gossip} propose a gossip algorithm for distributed ranking in a general (connected) network \cite{chiuso2011gossip}. They demonstrate that this algorithm solves the ranking problem almost surely in finite time. However, they do not provide any convergence rate or non-asymptotic convergence results. The algorithm is outlined in \cref{alg:ranking-chiuso} and proceeds as follows. At each time step \( t \), an edge \( (i, j) \) is selected, and the algorithm operates in three phases:
\begin{enumerate}
    \item \textbf{Ranking:} The nodes check if both their local and auxiliary ranks are consistent with the corresponding local and auxiliary observations. If the ranks are inconsistent, the nodes exchange their local and auxiliary rank.
    \item \textbf{Propagation:} The nodes swap all their auxiliary variables.
    \item \textbf{Local update:} Each of the two nodes verifies if the auxiliary node has the same ID. If so, it updates its local rank estimate based on the auxiliary rank estimate.
\end{enumerate}

\begin{algorithm}[ht]
\caption{Algorithm from Chiuso et al. (Baseline)}
\label{alg:ranking-chiuso}
\begin{algorithmic}[1]
\STATE \textbf{Require:} Each node with id $I_k = k$ holds observation $X_k$.
\STATE \textbf{Init:} Each node $k$ initializes its ranking estimate $R_k \leftarrow k$ and its auxiliary variables $R_k^v \leftarrow R_k, X_k^v \leftarrow X_k$ and $I_k^v \leftarrow I_k$.
\FOR{$t=1, 2, \ldots$}
\STATE Draw $(i, j)$ uniformly at random from $E$.
\IF{$(X_i - X_j)\cdot (R_i-R_j) <0$} 
\STATE Swap rankings of nodes $i$ and $j$: $R_i \leftrightarrow R_j$.  
\ENDIF
\IF{$(X_i^v - X_j^v)\cdot (R_i^v-R_j^v) <0$}
\STATE Swap rankings of nodes $i$ and $j$: $R_i^v \leftrightarrow R_j^v$.  
\ENDIF
\STATE Swap auxiliary variables of nodes $i$ and $j$: $I_i^v \leftrightarrow I_j^v, R_i^v \leftrightarrow R_j^v$ and $X_i^v \leftrightarrow X_j^v$.  
\FOR{$p \in \{i, j\}$}
\IF{$I_p^v = I_p$}
\STATE Update local ranking estimate: $R_p \leftarrow R_p^v$.
\ENDIF
\ENDFOR
\ENDFOR
\STATE \textbf{Output:} Each node contains the estimate of the ranking. 
\end{algorithmic}
\end{algorithm}

\subsection{Baseline++ - Our Improved Variant Proposal}

The algorithm selects an edge \( (i, j) \) at each step, and the corresponding nodes check if the auxiliary ranks are consistent with their auxiliary observations. If the ordering is inconsistent, the nodes swap their auxiliary ranks. Then, each node updates its local rank if the ordering of its local observation is inconsistent with the auxiliary observation. In contrast, the algorithm proposed by \citeauthor{chiuso2011gossip} only updates the local estimates when the wandering estimate returns to its originating node. This design can significantly slow down convergence in graphs with low connectivity and long return times. 

\begin{algorithm}[H]
    \caption{Baseline++}
    \label{alg:improved-baseline}
    \begin{algorithmic}[1]
    \STATE \textbf{Init:} For all \(k\), set \(R_k \leftarrow k\), \(R_k^{\prime} \leftarrow k\) and \(X_k^{\prime} \leftarrow X_k\).
    \FOR{\(t=1, 2, \ldots\)}
        \STATE Draw \((i, j) \in E\) uniformly at random.
        \IF{\((X_i^{\prime} - X_j') \cdot (R_i^{\prime} - R_j') < 0\)}
            \STATE Swap rankings: \(R_i^{\prime} \leftrightarrow R_j'\).
        \ENDIF
        \FOR{\(p \in \{i, j\}\)}
            \IF{\((X_p^{\prime} - X_p) \cdot (R_p^{\prime} - R_p) < 0\) \OR \(X_p^{\prime} = X_p\)}
                \STATE Update local rank: \(R_p  \gets R_p'\).
            \ENDIF
        \ENDFOR
        \STATE Swap: \(R_i^{\prime} \leftrightarrow R_j^{\prime}\) and \(X_i^{\prime} \leftrightarrow X_j^{\prime}\).
    \ENDFOR
    \STATE \textbf{Output:} Estimate of ranks \(R_k\).
    \end{algorithmic}
\end{algorithm}

\section{Auxiliary Results}

In this section, we present key properties of the transition matrices that will be essential for the proofs of our main theorems. We also present the standard gossip algorithm for mean estimation, along with its convergence results.

\subsection{Properties of Gossip Matrices}

\begin{lemma*}[B.1]
\label{lem:properties}
Assume the graph $\mathcal{G }=(V, E)$ is connected and non-bipartite. Let \( t > 0 \).  If at iteration \( t \), edge \( (i, j) \) is selected with probability \( p = \frac{1}{|E|} \), then the transition matrices are given by  
\begin{align}
    \mathbf{W}_1(t) = \mathbf{I}_n - \left( \mathbf{e}_i - \mathbf{e}_j \right) \left( \mathbf{e}_i - \mathbf{e}_j \right)^{\top}, & \enspace 
 \text{(swapping matrix)} \\ 
    \mathbf{W}_2(t) = \mathbf{I}_n - \frac{1}{2} \left( \mathbf{e}_i - \mathbf{e}_j \right) \left( \mathbf{e}_i - \mathbf{e}_j \right)^{\top}, & \enspace \text{(averaging matrix)} 
\end{align}
For \(\alpha \in \{1, 2\}\), denote $ \mathbf{W}_{\alpha}(t) = \mathbf{I}_n - \frac{1}{\alpha}\left( \mathbf{e}_i - \mathbf{e}_j \right) \left( \mathbf{e}_i - \mathbf{e}_j \right)^{\top}$.
The following properties hold:
\begin{enumerate}[label=(\alph*)]
    \item The matrices are symmetric and doubly stochastic, meaning that  
\[
\mathbf{W}_\alpha(t) \mathbf{1}_n = \mathbf{1}_n, \quad \mathbf{1}_n^\top \mathbf{W}_\alpha(t) = \mathbf{1}_n^\top.
\]

    \item The matrices satisfy the following equalities:  
\[
\mathbf{W}_2(t)^2 = \mathbf{W}_2(t), \quad \mathbf{W}_1(t)^2 = \mathbf{I}_n.
\]

    \item For \(\alpha \in \{1, 2\}\), we have  
\begin{equation}
\mathbf{W}_\alpha = \mathbb{E}[\mathbf{W}_\alpha(t)] = \mathbf{I}_n - \frac{1}{\alpha |E|} \mathbf{L}.
\end{equation}
\item The matrix \(\mathbf{W}_\alpha\) is also doubly stochastic and it follows that \(\mathbf{1}_n\) is an eigenvector with eigenvalue 1. 

\item The matrix $\tilde{\mathbf{W}}_\alpha \triangleq \mathbf{W}_\alpha - \frac{\mathbf{1}_n \mathbf{1}_n^\top}{n}$ satisfies, by construction, \(\tilde{\mathbf{W}} \mathbf{1}_n = 0\), and it can be shown that $\| \tilde{\mathbf{W}}_\alpha \|_{\operatorname{op}} \leq \lambda_2(\alpha)$,
where \(\lambda_2(\alpha)\) is the second largest eigenvalue of \(\mathbf{W}_\alpha\) and $\|\cdot\|_{\operatorname{op}}$ denotes the operator norm of a matrix.

\item The eigenvalue \(\lambda_2(\alpha)\) satisfies $0 \leq \lambda_2(\alpha) < 1$ and $\lambda_2(\alpha) = 1 - \frac{\lambda_2}{\alpha|E|}$ where $\lambda_2$  the spectral gap (or second smallest eigenvalue) of the Laplacian.

\end{enumerate}    
\end{lemma*}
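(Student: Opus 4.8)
The plan is to dispatch the six properties in order, reducing nearly everything to two elementary facts about the vector $\mathbf{v} := \mathbf{e}_i - \mathbf{e}_j$, namely $\mathbf{v}^\top \mathbf{1}_n = 0$ and $\mathbf{v}^\top \mathbf{v} = 2$, together with the incidence decomposition of the Laplacian and the spectral theorem applied to $\mathbf{L}$. Throughout I write $\mathbf{W}_\alpha(t) = \mathbf{I}_n - \tfrac{1}{\alpha}\mathbf{v}\mathbf{v}^\top$.

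For (a) and (b) I would argue directly from this rank-one form. Symmetry is immediate since $\mathbf{v}\mathbf{v}^\top$ is symmetric; from $\mathbf{v}^\top\mathbf{1}_n = 0$ we obtain $\mathbf{W}_\alpha(t)\mathbf{1}_n = \mathbf{1}_n$, and transposing (using symmetry) gives the corresponding row identity. Genuine double stochasticity also needs nonnegativity, which I would check by inspecting the only four entries that change: for $\alpha = 1$ these are $\{0,1\}$ (the update is the transposition swapping $i$ and $j$) and for $\alpha = 2$ they equal $1/2$. For (b), I would expand $\mathbf{W}_\alpha(t)^2 = \mathbf{I}_n - \tfrac{2}{\alpha}\mathbf{v}\mathbf{v}^\top + \tfrac{1}{\alpha^2}\mathbf{v}(\mathbf{v}^\top\mathbf{v})\mathbf{v}^\top$ and substitute $\mathbf{v}^\top\mathbf{v} = 2$; the cases $\alpha = 1$ and $\alpha = 2$ then collapse to $\mathbf{I}_n$ and $\mathbf{W}_2(t)$ respectively.

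For (c) and (d), the crux is the identity $\sum_{(i,j)\in E}(\mathbf{e}_i-\mathbf{e}_j)(\mathbf{e}_i-\mathbf{e}_j)^\top = \mathbf{L}$: each edge contributes $+1$ to the $(i,i)$ and $(j,j)$ diagonal entries and $-1$ to the $(i,j)$ and $(j,i)$ entries, so the sum recovers $\mathbf{D}-\mathbf{A}$. Averaging $\mathbf{W}_\alpha(t)$ over the uniform edge choice (probability $1/|E|$) then gives $\mathbf{W}_\alpha = \mathbf{I}_n - \tfrac{1}{\alpha|E|}\mathbf{L}$. Double stochasticity of $\mathbf{W}_\alpha$ follows either by convexity (it is an average of doubly stochastic matrices) or directly from $\mathbf{L}\mathbf{1}_n = 0$ with symmetry, which simultaneously exhibits $\mathbf{1}_n$ as an eigenvector of eigenvalue $1$.

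The main work, and the step I expect to be the genuine obstacle, is (e)--(f), where the graph hypotheses enter. Here I would invoke the spectral theorem for the symmetric positive semidefinite matrix $\mathbf{L}$, writing its eigenvalues as $0 = \mu_1 \le \mu_2 \le \dots \le \mu_n$ with an orthonormal eigenbasis whose bottom vector is proportional to $\mathbf{1}_n$; connectivity gives $\mu_2 > 0$. Since $\mathbf{W}_\alpha$ is a polynomial in $\mathbf{L}$ it shares this eigenbasis with eigenvalues $1 - \mu_k/(\alpha|E|)$, so $\tilde{\mathbf{W}}_\alpha$ is precisely $\mathbf{W}_\alpha$ with its eigenvalue-$1$ component (the projection $\mathbf{1}_n\mathbf{1}_n^\top/n$) removed, leaving the eigenvalues $\{1 - \mu_k/(\alpha|E|) : k \ge 2\}$; hence $\tilde{\mathbf{W}}_\alpha\mathbf{1}_n = 0$ and $\|\tilde{\mathbf{W}}_\alpha\|_{\operatorname{op}} = \max(\lambda_2(\alpha), |\lambda_n(\alpha)|)$. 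The delicate point is bounding the most negative eigenvalue $\lambda_n(\alpha) = 1 - \mu_n/(\alpha|E|)$ -- which can indeed be negative in the swapping case $\alpha = 1$ -- by $\lambda_2(\alpha)$ in absolute value. I would settle this with the trace identity $\sum_{k\ge 2}\mu_k = \operatorname{tr}(\mathbf{L}) = 2|E|$, which yields $\mu_2 + \mu_n \le 2|E| \le 2\alpha|E|$ and therefore $\lambda_2(\alpha) + \lambda_n(\alpha) \ge 0$, i.e. $\lambda_n(\alpha) \ge -\lambda_2(\alpha)$; non-bipartiteness (with connectivity) then forces $\mu_n < 2\alpha|E|$, so $\lambda_n(\alpha) > -1$ and the periodic $-1$ mode is excluded, while for $\alpha = 2$ this is automatic since $\mathbf{W}_2(t)$ is a projection and hence $\mathbf{W}_2$ is positive semidefinite. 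Combining these gives $\|\tilde{\mathbf{W}}_\alpha\|_{\operatorname{op}} \le \lambda_2(\alpha)$, proving (e). Part (f) is then read off from the ordering of eigenvalues: $\lambda_2(\alpha) = 1 - \lambda_2/(\alpha|E|)$, with $\lambda_2(\alpha) < 1$ from $\mu_2 > 0$ and $\lambda_2(\alpha) \ge 0$ from $\mu_2 \le \alpha|E|$, the latter following from $2\mu_2 \le \mu_2 + \mu_n \le 2|E|$.
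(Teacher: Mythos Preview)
Your proof is correct and considerably more complete than the paper's own, which simply omits (a), (b), (d), (e) ``for brevity'' with a pointer to \cite{boyd2006randomized, colin2015extending}, sketches (c) exactly as you do via $\mathbf{L}=\sum_{(i,j)\in E}(\mathbf{e}_i-\mathbf{e}_j)(\mathbf{e}_i-\mathbf{e}_j)^\top$, and defers (f) to \cite{colin2015extending}. So for (c) the two arguments coincide, while for the remaining parts you supply what the paper leaves out. Your trace-identity argument for (e)--(f), bounding the most negative eigenvalue of $\mathbf{W}_1$ via $\mu_2+\mu_n\le\operatorname{tr}(\mathbf{L})=2|E|$, is a clean and fully elementary route that avoids appealing to external lemmas; this is a genuine plus over the paper's deferral.

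One small imprecision worth flagging: the step ``non-bipartiteness (with connectivity) then forces $\mu_n<2\alpha|E|$'' is not quite the right attribution. For connected graphs with $n\ge 3$ the trace identity already gives $\mu_n\le 2|E|-\mu_2<2|E|$, so bipartiteness is irrelevant at that point; the classical role of non-bipartiteness in killing the $-1$ eigenvalue pertains to the random-walk matrix $\mathbf{D}^{-1}\mathbf{A}$, not to $\mathbf{I}_n-\mathbf{L}/|E|$. In the present lemma the assumption ``connected and non-bipartite'' effectively serves only to exclude $K_2$ (the unique connected graph on $n\le 2$ nodes with an edge, for which $\mu_2=2>|E|=1$ and (f) would fail). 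Since connected $+$ non-bipartite does imply $n\ge 3$, your conclusions stand; only the explanation of \emph{where} the hypothesis enters could be sharpened.
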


\begin{proof}
The proofs of (a), (b), (d), and (e) are omitted for brevity. For more details, we refer the reader to \cite{boyd2006randomized, colin2015extending}. The proof of (c) follows from the fact that 
\[ \mathbb{E}[\mathbf{W}_\alpha(t)] = \frac{1}{|E|} \sum_{(i, j) \in E} \left( \mathbf{I}_n - \frac{1}{\alpha} \left( \mathbf{e}_i - \mathbf{e}_j \right) \left( \mathbf{e}_i - \mathbf{e}_j \right)^{\top} \right) \]
and from the definition of the Laplacian matrix \( \mathbf{L} = \sum_{(i, j) \in E} \left( \mathbf{e}_i - \mathbf{e}_j \right) \left( \mathbf{e}_i - \mathbf{e}_j \right)^{\top} \). \\
The proof of (f) can be found in \cite{colin2015extending} (see Lemmas 1, 2 and 3).
\end{proof}

\subsection{Convergence Analysis of Gossip for Averaging}\label{subsec:gossip_av}

In the following section, we present classical results on the gossip algorithm for averaging.

\begin{algorithm}[ht]
\caption{Gossip algorithm for estimating the standard average \cite{boyd2006randomized}}
\label{alg:standard-avg}
\begin{algorithmic}[1]
\STATE Each node $k$ initializes its estimate as $Z_k = X_k$.
\FOR{$t=1, 2, \ldots$}
\STATE Randomly select an edge \( (i,j) \) from the network.
\STATE Update estimates: $Z_i, Z_j \leftarrow \frac{Z_i + Z_j}{2}$.
\ENDFOR
\end{algorithmic}
\end{algorithm}

\begin{lemma*}[B.2]
Let us assume that $\mathcal{G} = (V, E)$ is connected and non bipartite. Then, for $\mathbf{Z}(t)$ defined in \cref{alg:standard-avg}, we have that for all $k \in[n]$ :
$$
\lim _{t \rightarrow+\infty} \mathbb{E}\left[\mathbf{Z}_k(t)\right]=\bar{X}_n
$$
Moreover, for any $t>0$,
$$
\left\|\mathbb{E}[\mathbf{Z}(t)]-\bar{X}_n \mathbf{1}_n\right\| \leq e^{-c_2 t}\left\|\mathbf{X}-\bar{X}_n \mathbf{1}_n\right\|
$$
where $c_2 = 1-\lambda_2(2) > 0$, with $\lambda_2(2) = 1 - \frac{\lambda_2}{2 |E|}.$
\end{lemma*}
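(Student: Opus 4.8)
The plan is to reduce the random averaging dynamics to a single deterministic matrix recursion and then exploit the spectral properties already collected in Lemma B.1. First I would observe that one iteration of \cref{alg:standard-avg} is exactly left-multiplication by the averaging matrix: when edge $(i,j)$ is drawn, replacing $Z_i,Z_j$ by $(Z_i+Z_j)/2$ and leaving all other coordinates fixed is precisely the map $\mathbf{v}\mapsto \mathbf{W}_2(t)\mathbf{v}$ with $\mathbf{W}_2(t)=\mathbf{I}_n-\tfrac12(\mathbf{e}_i-\mathbf{e}_j)(\mathbf{e}_i-\mathbf{e}_j)^\top$. Hence $\mathbf{Z}(t)=\mathbf{W}_2(t)\mathbf{Z}(t-1)$ and, unrolling from $\mathbf{Z}(0)=\mathbf{X}$, we get $\mathbf{Z}(t)=\mathbf{W}_2(t)\mathbf{W}_2(t-1)\cdots\mathbf{W}_2(1)\,\mathbf{X}$.

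Next I would take expectations. Because the edges are drawn i.i.d.\ uniformly at each step, the matrices $\mathbf{W}_2(1),\ldots,\mathbf{W}_2(t)$ are mutually independent, so the expectation of the product factorizes and $\mathbb{E}[\mathbf{Z}(t)]=\mathbf{W}_2^{\,t}\mathbf{X}$, where $\mathbf{W}_2=\mathbb{E}[\mathbf{W}_2(t)]=\mathbf{I}_n-\tfrac{1}{2|E|}\mathbf{L}$ by Lemma B.1(c). I would then center around the average, writing $\bar X_n\mathbf{1}_n=\tfrac{\mathbf{1}_n\mathbf{1}_n^\top}{n}\mathbf{X}$. Since $\mathbf{W}_2\mathbf{1}_n=\mathbf{1}_n$ and $\mathbf{W}_2$ is symmetric (Lemma B.1(a),(d)), the projector $P=\tfrac{\mathbf{1}_n\mathbf{1}_n^\top}{n}$ commutes with $\mathbf{W}_2$ and satisfies $\mathbf{W}_2P=P\mathbf{W}_2=P$; an easy induction then gives $\tilde{\mathbf{W}}_2^{\,t}=\mathbf{W}_2^{\,t}-P$ for $\tilde{\mathbf{W}}_2=\mathbf{W}_2-P$. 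Applying both sides to $\mathbf{X}$ and using $P\mathbf{X}=\bar X_n\mathbf{1}_n$ yields the clean identity
\[
\mathbb{E}[\mathbf{Z}(t)]-\bar X_n\mathbf{1}_n=\mathbf{W}_2^{\,t}\mathbf{X}-\bar X_n\mathbf{1}_n=\tilde{\mathbf{W}}_2^{\,t}\bigl(\mathbf{X}-\bar X_n\mathbf{1}_n\bigr).
\]

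Finally I would bound the norm. Submultiplicativity of the operator norm together with Lemma B.1(e) gives $\|\tilde{\mathbf{W}}_2^{\,t}\|_{\operatorname{op}}\le\|\tilde{\mathbf{W}}_2\|_{\operatorname{op}}^{\,t}\le\lambda_2(2)^t$, so $\|\mathbb{E}[\mathbf{Z}(t)]-\bar X_n\mathbf{1}_n\|\le\lambda_2(2)^t\,\|\mathbf{X}-\bar X_n\mathbf{1}_n\|$. Writing $\lambda_2(2)=1-\tfrac{\lambda_2}{2|E|}=1-c_2$ (Lemma B.1(f)) and using the elementary inequality $1-x\le e^{-x}$ converts this to the stated $e^{-c_2 t}$ bound. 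The limit claim is then immediate: connectedness and non-bipartiteness force $\lambda_2>0$, hence $c_2>0$ and $e^{-c_2 t}\to0$, so each coordinate of $\mathbb{E}[\mathbf{Z}(t)]$ tends to $\bar X_n$. The only genuinely probabilistic point—and the step I would treat most carefully—is the factorization $\mathbb{E}[\prod_s\mathbf{W}_2(s)]=\mathbf{W}_2^{\,t}$, which hinges on the independence of the per-iteration edge draws; everything else is linear algebra supplied by Lemma B.1.
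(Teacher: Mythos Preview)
Your proposal is correct and follows essentially the same route as the paper's proof: derive $\mathbb{E}[\mathbf{Z}(t)]=\mathbf{W}_2^{\,t}\mathbf{X}$, subtract the rank-one projector $P=\tfrac{\mathbf{1}_n\mathbf{1}_n^\top}{n}$ to obtain $\tilde{\mathbf{W}}_2^{\,t}$ acting on the centered vector, and bound by $\lambda_2(2)^t$ via Lemma~B.1(e)--(f). You are merely more explicit than the paper about two points it leaves implicit---the i.i.d.\ factorization of the expected product and the elementary inequality $(1-c_2)^t\le e^{-c_2 t}$---but the argument is the same.
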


\begin{proof}
The proof is kept brief, as this is a standard result. For more details, we refer the reader to \cite{boyd2006randomized, colin2015extending}.
To prove this lemma, we will primarily use points (e) and (f) from \cref{lem:properties}. 

Let $t>0$. Recursively, we have $\mathbb{E}[\mathbf{Z}(t)] =  \mathbf{W}_2^t\mathbf{X}$ where $\mathbf W_2 = \mathbf I_n-\frac{1}{2|E|} \mathbf L$. 

Denote $\tilde{\mathbf{W}}_2\triangleq \mathbf{W}_2 - \frac{\mathbf{1}_n \mathbf{1}_n^\top}{n}$. 
Noticing that $\frac{\mathbf{1}_n \mathbf{1}_n^\top}{n} \mathbf X = \bar{X}_n \mathbf{1}_n$ and that  $\tilde{\mathbf{W}}_2^t = \mathbf{W}_2^t - \frac{\mathbf{1}_n \mathbf{1}_n^\top}{n}$, we have
$$\left\|\mathbb{E}[\mathbf{Z}(t)]-\bar{X}_n \mathbf{1}_n\right\| = \left\| \tilde{\mathbf{W}}_2^t \mathbf X\right\|.$$
Since $\mathbf{1}_n^\top \tilde{\mathbf{W}}_2 = 0$, it follows that 
$$\left\|\mathbb{E}[\mathbf{Z}(t)]-\bar{X}_n \mathbf{1}_n\right\| \leq \left(\lambda_2(2)\right)^t\left\|\mathbf{X}-\bar{X}_n \mathbf{1}_n\right\|.$$
Setting $c_2 = 1-\lambda_2(2) > 0$ finishes the proof, as  $0 < \lambda_2(2) < 1$.
\end{proof}

\begin{lemma*}[B.3]
Let us assume that $\mathcal{G} = (V, E)$ is connected and non bipartite.
Then, for $\mathbf{Z}(t)$ defined in \cref{alg:standard-avg}, we have that for any $t>0$ :
$$\mathbb{P}(\| \mathbf{Z}(t) - \bar{X}_n \mathbf{1}_n\| \geq \varepsilon \|\mathbf X -\bar{X}_n \mathbf{1}_n \| ) \leq \frac{e^{-c_2t}}{\varepsilon^2}.$$
where $c_2 = 1-\lambda_2(2) > 0$, with $\lambda_2(2) = 1 - \frac{\lambda_2}{2 |E|}.$
\end{lemma*}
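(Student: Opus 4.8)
The plan is to reduce the high-probability statement to a second-moment bound via Markov's inequality, and then control the expected squared deviation through a one-step contraction argument. Write $\mathbf{e}(t) = \mathbf{Z}(t) - \bar{X}_n \mathbf{1}_n$, so the event in question is $\{\|\mathbf{e}(t)\| \geq \varepsilon \|\mathbf{e}(0)\|\}$ with $\mathbf{e}(0) = \mathbf{X} - \bar{X}_n\mathbf{1}_n$. Since the averaging update preserves the global mean (each step replaces $Z_i, Z_j$ by their common average), we have $\mathbf{1}_n^\top \mathbf{Z}(t) = \mathbf{1}_n^\top \mathbf{X}$ for all $t$, hence $\mathbf{1}_n^\top \mathbf{e}(t) = 0$; that is, every $\mathbf{e}(t)$ lives in the subspace orthogonal to $\mathbf{1}_n$. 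Applying Markov's inequality to the nonnegative random variable $\|\mathbf{e}(t)\|^2$ gives
\[
\mathbb{P}\big(\|\mathbf{e}(t)\| \geq \varepsilon \|\mathbf{e}(0)\|\big) = \mathbb{P}\big(\|\mathbf{e}(t)\|^2 \geq \varepsilon^2 \|\mathbf{e}(0)\|^2\big) \leq \frac{\mathbb{E}[\|\mathbf{e}(t)\|^2]}{\varepsilon^2 \|\mathbf{e}(0)\|^2},
\]
so it remains to show $\mathbb{E}[\|\mathbf{e}(t)\|^2] \leq e^{-c_2 t}\|\mathbf{e}(0)\|^2$.

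For the second-moment bound, observe that $\mathbf{e}(t) = \mathbf{W}_2(t)\mathbf{e}(t-1)$, where $\mathbf{W}_2(t)$ is the random averaging matrix of the selected edge. Using that $\mathbf{W}_2(t)$ is symmetric and idempotent (property (b), $\mathbf{W}_2(t)^2 = \mathbf{W}_2(t)$), I obtain $\|\mathbf{e}(t)\|^2 = \mathbf{e}(t-1)^\top \mathbf{W}_2(t)^\top \mathbf{W}_2(t) \mathbf{e}(t-1) = \mathbf{e}(t-1)^\top \mathbf{W}_2(t)\mathbf{e}(t-1)$. Conditioning on $\mathbf{e}(t-1)$ and taking expectation over the uniform edge draw yields $\mathbb{E}[\|\mathbf{e}(t)\|^2 \mid \mathbf{e}(t-1)] = \mathbf{e}(t-1)^\top \mathbf{W}_2 \mathbf{e}(t-1)$ with $\mathbf{W}_2 = \mathbb{E}[\mathbf{W}_2(t)]$ from property (c). Because $\mathbf{e}(t-1) \perp \mathbf{1}_n$, the rank-one term in $\tilde{\mathbf{W}}_2 = \mathbf{W}_2 - \mathbf{1}_n\mathbf{1}_n^\top/n$ annihilates it, so $\mathbf{e}(t-1)^\top \mathbf{W}_2 \mathbf{e}(t-1) = \mathbf{e}(t-1)^\top \tilde{\mathbf{W}}_2 \mathbf{e}(t-1) \leq \|\tilde{\mathbf{W}}_2\|_{\operatorname{op}} \|\mathbf{e}(t-1)\|^2 \leq \lambda_2(2)\|\mathbf{e}(t-1)\|^2$ by property (e).

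Taking total expectations and iterating this one-step contraction gives $\mathbb{E}[\|\mathbf{e}(t)\|^2] \leq \lambda_2(2)^t \|\mathbf{e}(0)\|^2$. Finally, with $c_2 = 1 - \lambda_2(2) > 0$ (property (f) guarantees $0 \leq \lambda_2(2) < 1$), the elementary inequality $1 - x \leq e^{-x}$ yields $\lambda_2(2)^t = (1 - c_2)^t \leq e^{-c_2 t}$, hence $\mathbb{E}[\|\mathbf{e}(t)\|^2] \leq e^{-c_2 t}\|\mathbf{e}(0)\|^2$, which combined with the Markov bound above finishes the proof.

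The only genuinely delicate step is the conditional second-moment computation. It is essential to exploit idempotence to collapse $\mathbf{W}_2(t)^\top \mathbf{W}_2(t)$ into a single factor $\mathbf{W}_2(t)$ \emph{before} taking expectation: because $\mathbf{W}_2(t)^2 = \mathbf{W}_2(t)$ holds for each realization, $\mathbb{E}[\mathbf{W}_2(t)^\top\mathbf{W}_2(t)] = \mathbb{E}[\mathbf{W}_2(t)] = \mathbf{W}_2$, whereas $\mathbf{W}_2^2 \neq \mathbf{W}_2$ in general. Conflating these two quantities would destroy the contraction, so this is where the argument must be handled carefully.
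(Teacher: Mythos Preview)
Your proof is correct and follows essentially the same approach as the paper's: define the centered error vector, use idempotence of $\mathbf{W}_2(t)$ to collapse $\mathbf{W}_2(t)^\top\mathbf{W}_2(t)$ before taking expectation, iterate the resulting one-step contraction by $\lambda_2(2)$, and then apply Markov's inequality. If anything, you are slightly more explicit than the paper in justifying why $\mathbf{e}(t-1)\perp\mathbf{1}_n$ is needed for the spectral bound and in invoking $(1-c_2)^t\le e^{-c_2 t}$ at the end.
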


\begin{proof}
The original proof can be found in \cite{boyd2006randomized}.

Let $t>0$ and $\varepsilon>0$. At iteration $t$, we have $\mathbf{Z}(t) =  \mathbf{W}_2(t)\mathbf{Z}(t-1)$. 
Denoting $\mathbf{Y}(t) = \mathbf{Z}(t) - \bar{X}_n \mathbf{1}_n$, it follows that $\mathbf{Y}(t) = \mathbf{W}_2(t)\mathbf{Y}(t-1)$.
Taking the conditional expectation, we get
$$\mathbb{E}[\mathbf{Y}(t)^{\top} \mathbf{Y}(t) \mid \mathbf{Y}(t-1)] = \mathbf{Y}(t-1)^{\top} \mathbb{E}[\mathbf{W}_2(t)^{\top} \mathbf{W}_2(t)] \mathbf{Y}(t-1) \leq \lambda_2(2) \|\mathbf{Y}(t-1)\|^2,$$
as \(\mathbb{E}[\mathbf{W}_2(t)^{\top} \mathbf{W}_2(t)] = \mathbb{E}[ \mathbf{W}_2(t)] = \mathbf{W}_2\).

By repeatedly conditioning, we obtain the bound
$$\mathbb{E}[\mathbf{Y}(t)^{\top} \mathbf{Y}(t)] \leq \lambda_2(2)^t \|\mathbf{Y}(0)\|^2.$$
Finally,
\begin{align}
\mathbb{P}(\| \mathbf{Z}(t) - \bar{X}_n \mathbf{1}_n\| \geq \varepsilon \|\mathbf{Z}(0) -\bar{X}_n \mathbf{1}_n \| ) 
&= \mathbb{P}(\| \mathbf{Z}(t) - \bar{X}_n \mathbf{1}_n\|^2 \geq \varepsilon^2 \|\mathbf{Y}(0)\|^2 ) \\
&\leq \frac{1}{\varepsilon^2 \|\mathbf{Y}(0)\|^2}\mathbb{E}[\mathbf{Y}(t)^{\top} \mathbf{Y}(t)]  \\
&\leq \frac{\lambda_2(2)^t}{\varepsilon^2}  .     
\end{align}
\textbf{Remark.} \textit{Using the Cauchy-Schwarz inequality \( \mathbb{E}[\|\mathbf{Y}\|] \leq \sqrt{\mathbb{E}[\|\mathbf{Y}\|^2]} \), we can also derive  
\[
\mathbb{P}(\| \mathbf{Z}(t) - \bar{X}_n \mathbf{1}_n\| \geq \varepsilon \|\mathbf{Z}(0) -\bar{X}_n \mathbf{1}_n \| ) \leq \frac{\left(\sqrt{\lambda_2(2)}\right)^t}{\varepsilon}.
\]}
Setting $c_2 = 1-\lambda_2(2) > 0$ finishes the proof, as  $0 < \lambda_2(2) < 1$.
\end{proof}

%todo: 

\section{Convergence Analysis of \textsc{GoRank}}

In this section, we will prove Theorem 1 and 2. 

\subsection{Proof of Theorem 1}

\begin{theorem}
Let $\mathbf{R}(t)$ be defined in \textit{GoRank}. We have that for all $k \in[n]$ :
\begin{equation}
\lim _{t \rightarrow+\infty} \mathbb{E}\left[R_{k}(t)\right]=r_k.
\end{equation}
Moreover, for any $t>0$,
\begin{equation}
|\mathbb{E}[R_k(t)]-r_k| \leq \frac{1}{ct} \cdot \sigma_k,
\end{equation} where $c = \frac{\lambda_2}{|E|}$, with $\lambda_{2}$ being the second smallest eigenvalue of the graph Laplacian (spectral gap), and $\sigma_k= n \tilde\sigma_n\left(\frac{r_k-1}{n}\right)$, with \(  \tilde\sigma_n(x) = \sqrt{n x (1-x)}\).
\end{theorem}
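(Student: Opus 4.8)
The plan is to start from the closed form for the expected estimate derived in the main text, namely $\mathbb{E}[R'_k(t)] = \frac{1}{t}\sum_{s=0}^{t-1} \mathbf{h}_k^\top \mathbf{W}_1^s \mathbf{e}_k$ with $\mathbf{W}_1 = \mathbf{I}_n - \frac{1}{|E|}\mathbf{L}$, and to measure its deviation from $r'_k = (r_k-1)/n$ (recalling $R_k = nR'_k + 1$ and $r_k = \mathbf{h}_k^\top\mathbf{1}_n + 1$). The natural device is the centered operator $\tilde{\mathbf{W}}_1 = \mathbf{W}_1 - \frac{\mathbf{1}_n\mathbf{1}_n^\top}{n}$ from Lemma B.1: since $\mathbf{1}_n$ is the top eigenvector, one has $\mathbf{W}_1^s = \tilde{\mathbf{W}}_1^s + \frac{\mathbf{1}_n\mathbf{1}_n^\top}{n}$ for every $s\ge 1$, while $\mathbf{h}_k^\top \frac{\mathbf{1}_n\mathbf{1}_n^\top}{n}\mathbf{e}_k = \frac{1}{n}\mathbf{h}_k^\top\mathbf{1}_n = r'_k$. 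Substituting this identity term by term isolates the stationary part $r'_k$ from the transient part governed by $\tilde{\mathbf{W}}_1$.

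A subtlety I have to watch is the $s=0$ term, where the decomposition fails (there $\mathbf{W}_1^0=\mathbf{I}_n$): since $\mathbf{h}_k^\top\mathbf{e}_k = \mathbb{I}_{\{X_k>X_k\}} = 0$, the difference $\mathbf{h}_k^\top\mathbf{e}_k - r'_k$ leaves a stray $-r'_k$. Collecting everything gives $\mathbb{E}[R'_k(t)] - r'_k = \frac{1}{t}\bigl(-r'_k + \sum_{s=1}^{t-1}\mathbf{h}_k^\top\tilde{\mathbf{W}}_1^s\mathbf{e}_k\bigr)$. To recover the sharp $\sqrt{u(1-u)}$ shape of $\sigma_k$ rather than a crude $\sqrt{r_k-1}$, the key step is to center \emph{both} vectors: because $\tilde{\mathbf{W}}_1\mathbf{1}_n=0$ and $\mathbf{1}_n^\top\tilde{\mathbf{W}}_1=0$, one may replace $\mathbf{h}_k$ by $\tilde{\mathbf{h}}_k=\mathbf{h}_k-r'_k\mathbf{1}_n$ and $\mathbf{e}_k$ by $\tilde{\mathbf{e}}_k=\mathbf{e}_k-\frac{1}{n}\mathbf{1}_n$ without changing the value, so that $|\mathbf{h}_k^\top\tilde{\mathbf{W}}_1^s\mathbf{e}_k| = |\tilde{\mathbf{h}}_k^\top\tilde{\mathbf{W}}_1^s\tilde{\mathbf{e}}_k| \le \|\tilde{\mathbf{h}}_k\|\,\lambda_2(1)^s\,\|\tilde{\mathbf{e}}_k\|$ by Cauchy--Schwarz together with Lemma B.1(e)--(f). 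A direct computation then gives $\|\tilde{\mathbf{h}}_k\| = \sqrt{(r_k-1)(1-(r_k-1)/n)} = \tilde\sigma_n((r_k-1)/n)$ and $\|\tilde{\mathbf{e}}_k\| = \sqrt{(n-1)/n}\le 1$.

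Finally I would sum the geometric series, using $\lambda_2(1)=1-c$ from Lemma B.1(f) so that $\sum_{s\ge 1}\lambda_2(1)^s \le (1-c)/c = 1/c - 1$, and multiply through by $n$ (with $n r'_k = r_k-1$ and $n\|\tilde{\mathbf{h}}_k\| = \sigma_k$) to reach $|\mathbb{E}[R_k(t)]-r_k| \le \frac{1}{t}\bigl((r_k-1) + \sigma_k(1/c-1)\bigr) = \frac{\sigma_k}{ct} - \frac{1}{t}\bigl(\sigma_k - (r_k-1)\bigr)$. The claimed bound $\sigma_k/(ct)$ then follows from the elementary inequality $r_k-1\le\sigma_k$, which, writing $m=r_k-1\in\{0,\dots,n-1\}$ and $\sigma_k=\sqrt{n\,m(n-m)}$, reduces to $m\le n^2/(n+1)$ and hence holds since $m\le n-1$; letting $t\to\infty$ gives the limit $\mathbb{E}[R_k(t)]\to r_k$. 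I expect the main obstacle to be exactly this bookkeeping around the stationary component — extracting the sharp variance factor $\|\tilde{\mathbf{h}}_k\|$ via simultaneous centering and then absorbing the leftover $-r'_k$ (coming from the $s=0$ index) into the main rate through $r_k-1\le\sigma_k$ — rather than the spectral estimate itself, which is routine given Lemma B.1.
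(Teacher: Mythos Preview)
Your proposal is correct and follows essentially the same approach as the paper: subtract the stationary part via $\tilde{\mathbf{W}}_1=\mathbf{W}_1-\frac{\mathbf{1}_n\mathbf{1}_n^\top}{n}$, center $\mathbf{h}_k$, bound each transient term by $\lambda_2(1)^s\|\tilde{\mathbf{h}}_k\|$ using Lemma~B.1(e)--(f), and sum the geometric series. The only difference is that the paper does not single out $s=0$: it works directly with $\mathbf{W}_1^s-\frac{\mathbf{1}_n\mathbf{1}_n^\top}{n}$ (which at $s=0$ is the projector onto $\mathbf{1}_n^\perp$, of operator norm $1$) and bounds $\sum_{s\ge 0}\lambda_2(1)^s\le 1/c$ in one stroke, so your separate bookkeeping of the $-r'_k$ leftover, the double-centering of $\mathbf{e}_k$, and the closing inequality $r_k-1\le\sigma_k$ are all avoidable.
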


\begin{proof}
To prove Theorem 1, we will primarily use points (e) and (f) from \cref{lem:properties}. 

Let $k \in[n]$, $t>0$, and $\bar h_k = \frac{1}{n}\mathbf{h}_k^\top \mathbf{1}_n$. From the previous derivation, one has:
\begin{equation}
\label{eq:transition-ranking}
\mathbb{E}\left[R'_{k}(t)\right] = \frac{1}{t} \sum_{s=0}^{t-1} \mathbf{h}_k^\top \mathbf{W}_1^s \mathbf{e}_k.
\end{equation}
Denote $\tilde{\mathbf{W}}_1\triangleq \mathbf{W}_1 - \frac{\mathbf{1}_n \mathbf{1}_n^\top}{n}$. Noticing that $\mathbf{h}_k^{\top} \frac{\mathbf{1}_{n} \mathbf{1}_{n}^{\top}}{n} \mathbf{e}_{k} = \bar h_k$ and that $\tilde{\mathbf{W}}_1^s = \mathbf{W}_1^s - \frac{\mathbf{1}_n \mathbf{1}_n^\top}{n}$ for $0 \leq s \leq t-1$, we have
\begin{equation}
|\mathbb{E}[R'_k(t)]-\bar h_k| \leq \frac{1}{t} \sum_{s=0}^{t-1}|\mathbf{h}_k^{\top} \tilde{\mathbf{W}}_1^s \mathbf{e}_{k}|.
\end{equation}
Since $\mathbf{1}_n^\top \tilde{\mathbf{W}}_1 = 0$, it follows that 
$$|\mathbf{h}_k^{\top} \tilde{\mathbf{W}}_1^s \mathbf{e}_{k}| \leq  \left(\lambda_{2}(1)\right)^{s}   \cdot\left\|\mathbf{h}_k - \bar h_k \mathbf{1}_{n}\right\|. $$
Thus,
\begin{equation}
|\mathbb{E}[R'_k(t)]-\bar h_k| \leq \frac{1}{t} \sum_{s=0}^{t-1} \left(\lambda_{2}(1)\right)^{s} \left\|\mathbf{h}_k - \bar h_k \mathbf{1}_{n}\right\| \leq \frac{1}{t} \cdot \frac{1}{1-\lambda_{2}(1)} \sqrt{n \bar h_k (1-\bar h_k)}.
\end{equation}
since $\lambda_{2}(1)<1$ and $ \left\| \mathbf{h}_k -\bar h_k \mathbf{1}_{n}\right\| = \sqrt{\sum_i (\mathbb{I}_{\{X_k > X_i\}}-\bar h_k)^2}$.
Using \( 1-\lambda_{2}(1) = \frac{\lambda_2}{|E|}\) where $\lambda_{2}$ is the second smallest eigenvalue of the graph Laplacian, it follows that
\begin{equation}
|\mathbb{E}[R'_k(t)]-\bar h_k| \leq \frac{1}{t}\frac{|E|}{\lambda_{2}} \tilde\sigma_n(\bar h_k),
\end{equation}
where \(  \tilde\sigma_n(x) = \sqrt{n x (1-x)}\).
Plugging \( \mathbb{E}[R_k(t)] = n\mathbb{E}[R'_k(t)]+1\) and \( r_k = n\bar h_k +1 \) finishes the proof:
\begin{equation}
|\mathbb{E}[R_k(t)]-r_k| \leq \frac{1}{t} \cdot \frac{|E|}{\lambda_{2}} \cdot n\tilde\sigma_n\left(\frac{r_k-1}{n}\right).
\end{equation}
\end{proof}

\subsection{Proof of Theorem 2}

We now present a convergence result for the expected gap, which is key to proving the convergence of \textsc{GoTrim}.

\begin{theorem}[Expected Gap]
Let all \( k \in [n] \), and let \( c \) and \( \sigma_k \) be as defined in \cref{thm:exp-gorank}. Then, for all \( t \geq 1 \), we have: $ \mathbb{E}[ \left| R_k(t) - r_k \right|^2] 
    \leq \left( 3/ct \right) \cdot \sigma_k^2\, .$
Consequently,
\begin{equation*}
    \label{eq:rank-abs}
    \mathbb{E}\left[ \left| R_k(t) - r_k \right| \right] 
    \leq \sqrt{\frac{3}{ct}}  \cdot \sigma_k\enspace.
\end{equation*}
\end{theorem}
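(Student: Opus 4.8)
The plan is to reduce everything to the centered running average $R'_k(t) - \bar h_k$, where $\bar h_k = (r_k-1)/n$, using the identities $R_k(t) = nR'_k(t)+1$ and $r_k = n\bar h_k + 1$, so that $\mathbb{E}[|R_k(t)-r_k|^2] = n^2\,\mathbb{E}[(R'_k(t)-\bar h_k)^2]$ and $\sigma_k = n\,\tilde\sigma_n(\bar h_k)$ with $\tilde\sigma_n(x)^2 = nx(1-x) = \|\mathbf h_k - \bar h_k\mathbf 1_n\|^2$. Writing $B_s = \mathbb{I}_{\{X_k > Y_k(s-1)\}}$ and $\tilde B_s = B_s - \bar h_k$, I would expand $\mathbb{E}[(R'_k(t)-\bar h_k)^2] = t^{-2}\sum_{s,s'=1}^t \mathbb{E}[\tilde B_s \tilde B_{s'}]$ and bound the temporal correlations. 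Unlike \cref{thm:exp-gorank}, which needed only the first moment, here the off-diagonal terms carry all the difficulty.

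The central estimate I would establish is that, for $s \le s'$, $|\mathbb{E}[\tilde B_s \tilde B_{s'}]| \le \lambda_2(1)^{\,s'-s}\,\tilde\sigma_n(\bar h_k)$. To obtain it, condition on the network configuration $\mathcal F_{s-1}$ at iteration $s-1$: given which observation sits at each node, the probability that $Y_k(s'-1)$ equals the observation currently held at node $j$ is exactly $[\mathbf W_1^{\,s'-s}]_{kj}$, so that $\mathbb{E}[\tilde B_{s'}\mid\mathcal F_{s-1}] = (\mathbf W_1^{\,s'-s}\, g)_k$, where $g$ is the vector of centered indicators $\mathbb{I}_{\{X_k > \cdot\}} - \bar h_k$ read off in the current configuration. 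The key observation is that $g$ is a permutation of $\mathbf h_k - \bar h_k \mathbf 1_n$, hence $\mathbf 1_n^\top g = 0$ and $\|g\| = \tilde\sigma_n(\bar h_k)$ regardless of the configuration; applying $\|\tilde{\mathbf W}_1\|_{\operatorname{op}} \le \lambda_2(1)$ from Lemma B.1(e) then yields $|\mathbb{E}[\tilde B_{s'}\mid\mathcal F_{s-1}]| \le \lambda_2(1)^{\,s'-s}\,\tilde\sigma_n(\bar h_k)$ uniformly over configurations, and combining with $|\tilde B_s| \le 1$ gives the claimed correlation bound.

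With the per-pair bound in hand the rest is routine: summing the geometric series gives $\sum_{s,s'=1}^t \lambda_2(1)^{|s-s'|} \le t\,(1+\lambda_2(1))/(1-\lambda_2(1)) \le 2t/c$, since $1 - \lambda_2(1) = c$, whence $\mathbb{E}[(R'_k(t)-\bar h_k)^2] \le (2/ct)\,\tilde\sigma_n(\bar h_k)$ and $\mathbb{E}[|R_k(t)-r_k|^2] \le (2/ct)\, n^2 \tilde\sigma_n(\bar h_k)$. To recover the stated $(3/ct)\sigma_k^2 = (3/ct)\, n^2\tilde\sigma_n(\bar h_k)^2$, I would use that $\bar h_k = (r_k-1)/n$ takes values in $\{0, 1/n, \dots, (n-1)/n\}$, so $\tilde\sigma_n(\bar h_k)$ is either $0$ (the case $r_k = 1$, where $R'_k(t) \equiv 0 = \bar h_k$ and both sides vanish) or at least $\sqrt{1 - 1/n} \ge \sqrt{1/2} > 2/3$; in the latter case $2\tilde\sigma_n(\bar h_k) \le 3\tilde\sigma_n(\bar h_k)^2$, which upgrades the single power of $\tilde\sigma_n$ to the square and produces the factor $3$. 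The absolute-deviation bound then follows from Jensen's inequality (equivalently, Cauchy--Schwarz) $\mathbb{E}[|R_k(t)-r_k|] \le \sqrt{\mathbb{E}[|R_k(t)-r_k|^2]}$.

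The main obstacle is the correlation estimate of the second paragraph: the sequence $(B_s)$ is not a function of a Markov chain in any obvious i.i.d.-like way, and a naive conditioning makes $\mathbb{E}[\tilde B_{s'}\mid \mathcal F_{s-1}]$ depend on the entire permutation state. The crux is realizing that this conditional mean is exactly $\tilde{\mathbf W}_1^{\,s'-s}$ applied to a configuration-dependent vector whose norm and orthogonality to $\mathbf 1_n$ are themselves configuration-independent, so that the spectral contraction applies with a constant that does not see the randomness of the configuration.
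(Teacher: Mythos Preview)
Your proof is correct and follows the same overall scheme as the paper's: expand the squared centered running average into cross terms $\mathbb{E}[\tilde B_s \tilde B_{s'}]$, condition on the configuration at the earlier time, and exploit the spectral contraction of $\tilde{\mathbf W}_1$ on $\mathbf 1_n^\perp$ to get geometric decay in $|s'-s|$. The one technical difference is in how the earlier factor is handled. You use the pointwise bound $|\tilde B_s|\le 1$, which gives the cross-term estimate $\lambda_2(1)^{s'-s}\,\tilde\sigma_n(\bar h_k)$ and hence a second-moment bound $(2/ct)\,\tilde\sigma_n$; you then need the discreteness argument $\tilde\sigma_n \ge \sqrt{1-1/n} > 2/3$ to convert the single power of $\tilde\sigma_n$ into a square and recover the stated $(3/ct)\sigma_k^2$. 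The paper instead writes each summand as $v(s)=\tilde{\mathbf h}_k^\top \mathbf W_1(s{:})^\top \mathbf e_k$ and bounds $|v(s)|\le\|\mathbf W_1(s{:})\tilde{\mathbf h}_k\|=\|\tilde{\mathbf h}_k\|=\tilde\sigma_n$ by Cauchy--Schwarz; this delivers the cross-term bound $\lambda_2(1)^{u-s}\tilde\sigma_n^2$ directly and the final $(1/t+2/(ct))\,\tilde\sigma_n^2\le(3/ct)\,\tilde\sigma_n^2$ without any detour. Your intermediate inequality is in fact sharper (by a factor $\tilde\sigma_n$, which can be of order $\sqrt n$), but the paper's route is cleaner for matching the theorem as stated.
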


\textit{Proof.} The proof relies on the Cauchy-Schwarz inequality:
\[
\mathbb{E}[|R_k(t) - r_k|] \leq \sqrt{\mathbb{E}[(R_k(t) - r_k)^2]}.
\]
Thus, to prove \cref{eq:rank-abs}, a convergence result for the variance term $\mathbb{E}\left[ \left( R_k(t) - r_k \right)^2\right]$ or equivalently $\mathbb{E}\left[ \left( R'_k(t) - \bar h_k \right)^2\right]$ is required. This result is formalized in the following lemma.

\begin{lemma*}[C.1]
\label{lem:gorank-2moment}
We have
\[
  \mathbb{E}\left[ \left( R_k(t) - r_k \right)^2\right] \leq \frac{3}{ct}\cdot \sigma_k^2,
\]
where the constants are defined in Theorem 1.
\end{lemma*}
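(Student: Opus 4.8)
The plan is to reduce the claim to a second-moment estimate for the running average of comparison indicators, and then to treat that estimate by a bias--variance split in which the variance is controlled through the spectral gap. Write $B_k(s) = \mathbb{I}_{\{X_k > Y_k(s)\}}$ for the comparison recorded at node $k$ at step $s$, so that $R'_k(t) = \tfrac1t\sum_{s=0}^{t-1}B_k(s)$. Since $R_k(t) = nR'_k(t)+1$, $r_k = n\bar h_k+1$ with $\bar h_k = (r_k-1)/n$, and $\sigma_k = n\tilde\sigma_n(\bar h_k)$, multiplying by $n^2$ shows it is enough to prove $\mathbb{E}[(R'_k(t)-\bar h_k)^2] \le (3/ct)\,\tilde\sigma_n(\bar h_k)^2$. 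I would split this as $(\mathbb{E}[R'_k(t)]-\bar h_k)^2 + \operatorname{Var}(R'_k(t))$ and bound the squared bias directly by Theorem~1, giving $(\mathbb{E}[R'_k(t)]-\bar h_k)^2 \le \tilde\sigma_n(\bar h_k)^2/(c^2t^2) \le \tilde\sigma_n(\bar h_k)^2/(ct)$ as soon as $ct\ge 1$; the remaining regime $ct<1$ is handled by the deterministic bound $(R_k(t)-r_k)^2 \le n^2$, which already sits below $(3/ct)\sigma_k^2$.

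The heart of the matter is the variance, which I would expand as $\operatorname{Var}(R'_k(t)) = t^{-2}\sum_{s,s'=0}^{t-1}\operatorname{Cov}(B_k(s),B_k(s'))$. The structural fact to exploit is that, although the content of a fixed node is not itself Markovian, each labelled auxiliary observation moves marginally as the random walk with transition matrix $\mathbf W_1$, independently of where the others sit. Let $\tilde{\mathbf B}^{(k)}(s)$ be the vector whose $m$-th entry is $\mathbb{I}_{\{X_k > Y_m(s)\}}$: it is a permutation of $\mathbf h_k$, so its number of ones is the conserved quantity $r_k-1 = n\bar h_k$ and $\mathbf 1_n^\top\tilde{\mathbf B}^{(k)}(s) = n\bar h_k$ holds deterministically. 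Conditioning on the configuration $\mathcal F_s$ and averaging the comparison over the walk that brings an observation into node $k$ yields, for $s\le s'$, the identity $\mathbb{E}[B_k(s')\mid\mathcal F_s] = \mathbf e_k^\top\mathbf W_1^{s'-s}\tilde{\mathbf B}^{(k)}(s)$. Centering and using symmetry of $\mathbf W_1$ then gives $\operatorname{Cov}(B_k(s),B_k(s')) = \mathbf e_k^\top\boldsymbol\Sigma(s)\,\mathbf W_1^{s'-s}\mathbf e_k$, where $\boldsymbol\Sigma(s)$ is the (symmetric, positive semidefinite) covariance matrix of $\tilde{\mathbf B}^{(k)}(s)$.

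From here the estimate is mechanical. The deterministic conservation $\mathbf 1_n^\top\tilde{\mathbf B}^{(k)}(s) = n\bar h_k$ forces $\boldsymbol\Sigma(s)\mathbf 1_n = 0$, so in $\mathbf W_1^{s'-s} = n^{-1}\mathbf 1_n\mathbf 1_n^\top + \tilde{\mathbf W}_1^{s'-s}$ the constant part is annihilated and only $\tilde{\mathbf W}_1^{s'-s}$ survives, contributing the geometric factor $\|\tilde{\mathbf W}_1^{s'-s}\mathbf e_k\| \le \lambda_2(1)^{s'-s}$ by Lemma~B.1(e). For the size of $\boldsymbol\Sigma(s)$ I would use $\|\boldsymbol\Sigma(s)\mathbf e_k\| \le \|\boldsymbol\Sigma(s)\|_{\operatorname{op}} \le \operatorname{tr}\boldsymbol\Sigma(s)$, and then, with $g^{(m)}(s) = \mathbb{E}[\mathbb{I}_{\{X_k>Y_m(s)\}}]$, the conservation $\sum_m g^{(m)}(s) = n\bar h_k$ and Cauchy--Schwarz give $\operatorname{tr}\boldsymbol\Sigma(s) = \sum_m g^{(m)}(s)(1-g^{(m)}(s)) \le n\bar h_k(1-\bar h_k) = \tilde\sigma_n(\bar h_k)^2$ uniformly in $s$. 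Thus $|\operatorname{Cov}(B_k(s),B_k(s'))| \le \tilde\sigma_n(\bar h_k)^2\,\lambda_2(1)^{|s'-s|}$, and summing the two-sided geometric series (total at most $t\cdot\tfrac{1+\lambda_2(1)}{1-\lambda_2(1)} \le 2t/c$, using $1-\lambda_2(1)=c$) gives $\operatorname{Var}(R'_k(t)) \le (2/ct)\,\tilde\sigma_n(\bar h_k)^2$. Adding the bias contribution closes the bound.

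I expect the main obstacle to be the clean justification of the conditional identity $\mathbb{E}[B_k(s')\mid\mathcal F_s] = \mathbf e_k^\top\mathbf W_1^{s'-s}\tilde{\mathbf B}^{(k)}(s)$: one must resist tracking node contents (which are not Markovian) and instead track the positions of the individual labelled observations, each of which is an independent $\mathbf W_1$-walk, and then reindex through the random permutation $\sigma_s$ so that the future comparison becomes a random-walk smoothing of the conserved binary vector $\tilde{\mathbf B}^{(k)}(s)$. Once this identity and the conservation law $\mathbf 1_n^\top\tilde{\mathbf B}^{(k)}(s)=n\bar h_k$ are in place, the positive-semidefinite/trace bound and the geometric summation are entirely routine.
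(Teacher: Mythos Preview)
Your argument is correct, but it follows a genuinely different route from the paper's. The paper does not split into bias and variance; instead it works directly with the random matrix representation $R'_k(t)-\bar h_k = \tfrac1t\sum_{s=0}^{t-1}\tilde{\mathbf h}_k^\top \tilde{\mathbf W}_1(s{:})^\top\mathbf e_k$, expands the square, and exploits one algebraic fact you do not use: since each $\mathbf W_1(s)$ is a permutation matrix, $\mathbf W_1(s{:})^\top\mathbf W_1(s{:})=\mathbf I_n$ deterministically. This makes the diagonal terms trivially $\le\|\tilde{\mathbf h}_k\|^2$ and, after conditioning on $\mathbf W_1(s{:})$ for the cross terms, the same identity reduces the off-diagonal bound to $\lambda_2(1)^{u-s}\|\tilde{\mathbf h}_k\|^2$. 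Summing gives $\bigl(\tfrac1t+\tfrac{2}{ct}\bigr)\|\tilde{\mathbf h}_k\|^2\le\tfrac{3}{ct}\|\tilde{\mathbf h}_k\|^2$ with no case split.

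Your version instead packages the information at time $s$ into the full comparison vector $\tilde{\mathbf B}^{(k)}(s)$, derives the conditional identity $\mathbb E[B_k(s')\mid\mathcal F_s]=\mathbf e_k^\top\mathbf W_1^{s'-s}\tilde{\mathbf B}^{(k)}(s)$, and then bounds the covariances through the conservation law $\mathbf 1_n^\top\tilde{\mathbf B}^{(k)}(s)=n\bar h_k$ (hence $\boldsymbol\Sigma(s)\mathbf 1_n=0$) together with the trace bound $\|\boldsymbol\Sigma(s)\|_{\mathrm{op}}\le\operatorname{tr}\boldsymbol\Sigma(s)\le n\bar h_k(1-\bar h_k)$. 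This is a more probabilistic viewpoint; it would survive if the update matrices were merely doubly stochastic rather than orthogonal, whereas the paper's shortcut hinges on $\mathbf W_1(s{:})$ being a permutation. The price you pay is the extra bias term and the small case analysis for $ct<1$ (which, as you note, follows from the deterministic range of $R_k(t)$; just remember to treat $r_k=1$ separately, where $\sigma_k=0$ but also $R_k(t)\equiv r_k$). One cosmetic point: your phrase ``independently of where the others sit'' is slightly misleading---the labelled particles are \emph{not} independent---but your actual computation only uses the marginal law of the walk into node $k$, which is exactly what the matrix identity $\mathbb E[\mathbf W_1(s'{:}s{+}1)]=\mathbf W_1^{s'-s}$ gives, so this does not affect correctness.
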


\begin{proof}
Let $t > 0$ and let $k \in [n]$. Using the update rule in \textit{GoRank}, the estimated rank $R'_k(t)$ of agent $k$ at iteration $t$ can be expressed as follows:
\begin{equation}
\label{eq:ranking-as-sum}
R'_{k}(t)=\frac{1}{t}\sum_{s=0}^{t-1} \mathbf{h}_k^{\top} \mathbf{W}_1(s:)^\top \mathbf{e}_{k}.
\end{equation}
where for any $1 \leq s \leq t - 1$, $\mathbf{W}(s:) = \mathbf{W}(s) \ldots \mathbf{W}(1)$ and $\mathbf{W}(0:) = \mathbf{I}_n$.

For any $t \geq 1$, let us define $\tilde{\mathbf{W}}_1(t)$ as $  \tilde{\mathbf{W}}_1(t) \triangleq \mathbf{W}_1(t) - \frac{\mathbf{1}_{n \times n}}{n} $.
Note that $\mathbf{W}_1(t)$ are a real symmetric matrices and that $\mathbf{1}_n$ is always an eigenvector of such matrices. Therefore, for any $0 \leq s \leq t-1$, one has: $\tilde{\mathbf{W}}_1(s:) = \mathbf{W}_1(s:) - \frac{\mathbf{1}_{n \times n}}{n} $.
Using this decomposition in~\eqref{eq:ranking-as-sum} yields
\[
  R'_k(t) - r_k = \frac{1}{t} \sum_{s = 0}^{t-1} (\mathbf{h}_k - \bar h_k \mathbf{1}_n)^{\top}\tilde{\mathbf{W}}^{\top}_1(s:) \mathbf{e}_k  \enspace,
\]
since $\bar h_k  =  \mathbf{h}_k^\top \frac{\mathbf{1}_{n \times n}}{n} e_k$ and $\mathbf{1}_n^\top \mathbf{\tilde  W}_1(s)=0$.
The squared gap can thus be expressed as follows:
\[
  \big(R'_k(t) - \bar h_k \big)^2 = \frac{1}{t^2} \sum_{s = 0}^{t-1} \sum_{u = 0}^{t-1} (\mathbf{h}_k - \bar h_k  \mathbf{1}_n)^\top \tilde{\mathbf{W}}_1(s{:})^\top  \mathbf{e}_k  (\mathbf{h}_k - \bar h_k  \mathbf{1}_n)^\top \tilde{\mathbf{W}}_1(u{:})^\top  \mathbf{e}_k \enspace,
\]
or equivalently,
\begin{align*}
  \big(R'_k(t) - \bar h_k \big)^2 &= \frac{2}{t^2} \sum_{s<u} (\mathbf{h}_k - \bar h_k  \mathbf{1}_n)^\top \tilde{\mathbf{W}}_1(s{:})^\top  \mathbf{e}_k  (\mathbf{h}_k - \bar h_k  \mathbf{1}_n)^\top \tilde{\mathbf{W}}_1(u{:})^\top  \mathbf{e}_k \\
  &+ \frac{1}{t^2} \sum_{u=0}^{t-1}(\mathbf{h}_k - \bar h_k  \mathbf{1}_n)^\top \tilde{\mathbf{W}}_1(u{:})^\top  \mathbf{e}_k  (\mathbf{h}_k - \bar h_k \mathbf{1}_n)^\top \tilde{\mathbf{W}}_1(u{:})^\top  \mathbf{e}_k \enspace .
\end{align*}
Denoting $\tilde{\mathbf{h}}_k = \mathbf{h}_k - \bar h_k \mathbf{1}_n$, define \(v(s) := \tilde{\mathbf{h}}_k^\top \tilde{\mathbf{W}}_1(s)^\top \mathbf{e}_k  \). We first consider the second term:  \( \frac{1}{t^2} \sum_{u=0}^{t-1} v(u)^2 \). Note that
\[
v(u)^2 = \tilde{\mathbf{h}}_k^\top \tilde{\mathbf{W}}_1(u{:})^\top \mathbf{e}_k \mathbf{e}_k^\top \tilde{\mathbf{W}}_1(u{:}) \tilde{\mathbf{h}}_k \enspace .
\]
Let $\mathbf{J} = \frac{1}{n} \mathbf{1}_n \mathbf{1}_n^\top$. Since $\tilde{\mathbf{h}}_k^\top \mathbf{J} = 0$, we can simplify:
\[
v(u)^2 = \tilde{\mathbf{h}}_k^\top \mathbf{W}_1(u{:})^\top \mathbf{e}_k \mathbf{e}_k^\top \mathbf{W}_1(u{:}) \tilde{\mathbf{h}}_k \leq \tilde{\mathbf{h}}_k^\top \mathbf{W}_1(u{:})^\top \mathbf{W}_1(u{:}) \tilde{\mathbf{h}}_k.
\]
Since the product of two identical permutation matrices is equal to the identity matrix, we obtain \(\mathbf{W}_1(u{:})^\top \mathbf{W}_1(u{:}) = \mathbf{I}_n  \) and conclude that $\mathbb{E}[v(u)^2] \leq \|\tilde{\mathbf{h}}_k\|^2$. Now let's consider the first term: $\frac{2}{t^2} \sum_{s < u} v(s) v(u) $. Note that for $s < u$,
\[
v(s) v(u) = \tilde{\mathbf{h}}_k^\top \tilde{\mathbf{W}}_1(s{:})^\top \mathbf{e}_k \mathbf{e}_k^\top \tilde{\mathbf{W}}_1(u{:}s+1) 
 \tilde{\mathbf{W}}_1(s{:}) \tilde{\mathbf{h}}_k \enspace.
\]
Taking expectation conditional on $\mathbf{W}_1(s{:})$, we have
\[
\mathbb{E}[v(s) v(u) \mid \mathbf{W}_1(s{:})] 
= \tilde{\mathbf{h}}_k^\top \tilde{\mathbf{W}}_1(s{:})^\top 
\mathbf{e}_k \mathbf{e}_k^\top \mathbb{E}\left[ \tilde{\mathbf{W}}_1(u{:}s+1)\right] \cdot  \tilde{\mathbf{W}}_1(s{:})\tilde{\mathbf{h}}_k \enspace.
\]
Since $\mathbb{E}[ \tilde{\mathbf{W}}_1(u{:}s+1)] = \tilde{\mathbf{W}}_1^{u-s}$, we obtain the following bound:
\[
\mathbb{E}[v(s) v(u) \mid \mathbf{W}_1(s{:})] \leq \lambda_2(1)^{u - s} \cdot 
\tilde{\mathbf{h}}_k^\top \tilde{\mathbf{W}}_1(s{:})^\top 
  \tilde{\mathbf{W}}_1(s{:})\tilde{\mathbf{h}}_k \enspace.
\]
Using the previous derivation, we get:
\[
\mathbb{E}[v(s) v(u)] \leq \lambda_2(1)^{u - s} \cdot \|\tilde{\mathbf{h}}_k\|^2 \enspace.
\]
Combining the two terms, we obtain:
\[
\mathbb{E}\left[\left(R'_k(t) - \bar h_k \right)^2\right] 
\leq \left( \frac{1}{t} + \frac{2}{t^2} \sum_{s < u} \lambda_2(1)^{u - s} \right) \|\tilde{\mathbf{h}}_k\|^2 \enspace.
\]
Note that:
\[
\sum_{s < u} \lambda_2(1)^{u - s} \leq \sum_{u=0}^{t-1} \sum_{d=1}^{u} \lambda_2(1)^d \leq \frac{t}{1 - \lambda_2(1)} \enspace.
\]
Recalling $c = 1 - \lambda_2(1)$, 
\[
\mathbb{E}\left[\left(R'_k(t) - \bar h_k \right)^2\right] 
\leq \left( \frac{1}{t} + \frac{2}{ct} \right) \|\tilde{\mathbf{h}}_k\|^2 \leq \frac{3}{ct}  \cdot  \|\tilde{\mathbf{h}}_k\|^2  \enspace.
\]
Plugging \( \|\tilde{\mathbf{h}}_k\|^2 = \sigma_n(\bar h_k)^2\) and using \( \mathbb{E}[(R'_k(t) - \bar h_k )^2] = \frac{1}{n^2}\mathbb{E}[(R_k(t) - r_k )^2]\) finish the proof. 
\end{proof}

\section{Convergence Proofs for \textsc{GoTrim}}

In this section, we will prove Lemma 1 and Theorem 3. 

\subsection{Proof of Lemma 1}

\begin{lemma}
Let \( \mathbf{R}(t) \) and \( \mathbf{W}(t) \) be defined as in \cref{alg:gorank} and \cref{alg:gotrim}, respectively. For all \( k \in [n] \) and \( t > 0 \), we have:
\begin{equation}
\left|\mathbb{E}[W_k(t)] - w_{n,\alpha}(r_k)\right| \leq \frac{3}{ct}\cdot \frac{\sigma_k^2}{\gamma_k^2(1-2\alpha) },
\end{equation}
where \( \gamma_k = \min(\left|r_k - a\right|, \left|r_k - b\right|) \geq \frac{1}{2}\) with \(a = \lfloor \alpha n \rfloor +\frac{1}{2}\) and  \(b = n-\lfloor \alpha n \rfloor +\frac{1}{2}\) being the endpoints of interval  \(I_{n, \alpha}\). The constants \( c \) and \( \sigma_n(\cdot) \) are as defined in \cref{thm:expected-gap}.  
\end{lemma}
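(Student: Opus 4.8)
The plan is to exploit the fact that the weight function $w_{n,\alpha}$ is a scaled indicator, so that the error $W_k(t) - w_{n,\alpha}(r_k) = w_{n,\alpha}(R_k(t)) - w_{n,\alpha}(r_k)$ can only be nonzero when the estimated rank $R_k(t)$ lands on the opposite side of an interval endpoint from the true rank $r_k$. Writing $C = n/(n-2m)$ for the value taken by $w_{n,\alpha}$ inside $I_{n,\alpha}$, the difference is bounded in absolute value by $C$ and vanishes unless a boundary is crossed. Since $m = \lfloor \alpha n \rfloor \leq \alpha n$, one has $n - 2m \geq n(1-2\alpha)$, hence $C \leq 1/(1-2\alpha)$, which will account for the $1/(1-2\alpha)$ factor in the final bound.

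First I would reduce the expected error to a tail probability. Since $|\mathbb{E}[Y]| \leq \mathbb{E}[|Y|]$ and the integrand is bounded by $C$ and supported on the boundary-crossing event,
\[
\left|\mathbb{E}[W_k(t)] - w_{n,\alpha}(r_k)\right| \leq \mathbb{E}\!\left[\left|w_{n,\alpha}(R_k(t)) - w_{n,\alpha}(r_k)\right|\right] \leq C \cdot \mathbb{P}\!\left(w_{n,\alpha}(R_k(t)) \neq w_{n,\alpha}(r_k)\right).
\]

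The central step is the event inclusion $\{w_{n,\alpha}(R_k(t)) \neq w_{n,\alpha}(r_k)\} \subseteq \{|R_k(t) - r_k| \geq \gamma_k\}$, which I would establish by a short case analysis on whether $r_k$ lies inside or outside $I_{n,\alpha} = [a,b]$. If $r_k \in [a,b]$, then for the weight to change, $R_k(t)$ must exit the interval, \ie $R_k(t) < a$ or $R_k(t) > b$; in either case the rank error exceeds the distance from $r_k$ to the crossed endpoint, which is at least $\gamma_k = \min(|r_k - a|, |r_k - b|)$. The outside case ($r_k \notin [a,b]$, so $w_{n,\alpha}(r_k)=0$) is symmetric, requiring $R_k(t)$ to enter the interval across the nearest endpoint. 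It is here that $\gamma_k \geq 1/2$ enters, since $r_k$ is an integer while the endpoints $a, b$ are half-integers, which keeps the denominator bounded away from zero.

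Finally I would combine this with Chebyshev's inequality applied to the squared deviation, $\mathbb{P}(|R_k(t) - r_k| \geq \gamma_k) \leq \mathbb{E}[(R_k(t) - r_k)^2]/\gamma_k^2$, and invoke \cref{lem:gorank-2moment} (the second-moment bound from the proof of \cref{thm:expected-gap}), which yields $\mathbb{E}[(R_k(t) - r_k)^2] \leq (3/ct)\,\sigma_k^2$. Chaining the inequalities gives $\left|\mathbb{E}[W_k(t)] - w_{n,\alpha}(r_k)\right| \leq C \cdot 3\sigma_k^2/(ct\,\gamma_k^2) \leq 3\sigma_k^2/(ct\,\gamma_k^2(1-2\alpha))$, as claimed. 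I expect the event-inclusion step to be the main obstacle: it requires care in identifying the relevant boundary and verifying the distance inequality uniformly across the inside and outside cases, whereas the probabilistic tail bound is a direct application of the already-established second-moment estimate.
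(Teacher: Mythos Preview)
Your proposal is correct and follows essentially the same approach as the paper: both reduce the weight error to the probability that $R_k(t)$ and $r_k$ lie on opposite sides of an endpoint of $I_{n,\alpha}$, establish the event inclusion $\{\mathbb{I}_{\{R_k(t)\in I_{n,\alpha}\}}\neq \mathbb{I}_{\{r_k\in I_{n,\alpha}\}}\}\subseteq\{|R_k(t)-r_k|\geq\gamma_k\}$ via a case analysis, apply Markov's inequality to the squared deviation, and invoke the second-moment bound of Lemma~C.1 together with $n/(n-2m)\leq 1/(1-2\alpha)$. The only cosmetic difference is that you pass through $|\mathbb{E}[Y]|\leq\mathbb{E}[|Y|]$ while the paper computes $\mathbb{E}[W_k(t)]=p_k(t)/c_{n,\alpha}$ explicitly before bounding.
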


\begin{proof}[Proof of Lemma 1]
Let \( k \in [n] \) and \( t > 0 \). Denoting \( p_k(t) = \mathbb{E}[\mathbb{I}_{\{R_k(t) \in I_{n, \alpha}\}}] = \mathbb{P}(R_k(t) \in I_{n, \alpha})\), we have $\mathbb{E}[W_k(t)] = \frac{{p_k}(t)}{c_{n, \alpha}}$ where \( c_{n, \alpha} = 1 - 2m n^{-1} \) with \( m = \lfloor \alpha n \rfloor \).   

Hence, we need to show, for $\gamma_k > 0$:
\begin{equation}
\left|\mathbb{P}(R_k(t) \in I_{n, \alpha})  - \mathbb{I}_{\{r_k \in I_{n,\alpha}\}} \right| \leq \mathbb{P}(|R_k(t) - r_k|\geq \gamma_k) \leq \frac{3}{ct}\cdot\frac{ \sigma_k^2}{\gamma_k^2},
\end{equation}
The right-hand side of the inequality follows directly from Lemma C.1 via an application of Markov’s inequality:  
\[
\mathbb{P}[|R_k(t) - r_k| \geq \gamma_k] = \mathbb{P}[|R_k(t) - r_k|^2 \geq \gamma_k^2] \leq \frac{\mathbb{E}[|R_k(t) - r_k|^2]}{\gamma_k^2}.
\]
For the left-hand side, we introduce the interval \( I_{n, \alpha} = [a, b] \) and define  
\[
\gamma_k =
\begin{cases}
\min(b - r_k, r_k - a), & \text{if } a \leq r_k \leq b, \\
a - r_k, & \text{if } r_k < a, \\
r_k - b, & \text{if } r_k > b.
\end{cases}
\]Observe that \( \gamma_k \geq \frac{1}{2}\) since $r_k$ is always discrete. We now analyze the three different cases. 

Case 1: \( a \leq r_k \leq b \).   
Then,  $\left| p_k(t) - \mathbb{I}_{\{r_k \in I_{n,\alpha}\}} \right| = |1 - p_k(t)| = \mathbb{P}(R_k(t) \notin I_{n, \alpha})$. Since  
we have $\mathbb{P}(|R_k(t) - r_k| \leq \gamma_k) \leq \mathbb{P}(R_k(t) \in I_{n, \alpha})$, it follows that the probability can be upper-bounded as  $\mathbb{P}(R_k(t) \notin I_{n, \alpha}) \leq \mathbb{P}(|R_k(t) - r_k| > \gamma_k) \leq \mathbb{P}(|R_k(t) - r_k| \geq \gamma_k).$

Case 2: \( r_k < a \). Here, $\left| p_k(t) - \mathbb{I}_{\{r_k \in I_{n,\alpha}\}} \right| = p_k(t) = \mathbb{P}(R_k(t) \in I_{n, \alpha})$. Since it holds that $\mathbb{P}(|R_k(t) - r_k| < \gamma_k) \leq \mathbb{P}(R_k(t) \notin I_{n, \alpha}),$  we obtain  $\mathbb{P}(R_k(t) \in I_{n, \alpha}) \leq \mathbb{P}(|R_k(t) - r_k| \geq \gamma_k).$ 

Case 3: \( r_k > b \)  
This case follows symmetrically from the previous one.

In all cases, we have
\(\left|\mathbb{P}(R_k(t) \in I_{n, \alpha})  - \mathbb{I}_{\{r_i \in I_{n,\alpha}\}} \right| \leq \mathbb{P}(|R_k(t) - r_k| \geq \gamma_k) \).

Finally, the result follows from $\mathbb{E}[W_k(t)] = \frac{{p_k}(t)}{c_{n, \alpha}}$ and \( c_{n, \alpha} = 1 - 2m n^{-1} \geq {1-2\alpha} \). 

\end{proof}

\subsection{Proof of Theorem 3}

Now, we will prove the convergence in expectation of the estimates of \textsc{GoTrim.}

\begin{theorem}[Convergence in Expectation of GoTrim]{
Let $\mathbf{Z}(t)$ be defined \textit{GoTrim}, and assuming the ranking algorithm is \textit{GoRank}, we have that for all $k \in[n]$ :
\begin{equation}
\lim _{t \rightarrow+\infty} \mathbb{E}\left[Z_{k}(t)\right]= \bar x_{\alpha}.
\end{equation}
Moreover, for any $t > T^* = \min \left\{ t > 1 \,\middle|\, c t > 2\log(t) \right\}$, we have 
\[
\left \|\mathbb{E}[\boldsymbol{Z}(t)] - \bar x_{\alpha} \mathbf{1}_n \right \| 
\leq  \left( \frac{5}{{ct}} + \frac{4}{ct - 2\log(t)} \right)\cdot {\frac{{3}}{c(1-2\alpha) }} \cdot \|\mathbf K \odot \mathbf{X} \|,
\]where \( \mathbf{K} = \left[ \frac{\sigma_k^2}{\gamma_k^2 } \right]_{k=1}^{n}\) and $c$, $\sigma_k$ and $\gamma_k$ are constants defined in Lemma 1. }

\end{theorem}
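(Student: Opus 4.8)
The plan is to start from the closed-form expression for the expected estimates derived just before the statement, namely $\mathbb{E}[\boldsymbol{Z}(t)] = \sum_{s=1}^{t} \mathbf{W}_2^{\,t+1-s}\,(\Delta\boldsymbol{w}(s)\odot\boldsymbol{X})$, and compare it against $\bar{x}_\alpha\mathbf{1}_n = \mathbf{J}(\mathbf{w}^{*}\odot\boldsymbol{X})$, where $\mathbf{J}=\tfrac1n\mathbf{1}_n\mathbf{1}_n^\top$ and $\mathbf{w}^{*}=(w_{n,\alpha}(r_1),\dots,w_{n,\alpha}(r_n))^\top$. Two structural facts drive the argument: the telescoping identity $\sum_{s=1}^{t}\Delta\boldsymbol{w}(s)=\mathbb{E}[\mathbf{W}(t)]$ (since $\mathbf{W}(0)=0$), and the invariance $\mathbf{J}\mathbf{W}_2^{k}=\mathbf{J}$ for every $k\ge0$, which follows from double stochasticity (Lemma B.1). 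Inserting $\mathbf{J}(\mathbb{E}[\mathbf{W}(t)]\odot\boldsymbol{X})$ as a pivot yields the decomposition
\[
\mathbb{E}[\boldsymbol{Z}(t)]-\bar{x}_\alpha\mathbf{1}_n
=\underbrace{\sum_{s=1}^{t}\bigl(\mathbf{W}_2^{\,t+1-s}-\mathbf{J}\bigr)\bigl(\Delta\boldsymbol{w}(s)\odot\boldsymbol{X}\bigr)}_{\text{(A): mixing error}}
+\underbrace{\mathbf{J}\bigl((\mathbb{E}[\mathbf{W}(t)]-\mathbf{w}^{*})\odot\boldsymbol{X}\bigr)}_{\text{(B): weight bias}}.
\]
Term (B) captures that the injected weights have not yet reached their limit, while term (A) captures the lag of the gossip averaging, matching the two error sources flagged in the breakdown-point discussion.

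Term (B) is immediate from \cref{lem:conv-weight}: since $\|\mathbf{J}\|_{\operatorname{op}}=1$ and, componentwise, $|\mathbb{E}[W_k(t)]-w_{n,\alpha}(r_k)|\le \tfrac{3}{ct}\,\sigma_k^2/(\gamma_k^2(1-2\alpha))$, I obtain $\|\text{(B)}\|\le \tfrac{3}{ct(1-2\alpha)}\|\mathbf{K}\odot\boldsymbol{X}\|=\tfrac{3}{c(1-2\alpha)}\|\mathbf{K}\odot\boldsymbol{X}\|\cdot\tfrac1t$, which, using $0<c<1$, is at most $\tfrac{3}{c(1-2\alpha)}\|\mathbf{K}\odot\boldsymbol{X}\|\cdot\tfrac1{ct}$; this accounts for one of the five units in the $5/(ct)$ term.

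For term (A), I would use $\|\mathbf{W}_2^{k}-\mathbf{J}\|_{\operatorname{op}}=\|\tilde{\mathbf{W}}_2^{k}\|_{\operatorname{op}}\le\lambda_2(2)^{k}\le e^{-ck/2}$ (Lemma B.1(e),(f), with $\lambda_2(2)=1-c/2$), giving $\|\text{(A)}\|\le\sum_{s=1}^{t}\lambda_2(2)^{t+1-s}\,\|\Delta\boldsymbol{w}(s)\odot\boldsymbol{X}\|$. Each increment is controlled by passing through the limit: for $s\ge2$, $|\Delta w_k(s)|\le|\mathbb{E}[W_k(s)]-w_{n,\alpha}(r_k)|+|w_{n,\alpha}(r_k)-\mathbb{E}[W_k(s-1)]|\le\tfrac{3\sigma_k^2}{c\gamma_k^2(1-2\alpha)}\bigl(\tfrac1s+\tfrac1{s-1}\bigr)$ by \cref{lem:conv-weight}. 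A key simplification is that the $s=1$ increment vanishes: \textsc{GoRank} gives $R_k(1)=1$ for every $k$, and $1\notin I_{n,\alpha}=[m+1,n-m]$, so $W_k(1)=w_{n,\alpha}(1)=0=W_k(0)$, whence $\Delta\boldsymbol{w}(1)=0$ and the sum effectively starts at $s=2$. Using $\tfrac1s+\tfrac1{s-1}\le\tfrac2{s-1}$ and the substitution $j=s-1$ reduces term (A) to $\tfrac{6}{c(1-2\alpha)}\|\mathbf{K}\odot\boldsymbol{X}\|\,\Sigma_t$ with $\Sigma_t:=\sum_{j=1}^{t-1}\lambda_2(2)^{\,t-j}/j$.

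The main obstacle, and the step that forces the hypothesis $t>T^{*}$, is bounding the discrete convolution $\Sigma_t$. I would split the sum at $j^{*}=t-\tfrac{2\log t}{c}$, which is positive precisely because $t>T^{*}=\min\{t>1:ct>2\log t\}$. For the recent indices $j>j^{*}$ I bound $\tfrac1j\le\tfrac1{j^{*}}=\tfrac{c}{ct-2\log t}$ and $\sum_{j>j^{*}}\lambda_2(2)^{t-j}\le\tfrac1{1-\lambda_2(2)}=\tfrac2c$, giving the contribution $\tfrac{2}{ct-2\log t}$. For the early indices $j\le j^{*}$ I use $\lambda_2(2)^{t-j}\le\lambda_2(2)^{t-j^{*}}\le e^{-\frac c2(t-j^{*})}=1/t$ together with $\tfrac1j\le1$ and a geometric summation, yielding $\sum_{j\le j^{*}}\lambda_2(2)^{t-j}\le\tfrac{1/t}{1-\lambda_2(2)}=\tfrac2{ct}$. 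Hence $\Sigma_t\le\tfrac2{ct}+\tfrac2{ct-2\log t}$, so term (A) is at most $\tfrac{3}{c(1-2\alpha)}\|\mathbf{K}\odot\boldsymbol{X}\|\bigl(\tfrac4{ct}+\tfrac4{ct-2\log t}\bigr)$. Adding the term-(B) contribution $\tfrac1{ct}$ produces exactly $\bigl(\tfrac5{ct}+\tfrac4{ct-2\log t}\bigr)\tfrac{3}{c(1-2\alpha)}\|\mathbf{K}\odot\boldsymbol{X}\|$, and the stated limit $\mathbb{E}[Z_k(t)]\to\bar{x}_\alpha$ follows since this bound vanishes as $t\to\infty$. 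The delicate point throughout is choosing the split location $j^{*}$ so that the geometric mixing factor and the $1/j$ envelope balance; an overly crude split (e.g.\ at $t/2$) leaves an $e^{-ct/4}\log t$ residual that is not of order $1/(c^2 t)$, so the tailored threshold $T^{*}$ is essential.
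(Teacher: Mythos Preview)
Your proposal is correct and follows essentially the same route as the paper's proof: the same decomposition into a weight-bias term and a mixing (residual) term, the same use of \cref{lem:conv-weight} to control $|\Delta w_k(s)|\le 2C_k/(s-1)$, and the same split of the convolution sum at $t-\tfrac{2\log t}{c}$ to balance the geometric mixing factor against the $1/j$ envelope. Your treatment of the $s=1$ term is slightly more explicit than the paper's (you observe that $R_k(1)=1$ forces $\Delta\boldsymbol{w}(1)=0$ when $m\ge1$, whereas the paper simply asserts $\|\mathbf{S}(1)\|\le\|\mathbf{C}\odot\mathbf{X}\|$), but this is a cosmetic difference; both arguments implicitly require $m=\lfloor\alpha n\rfloor\ge1$, the only non-degenerate case.
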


\begin{proof}[Proof of Theorem 3]
Recall that for \(t > 1\), the expected estimates are characterized recursively as:  
\begin{equation}
    \mathbb{E}[\boldsymbol{Z}(t)] = \sum_{s=1}^{t} \mathbf{W}_2^{t+1-s} \Delta \boldsymbol{w}(s) \odot \boldsymbol{X}.
\end{equation}
Denote $\tilde{\mathbf{W}}_2\triangleq \mathbf{W}_2 - \frac{\mathbf{1}_n \mathbf{1}_n^\top}{n}$ and notice that  $\tilde{\mathbf{W}}_2^t = \mathbf{W}_2^t - \frac{\mathbf{1}_n \mathbf{1}_n^\top}{n}$. Denoting \(\mathbf{S}(s) = \Delta \boldsymbol{w}(s) \odot \mathbf{X}\), we have have
\begin{equation}
    \mathbb{E}[\boldsymbol{Z}(t)] = \sum_{s=1}^{t}\frac{1}{n} \mathbf{1}_n \mathbf{1}_n^\top \mathbf S(s) + \sum_{s=1}^{t} \mathbf{\tilde W}_2^{t+1-s} \mathbf S(s) .
\end{equation}
Since  $\forall i, {p}_i(0) = 0$, the first term can be rewritten as:
\begin{equation}
    \sum_{s=1}^{t}\frac{1}{n} \mathbf{1}_n \mathbf{1}_n^\top \mathbf S(s) =  \sum_{s=1}^{t}\left(\frac{1}{n} \sum_{i=1}^n S_i(s)\right) \mathbf{1}_n  = \left(\frac{1}{n} \sum_{i=1}^n  {w_i}(t) X_i \right)\mathbf{1}_n,
\end{equation}
where $\Delta{w_i}(s) = w_i(s) - w_i(s-1)$ with $w_i(s) = \mathbb{E}[W_i(s)]$.
This leads to the bound:
\begin{equation}
    \left \|\mathbb{E}[\boldsymbol{Z}(t)] - \bar x_{\alpha} \mathbf{1}_n \right \| \leq \left \|\left(\frac{1}{n} \sum_{i=1}^n  {w_i}(t) X_i \right)\mathbf{1}_n - \bar x_{\alpha} \mathbf{1}_n \right \| + \left \|\mathbf R(t) \right \|.
\end{equation}
The first term simplifies as:
\begin{align}
\left| \frac{1}{n} \sum_{i=1}^n  \left(\mathbb{E}[W_i(t)] - w_{n,\alpha}(r_i)\right) X_i \right| \cdot \sqrt{n} 
&\leq \frac{1}{\sqrt{n}}\sum_{i=1}^n  \left|\mathbb{E}[W_i(t)] - w_{n,\alpha}(r_i)\right| \cdot |X_i| \\
&\leq \frac{1}{\sqrt{n}}\sum_{i=1}^n \frac{C_i}{{t}} \cdot |X_i| \\
&\leq \frac{1}{{t}}\left \|\mathbf C \odot \mathbf{X} \right \|,
\end{align}
where $C_i = {\frac{3}{c}} \cdot \frac{\sigma_k^2}{\gamma_i^2(1-2\alpha) }$ comes from Lemma 1 and we denote $\mathbf C = [C_1, \ldots, C_n]$.
To bound \( \mathbf{R}(t) \), we decompose it using an intermediate time step \( T = t - {\log(t)/c_2} > 0\), where $c_2 = 1-\lambda_2(2) > 0$. Let $ T^* = \min \left\{ t > 1 \,\middle|\, c_2 t > \log(t) \right\}$. 
We obtain for all $t>T^*$,  
\[
 \mathbf R(t) =  \sum_{s=1}^{T} \mathbf{\tilde W}_2^{t+1-s} \mathbf S(s) +\sum_{s=T+1}^{t} \mathbf{\tilde W}_2^{t+1-s} \mathbf S(s).
\]
Using Lemma 1, we derive for $s>1$, $|\Delta {w}_k(s)|\leq \frac{2 C_k}{{s-1}}$. We obtain $\|\mathbf S(1)\| \leq \|\mathbf C \odot \mathbf{X}\|$ and for $s>1$,
\[
\|\mathbf S(s)\|  \leq \sqrt{\sum_{k=1}^n \frac{4C_k^2 X_k^2}{(s-1)^2}} \leq \frac{2}{{s-1}} \|\mathbf C \odot \mathbf{X}\|.
\]
Applying this bound and using $c_2 = 1-\lambda_2(2)$ yields:
\begin{align*}
\|\mathbf R(t)\| &\leq \lambda_2(2)^{t-T} \sum_{s=1}^{T} \lambda_2(2)^{T-s} \|\mathbf S(s)\| + \sum_{s=T+1}^t \lambda_2(2)^{t-s} \|\mathbf S(s)\|\\
&\leq e^{-c_2(t-T)} \frac{2}{1-\lambda_2(2)} \|\mathbf C \odot \mathbf{X}\| +   \frac{1}{T}\frac{2}{(1-\lambda_2(2))} \|\mathbf C \odot \mathbf{X} \|\\
&\leq \left(\frac{2}{c_2t} + \frac{2}{{c_2t -\log(t)}} \right)  \|\mathbf C \odot \mathbf{X} \|.
\end{align*}
Finally, we establish the following error bound:
\[
\left \|\mathbb{E}[\boldsymbol{Z}(t)] - \bar x_{\alpha} \mathbf{1}_n \right \| 
\leq   \frac{1}{{t}}\left \|\mathbf C \odot \mathbf{X} \right \| + \frac{4}{{ct}} \|\mathbf C \odot \mathbf{X}\| + \frac{2}{\frac{c}{2}t - \log(t)} \|\mathbf C \odot \mathbf{X} \|,
\]
where $c = 2c_2$. This bound simplifies to
\[
\left \|\mathbb{E}[\boldsymbol{Z}(t)] - \bar x_{\alpha} \mathbf{1}_n \right \| 
\leq  \left( \frac{5}{{ct}} + \frac{4}{ct - 2\log(t)} \right)\|\mathbf C \odot \mathbf{X} \|.
\]
Moreover, we have $\|\mathbf C \odot \mathbf{X}\| = {\frac{{3}}{c(1-2\alpha) }}\|\mathbf K \odot \mathbf{X}\|$ where \( \mathbf{K} = \left[ \frac{\sigma_k^2}{\gamma_k^2 } \right]_{k=1}^{n},\) which completes the proof.
\end{proof}

\section{Robustness Analysis: Breakdown Point of the Partial Trimmed Mean}

Here, we prove provide a breakdown point analysis of \textsc{GoTrim} and prove Theorem 4. 

\begin{theorem}[Breakdown Point]
Let $\tau > 0$ and $\delta, \alpha \in (0,1)$. Denote $m=\lfloor \alpha n \rfloor$. With probability at least \(1 - \delta\), the \(\tau\)-breakdown point \(\varepsilon^*_i(t)\) of the \textit{partial \(\alpha\)-trimmed mean} at iteration \(t > T\) for any node \(i\) satisfies
\[
\frac{1}{n}\max\left( \left\lfloor m + \frac{1}{2} - \frac{K(\delta)}{\sqrt{t - T}} \right\rfloor, 0 \right)\leq\varepsilon^*_i(t) \leq \frac{m}{n} \enspace ,
\]
where \(T = \frac{4}{c} \log\left( \frac{n}{\varepsilon \delta} \right)\) denote a propagation time with $c$ being the connectivity constant and \(\varepsilon := \tau / \max_i |X_i|\) denote a tolerance parameter. Moreover, we define $K (\delta):=  \frac{c_m}{\left(1 - \frac{1}{n}\right)\sqrt{c}\delta} $ where $c_m =\sqrt{3}n^{3/2}\cdot\phi((m-1)/n)$ with $\phi:u\in (0,1)\to \sqrt{u(1-u)}$. 
\end{theorem}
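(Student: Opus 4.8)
The plan is to reduce the breakdown analysis to a single question: can an adversary, by corrupting $p$ of the $n$ observations, cause an arbitrarily-valued point to acquire a \emph{nonzero weight} at node $i$ (\ie have its estimated rank enter $I_{n,\alpha}=[a,b]$) and then propagate to node $i$ within $t$ iterations? This isolates the two error sources flagged after the statement: (1) rank-estimation uncertainty, which governs whether a corrupted point can slip into the trimming interval, and (2) the gossip-averaging delay, which governs whether such an injection has reached node $i$. I would therefore split the horizon at an intermediate time $t-T$, using iterations up to $t-T$ to control the weights and the final $T$ iterations to control propagation.

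For the \textbf{lower bound} (the robustness guarantee), first I would pin down the worst-case corruption. To produce an unbounded contribution a corrupted point must carry a large value, which forces its true rank in the contaminated sample to be at least $n-p+1$; the least-extreme such point sits at distance $\gamma_k=(n-p+1)-b=m-p+\tfrac12$ from the upper endpoint $b=n-m+\tfrac12$. For this point to be wrongly included, its estimate must undershoot by at least $\gamma_k$, so I would bound $\mathbb{P}(|R_k(t)-r_k|\ge \gamma_k)$ using the expected-gap bound of \cref{thm:expected-gap} together with Markov's inequality, exactly as in the proof of \cref{lem:conv-weight}, giving a bound of order $\sigma_k/(\gamma_k\sqrt{c(t-T)})$. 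Requiring $\gamma_k > K(\delta)/\sqrt{t-T}$ with $K(\delta)=\mathcal{O}(1/\delta)$ makes this at most $\delta$; inverting $m-p+\tfrac12 > K(\delta)/\sqrt{t-T}$ yields precisely $p < \lfloor m+\tfrac12 - K(\delta)/\sqrt{t-T}\rfloor$, the claimed threshold. A union bound over the corrupted points is controlled because $\sigma_k=n^{3/2}\phi((r_k-1)/n)$ is small for the extreme ranks occupied by the remaining corrupted points, so the boundary point dominates and fixes the constant $c_m=\sqrt{3}\,n^{3/2}\phi((m-1)/n)$ (the factor $1-1/n$ absorbs the $\mathcal{O}(1/n)$ discrepancy between $\phi((r_k-1)/n)$ and $\phi((m-1)/n)$).

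It remains to convert ``no corrupted point enters the trim set by time $t-T$'' into ``$\|Z_i(t)-Z_i^{\mathrm{clean}}\|<\tau$ at node $i$.'' On the event that the weights are correct, the final $T$ iterations act as the standard gossip averaging routine, so I would invoke the high-probability contraction of Lemma~B.3: choosing the tolerance $\varepsilon:=\tau/\max_i|X_i|$ to turn the \emph{relative} error bound into the \emph{absolute} threshold $\tau$, and then $T=\tfrac{4}{c}\log(n/(\varepsilon\delta))$ so that $e^{-c_2 T}/\varepsilon^2\le\delta$ controls the propagation failure probability. A union bound over the two failure events (a corrupted point entering the interval, and insufficient propagation) gives the overall $1-\delta$ guarantee. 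The \textbf{upper bound} $\varepsilon^*_i(t)\le m/n$ is the more routine direction: it expresses that $Z_i(t)$ cannot be more robust than its limit $\bar x_\alpha$, whose breakdown point is $m/n$, and I would establish it by exhibiting a corruption of $m$ points that drives $Z_i(t)$ past $\tau$, using the convergence of \cref{thm:gotrim}.

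The main obstacle I anticipate is making the rank-concentration control \emph{uniform over all admissible corruptions} $S'_p$ simultaneously: the contaminated sample changes both the true ranks and the target, so the rank $R_k(t)$ appearing in the bound must be the rank in the contaminated data, and the worst-case adversary (trading off value magnitude against proximity to the boundary) must be shown to be the one placing a large value at true rank $n-p+1$. Cleanly threading the two time scales---ensuring the weights have stabilized by $t-T$ \emph{and} that the remaining $T$ steps suffice for propagation---while keeping both failure probabilities below $\delta$ so that they combine into the single threshold $K(\delta)/\sqrt{t-T}$, is where the bookkeeping is most delicate.
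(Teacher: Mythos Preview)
Your proposal is essentially the paper's proof: the paper decomposes into the same two lemmas---an ``instant breakdown point'' lemma (Lemma~E.1) controlling the probability that a corrupted point at the boundary rank enters $I_{n,\alpha}$ via Markov applied to the expected-gap bound of \cref{thm:expected-gap}, and a ``propagation time'' lemma (Lemma~E.2) obtained from the high-probability gossip contraction of Lemma~B.3---and then combines them by a union bound, exactly as you outline. One small correction: the factor $(1-\tfrac{1}{n})$ in $K(\delta)$ does not arise from absorbing a discrepancy between $\phi((r_k-1)/n)$ and $\phi((m-1)/n)$; it comes from the paper's specific allocation $\delta_1=(1-\tfrac{1}{n})\delta$ to the rank-inclusion event and $\delta_2=\tfrac{\delta}{n}$ to the propagation event, which is also what produces the $n$ inside the logarithm in $T=\tfrac{4}{c}\log(n/(\varepsilon\delta))$.
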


\textit{Proof.} Applying Lemma 1, we estimate the breakdown point while accounting for the uncertainty in rank estimation, and combine it with Lemma 2 via a union bound. Lemma 2 introduces \( T \) the number of iterations required for the mean to propagate and the maximum delay \( T \) corresponds to the smallest possible \(\varepsilon\), given by \(\varepsilon = \frac{\tau}{B}\), where \( B = \max_i |X_i| \).  Setting $\delta_1 = \left(1 -\frac{1}{n}\right) \delta$ and $\delta_2 = \frac{\delta}{n}$ complete the proof.

\paragraph{Lemma E.1 (Instant Breakdown Point).} \textit{Let $\delta_1>0$. Denote $\tilde \varepsilon^*(t)$ the breakdown point at iteration $t$ as the statistic defined as 
$$ \frac{1}{n}\sum_{i=1}^{n} w_{n, \alpha}(R_i(t)) X_i.$$  Then, with probability at least \( 1 - \delta_1 \), 
\[
\tilde \varepsilon^*(t) \geq \frac{\max\left( \left\lfloor m + \frac{1}{2} - \frac{c_m}{\delta_1 \sqrt{ct}} \right\rfloor, 0 \right)}{n},
\]
where we define \( m = \lfloor n\alpha \rfloor \) and $c_m =\sqrt{3}n^{3/2}\cdot\phi((m-1)/n)$. As \( t \to \infty \), note that \( p = \lfloor \frac{1}{2} + m \rfloor = m \), which allows us to recover the breakdown point of the \(\alpha\)-trimmed mean.}

\textit{Proof.} We are interested in the observations \( X_k \) with rank \( 1 \leq r_k \leq m \) (the outliers that should be excluded from the mean), where \( m = \lfloor n\alpha \rfloor \). Note that we do not consider the data points with rank \( n-m+1 \leq r_k \leq n \), as this case is symmetrical for determining the breakdown point.

Let \( 1 \leq r_k \leq m \) and consider the probability \( p_k(t) \) of including \( X_k \) in the mean. This probability satisfies:
\[
p_k(t) \leq \frac{\sqrt{3}\sigma_n(r_k)}{(a - r_k)\sqrt{ct}} \leq \frac{c_m}{(a - r_k)\sqrt{ct}},
\]
where \( a = m + \frac{1}{2} \) is the left endpoint of the inclusion interval. The breakdown point can be interpreted as $\frac{p}{n}$ where $p$ is the maximum rank required to "break" the mean and \( n \) is the sample size. To determine the maximum \( r_k \) in \( [1, m] \) such that the probability of inclusion is lower than a certain confidence parameter \( \delta \), we solve for $r_k$:
\[
p_k(t) \leq \frac{c_m}{(a - r_k)\sqrt{ct}} \leq \delta,
\]
and obtain
\[
p = \max \left( \left\lfloor \frac{1}{2} + m - \frac{c_m}{\delta \sqrt{ct}} \right\rfloor, 0 \right).
\]
Thus, with probability at least \( 1 - \delta \), the breakdown point is given by \( \tilde \varepsilon^*(t) = \frac{p}{n}.\)

\paragraph{Lemma E.2 (Propagation Time). } \textit{Let $\delta_2>0$ be a confidence parameter and $\varepsilon>0$ a tolerance parameter. We define \(\mathbf{Z}(\cdot)\) as the evolution of the standard mean estimate, initialized as \(\mathbf{Z}(0) = \mathbf{X}\). Additionally, we introduce a perturbed estimate, \(\mathbf{\tilde{Z}}(\cdot)\), which starts at \(\mathbf{\tilde{Z}}(0) = \mathbf{X} + B \mathbf{e}_j\). At iteration \(t\), we detect a mistake of magnitude \(B\), we correct it by injecting \(-B\), leading to the update  \( \mathbf{\tilde{Z}}(t) \gets \mathbf{\tilde{Z}}(t) - B \mathbf{e}_j\). Then, it follows that, with probability at least \( 1-\delta_2 \), for all $s\geq t + T$, for any node $i$,
\[
|Z_i(s)| \leq \| \mathbf{\tilde{Z}}(s) - \mathbf{Z}(s)\| \leq \varepsilon B, 
\]
where $T$ represents number of iterations required to correct a mistake with tolerance $\varepsilon>0$ and is given by $T = \frac{4}{c} \log\left(\frac{1}{\varepsilon\delta_2 }\right)$ with $c$ is the connectivity of the graph.}
 
\textit{Proof.} At iteration \( t \), the perturbed estimate is   $\mathbf{\tilde{Z}}(t) = \mathbf{W}_2(t{:}) \mathbf{\tilde{Z}}(0) = \mathbf{Z}(t) + \mathbf{W}_2(t{:}) B \mathbf{e}_j$
since \( \mathbf{Z}(t) = \mathbf{W}_2(t:) \mathbf{Z}(0) \).  
Then, at iteration \( s > t\),  we inject $-B$ at node $j$ which gives the following:
 $\mathbf{\tilde{Z}}(s) = \mathbf{W}_2(s{:}t) \left[ \mathbf{Z}(t) + \mathbf{W}_2(t{:}) B \mathbf{e}_j -  B \mathbf{e}_j\right]$.
Thus,  for any \( s > t\),
$$\mathbf \Delta(s) := \mathbf{\tilde{Z}}(s) - \mathbf{Z}(s) = \mathbf{W}_2(s{:}t) \left[ \mathbf{W}_2(t{:}) - \mathbf I_n \right] B \mathbf{e}_j = \mathbf{W}_2(s) \mathbf \Delta(s-1). $$
Following the proof in \cite{boyd2006randomized}, by repeatedly conditioning, we obtain
$$\mathbb{E}[\mathbf{\Delta}(s)^{\top} \mathbf{\Delta}(s) \mid \mathbf{\Delta}(t)] \leq \lambda_2(2)^{s-t} \|\mathbf{\Delta}(s)\|^2.$$
Since $ \|\mathbf{\Delta}(s)\| \leq B$ and taking the expectation on both sides, we have  
$$\mathbb{E}[\mathbf{\Delta}(s)^{\top} \mathbf{\Delta}(s)] \leq \lambda_2(2)^{s-t} B^2.$$

Finally, we derive
\begin{align}
\mathbb{P}(\| \mathbf{\tilde{Z}}(s) - \mathbf{Z}(s)\| \geq \varepsilon B ) 
&= \mathbb{P}(\| \mathbf{\tilde{Z}}(s) - \mathbf{Z}(s)\|^2 \geq \varepsilon^2 B^2 )\\
&\leq \frac{1}{\varepsilon^2 B^2}\mathbb{E}[\mathbf{\Delta}(s)^{\top} \mathbf{\Delta}(s)]  \\
&\leq \frac{\lambda_2(2)^{s-t}}{\varepsilon^2}       .
\end{align}
Let $\delta_2>0$. We need to solve $\frac{1}{\varepsilon^2}e^{-c_2T} \leq \delta$, which gives
 \(T = \frac{4}{c} \log\left(\frac{1}{\varepsilon\delta_2 }\right)\). It follows that with probability at least \( 1-\delta_2 \), for $s\geq t + T$, for all nodes $i$,
\[
|Z_i(t)| \leq \| \mathbf{\tilde{Z}}(s) - \mathbf{Z}(s)\| \leq \varepsilon B. 
\]

\section{Convergence Analysis of \textit{ClippedGossip}}

In this section, we show that \textit{ClippedGossip}, in our pairwise setup with a fixed clipping radius, ultimately converges to the corrupted mean. The following lemma describes the update rule of \textit{ClippedGossip} through its transition matrix.

\textbf{Lemma (Transition Matrix).}  \textit{Assume at iteration \( t \), edge \( (i, j) \in E \) is selected. Then, the update using \textit{ClippedGossip} rule with constant clipping radius $\tau$ is given by \(\boldsymbol{x}^{t+1} = \boldsymbol W(t) \boldsymbol{x}^t \) where \( \boldsymbol W(t) \in \mathbb{R}^{n \times n} \) denotes the transition matrix at iteration $t$ which is given by 
\[
\boldsymbol W(t) = \boldsymbol I_n - \alpha_{ij}^t \boldsymbol L_{ij},
\]
where \( \boldsymbol L_{ij} := (\boldsymbol {e}_i - \boldsymbol{e}_j)(\boldsymbol{e}_i - \boldsymbol{e}_j)^\top \) is the elementary Laplacian associated with edge \( (i, j) \) and \( \alpha_{ij}^t := \frac{1}{2}\min\left(1, \tau/\|{x}_i^t - {x}_j^t\|\right)\) a constant that depends on the previous estimates. Moreover, we have \( \boldsymbol W(t) \boldsymbol W(t)^\top = \boldsymbol I_n - 2\alpha_{ij}^t(1-\alpha_{ij}^t) \boldsymbol L_{ij}\).
In the special case \( \alpha_{ij} = 1/2 \), the matrix reduces to the standard averaging and is a projection matrix: \( \boldsymbol W(t) \boldsymbol W(t)^\top = \boldsymbol W(t)\) .} 

\textit{Proof.} At iteration \( t \), if edge \( (i, j) \in E \) is selected, the update rule is
\[
{x}_i^{t+1} = {x}_i^t + \frac{1}{2} \operatorname{CLIP}({x}_j^t - {x}_i^t, \tau), \quad
{x}_j^{t+1} = {x}_j^t + \frac{1}{2} \operatorname{CLIP}({x}_i^t - {x}_j^t, \tau),
\]
where the clipping operator is defined as \( \operatorname{CLIP}({z}, \tau) := \min\left(1, \tau/\|{z}\|\right){z}.\)
This update can be expressed in matrix form as \( \boldsymbol{x}^{t+1} = \boldsymbol W(t) \boldsymbol {x}^t\). It equals the identity matrix except for a \(2 \times 2\) block corresponding to nodes \(i\) and \(j\), given by
\[
\begin{bmatrix}
1 - \alpha_{ij}^t & \alpha_{ij}^t \\
\alpha_{ij}^t & 1 - \alpha_{ij}^t
\end{bmatrix},
\quad \text{where} \quad
\alpha_{ij}^t := \frac{1}{2}\min\left(1, \frac{\tau}{\|{x}_i^t - {x}_j^t\|}\right).
\]
Observe that \( \boldsymbol L_{ij}^2= 2\boldsymbol L_{ij}^2 \) and conclude.

The next lemma allows us to bound the coefficients \(\alpha_{ij}^t\) of the transition matrix.

\textbf{Lemma (Lower Bound on Coefficient of the Transition Matrix). } \textit{Let  $m^t=\max_{(k,l) \in E} \|{x}_k^t - {x}_l^t\|$. We have $m^t \leq m$ and thus we derive
\[
\alpha \leq \min\left(1, \frac{\tau}{\|{x}_i^t - {x}_j^t\|}\right) \leq 1,
\]
where $\alpha:= \min(1, \tau / m)/2$ with $m=\max_{(k,l) \in E} \|{x}_k^0 - {x}_l^0\|$. 
}

\textit{Proof.} Let  $m^1=\max_{(k,l) \in E} \|{x}_k^1 - {x}_l^1\|$. We can show that $m^1 \leq m$. After update, we have $x_i^1 = (1 - \alpha_{ij}) x_i ^0+ \alpha_{ij} x_j^0$ and $x_j^1 = (1 - \alpha_{ij}) x_j ^0+ \alpha_{ij} x_i^0$. Observe that the new estimates remain within the convex hull of the original points. Since only $x_i$ and $x_k$ have changed, the next maximum distance is:
$$
m^1 = \max\left\{
\max_{\substack{k \ne i,j \\ \ell \ne i,j}} \|x_k^0 - x_\ell^0\|, 
\max_{k \ne i,j} \|x_i^1 - x_k^0\|, 
\max_{k \ne i,j} \|x_j^1 - x_k^0\|, 
\|x_i^1 - x_j^1\|
\right\}.
$$
Note: $\|x_k^0 - x_\ell^0\| \le m$ by definition. So we just need to show that the other terms are smaller than $m$. For $k \ne i, j$, recall $x_i^1 = (1 - \alpha_{ij})x_i^0 + \alpha_{ij} x_j^0,$ then we have
$$
\|x_i^1 - x_k^0\| = \| (1 - \alpha_{ij})(x_i^0 - x_k^0) + \alpha(x_j^0 - x_k^0) \| \leq (1 - \alpha_{ij})m + \alpha_{ij} m = m,
$$
by the triangle inequality. Similarly, we derive for $k \ne i, j$, \(\|x_j^1 - x_k^0\| \leq m.\)
Finally, since $0 < 1 - 2\alpha_{ij} < 1$, we have
$$
\|x_i^1 - x_j^1\| = |1 - 2\alpha_{ij}| \cdot \|x_i^0 - x_j^0\| \le m.
$$
Recursively, we have for all $t\geq 1$, $m^t \leq m$, which finishes the proof.

The next lemma shows that at each iteration, the expected gap between the estimates and the corrupted mean will decrease.

\textbf{Lemma (Contraction Bound).} \textit{Let $t>0$. Define $\boldsymbol{y}^{t} =\boldsymbol{x}^{t} - {\bar{x}} \boldsymbol{1}_n $ where \( \bar{x} := (1/n)\sum_{i=1}^n x_i^0 \) is the initial average. We have the following bound:
\[
\mathbb{E}[\|\boldsymbol{y}^{t+1}\|^2 \mid \boldsymbol{y}^{t}  ] \leq \left(1 - \frac{\beta\lambda_2}{|E|} \right) \|\boldsymbol{y}^{t}\|^2.
\]
where $\beta = 2 \alpha(1-\alpha)$ with $\alpha:= \min(1, \tau / m)/2$ for $m=\max_{(k,l) \in E} \|{x}_k^0 - {x}_l^0\|$ and $\lambda_2$ denotes the spectral gap of the Laplacian. }

\textit{Proof.} Taking the conditional expectation over the sampling process, we have
\[
\mathbb{E}[ (\boldsymbol{y}^{t+1})^\top{\boldsymbol y}^{t+1} \mid {\boldsymbol y}^{t}  ] = (\boldsymbol{y}^{t})^\top\mathbb{E}[\boldsymbol W(t)^\top \boldsymbol W(t)  \mid {\boldsymbol y}^t] \boldsymbol {y}^{t}.
\]
Since $\boldsymbol {y}^{t} \perp \mathbf{1}$, and $\mathbf{1}$ is the eigenvector corresponding to the largest eigenvalue 1 of $\mathbb{E}[\boldsymbol W(t)^\top \boldsymbol W(t)  \mid \boldsymbol {y}^t] $,
$$
(\boldsymbol{y}^{t})^\top\mathbb{E}[\boldsymbol W(t)^\top \boldsymbol W(t)  \mid {\boldsymbol y}^t] \boldsymbol {y}^{t} \leq \lambda_2\left(\mathbb{E}[\boldsymbol W(t)^\top \boldsymbol W(t)  \mid {\boldsymbol y}^t]\right)\|\boldsymbol{y}^{t}\|^2 .
$$
Now from the first lemma, we have \( \boldsymbol W(t)^\top \boldsymbol W(t) = \boldsymbol I_n - \beta_{ij}^t \boldsymbol L_{ij}\) where $\beta_{ij}^t = 2\alpha_{ij}^t(1-\alpha_{ij}^t)$. We see that \( \beta \leq \beta_{ij}^t \leq 1/2 \) where $\beta = 2 \alpha(1-\alpha)$.
Observing that $\mathbb{E}[\boldsymbol W(t)^\top \boldsymbol W(t)  \mid {y}^t] =  \boldsymbol I_n - (1/|E|) \tilde{\boldsymbol L}^t$ where we define the weighted Laplacian as
\[
\tilde{\boldsymbol L}^t := \sum_{(i,j) \in E} \beta_{ij}^t \boldsymbol L_{ij}.
\]
Using the quadratic form of the weighted Laplacian:
\[
{\boldsymbol x}^\top \boldsymbol L(w) {\boldsymbol x} = \sum_{(i,j) \in E} w_{ij} [x(i) - x(j)]^2,
\]
we derive $\beta \boldsymbol L \preceq \tilde{\boldsymbol L}^t \preceq (1/2) \boldsymbol L$, as  \( \beta \leq \beta_{ij}^t \leq 1/2 \). Using the Courant-Fischer Theorem, we obtain: $\beta \lambda_2 \leq \rho^t \leq \lambda_2/2$, where \( \rho^t \) denotes the second smallest eigenvalue of \( \tilde{\boldsymbol L}^t \) and \( \lambda_2 \) is the second smallest eigenvalue of the (unweighted) Laplacian \(\boldsymbol L \). We therefore obtain a bound on the second largest eigenvalue of $\mathbb{E}[\boldsymbol W(t)^\top \boldsymbol W(t)  \mid {\boldsymbol y}^t]$:
\[
1 - \frac{\lambda_2}{2|E|} \leq 1 - \frac{\rho^t}{|E|} \leq 1 - \frac{\beta \lambda_2}{|E|},
\]
which finishes the proof.

Finally, the convergence of \textit{ClippedGossip} estimates is established in the following proposition.

\textbf{Proposition (Convergence of \textit{ClippedGossip} Estimates).} \textit{For each node $k$, 
\[\lim _{t \rightarrow+\infty}  {x_k}({t}) = \bar x.\]
Moreover, we have
\[
\mathbb{E}[\|\boldsymbol{x}^{t} - {\bar{x}} \boldsymbol{1}_n\|^2] \leq \left(1 - \frac{\beta\lambda_2}{|E|} \right)^t\|\boldsymbol{x}^{0} - {\bar{x}} \boldsymbol{1}_n\|^2,
\]
where \( \bar{x} := (1/n)\sum_{i=1}^n x_i^0 \) is the initial average, $\beta = 2 \alpha(1-\alpha)$ with $\alpha:= \min(1, \tau / m)/2$ and $m=\max_{(k,l) \in E} \|{x}_k^0 - {x}_l^0\|$ and $\lambda_2$ denotes the spectral gap of the Laplacian.}

\textit{Proof.} Using the previous lemma, recursively, we obtain
\[
\mathbb{E}[\|\boldsymbol{y}^{t}\|^2] \leq \left(1 - \frac{\beta\lambda_2}{|E|} \right)^t\|\boldsymbol{y}^{0}\|^2.
\]
Recalling $\boldsymbol{y}^{t} =\boldsymbol{x}^{t} - \boldsymbol{\bar{x}} \boldsymbol{1}_n $ finishes the proof.

\section{Additional Experiments and Implementation Details}

This section provides additional experiments and more details on the Basel Luftklima dataset as well as compute resources.

\subsection{Experiments Compute Resources}

The experiments are run on a single CPU with 32~GB of memory. 

The execution time for each experiment is less than 30 minutes (except for the large-scale experiments). The details of a few experiments are given in \cref{table:compute}.
\begin{table}[htbp]
\centering
\begin{tabular}{|l|l|l|}
\hline
\textbf{Experiments} & \textbf{Execution Time} & \textbf{Figure} \\
\hline
exp1+exp2+exp3 & $\sim$ 30 min & Ranking (a) \\
exp4+exp5+exp6 & $\sim$ 5 min & Ranking (b) \\
exp7+exp8+exp9 & $\sim$ 15 min & Ranking (c) \\
exp10+exp10a+exp10b & $\sim$ 50 min & Trimmed Mean (a) \\
exp11+exp12+exp13 & $\sim$ 15 min & Trimmed Mean (b) \\
exp14 & $\sim$ 5 min & Trimmed Mean (c) \\
exp15+exp16+exp17 & $\sim$ 5 min & Ranking (d) \\
\hline
\end{tabular}
\vspace{2mm}
\caption{Execution times for all experiments}
\label{table:compute}
\end{table}

\subsection{Basel Luftklima Dataset}

The dataset contains temperature measurements from 99 Meteoblue sensors across the Basel region, recorded between April 14 and April 15, 2025.
For each sensor, only the first observation is used. A graph is built by connecting sensors that are within 1 km of each other, based on their geographic coordinates. Only the connected component of the graph is kept. To avoid ties during ranking, we add a small, imperceptible amount of noise to the data.

\subsection{Additional Experiments}

Figures \ref{fig:rank_a_main}, \ref{fig:rank_c_main}, and \ref{fig:trim_b_main} extend the experiments presented in sections 3 and 4. 

\begin{figure}[htbp]
    \centering
    % First subfigure
    \begin{subfigure}[b]{0.3\textwidth}
        \centering
        \includegraphics[width=1\textwidth]{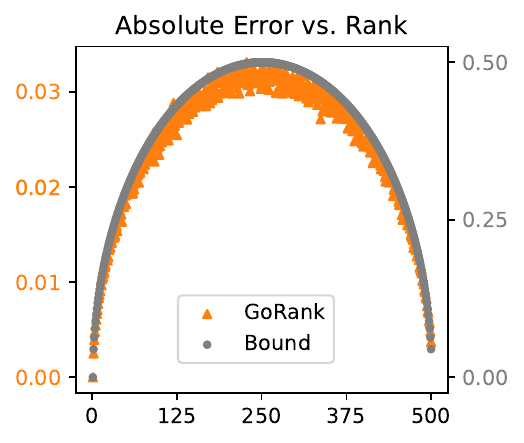}
        \caption{Complete Graph}
        \label{fig:rank_a_1}
    \end{subfigure}
    \hfill
    % Second subfigure
    \begin{subfigure}[b]{0.3\textwidth}
        \centering
        \includegraphics[width=1\textwidth]{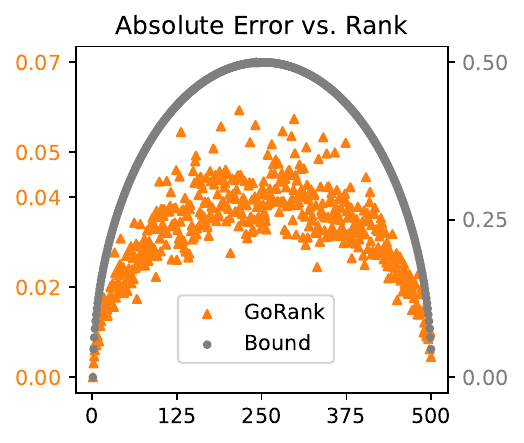}
        \caption{Watts-Strogatz Graph}
        \label{fig:rank_a_2}
    \end{subfigure}
    \hfill
    % Third subfigure
    \begin{subfigure}[b]{0.3\textwidth}
        \centering
        \includegraphics[width=1\textwidth]{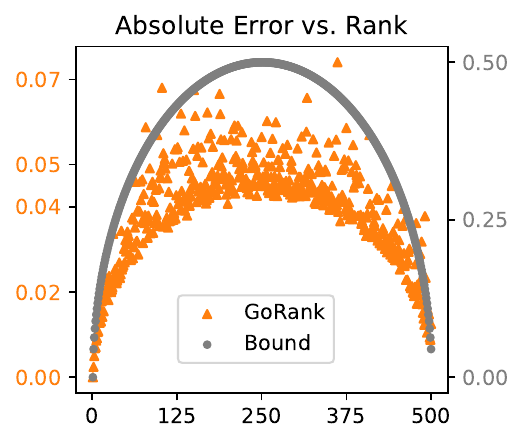}
        \caption{2D Grid Graph}
        \label{fig:rank_a_3}
    \end{subfigure}
   \caption{Role of \(\phi\) for three different communication graphs}
    \label{fig:rank_a_main}
\end{figure}

\begin{figure}[htbp]
    \centering
    % First subfigure
    \begin{subfigure}[b]{0.3\textwidth}
        \centering
        \includegraphics[width=1\textwidth]{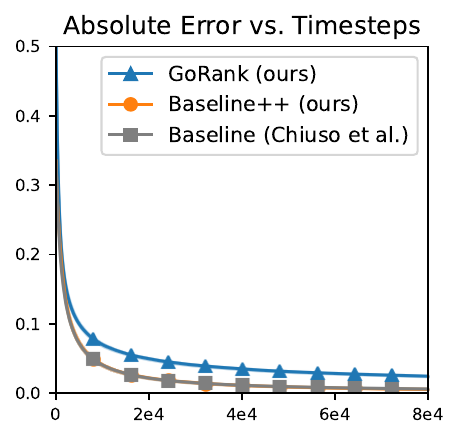}
        \caption{Complete Graph}
        \label{fig:rank_c_1}
    \end{subfigure}
    \hfill
    % Second subfigure
    \begin{subfigure}[b]{0.3\textwidth}
        \centering
        \includegraphics[width=1\textwidth]{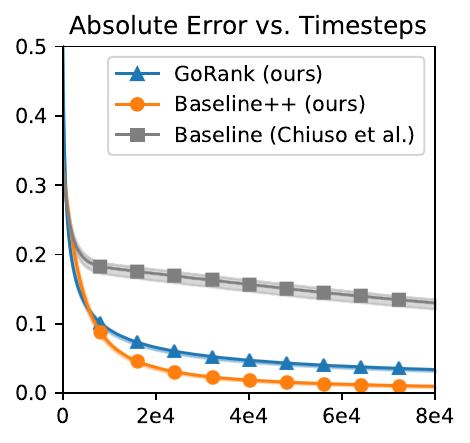}
        \caption{Watts-Strogatz Graph}
        \label{fig:rank_c_2}
    \end{subfigure}
    \hfill
    % Third subfigure
    \begin{subfigure}[b]{0.3\textwidth}
        \centering
        \includegraphics[width=1\textwidth]{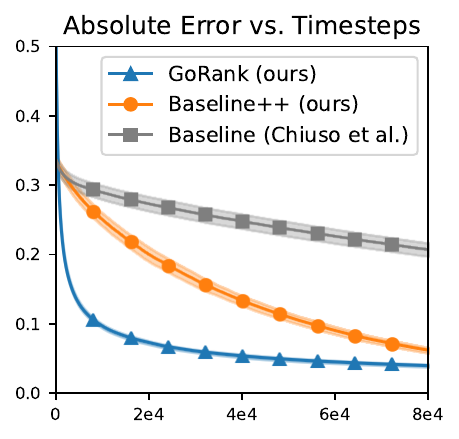}
        \caption{2D Grid Graph}
        \label{fig:rank_c_3}
    \end{subfigure}
   \caption{Comparison of ranking algorithms on three different communication graphs}
    \label{fig:rank_c_main}
\end{figure}

% \begin{figure}[htbp]
%     \centering
%     % First subfigure
%     \begin{subfigure}[b]{0.3\textwidth}
%         \centering
%         \includegraphics[width=1\textwidth]{}
%         \caption{Complete Graph}
%         \label{fig:trim_a_1}
%     \end{subfigure}
%     \hfill
%     % Second subfigure
%     \begin{subfigure}[b]{0.3\textwidth}
%         \centering
%         \includegraphics[width=1\textwidth]{}
%         \caption{Watts-Strogatz Graph}
%         \label{fig:trim_a_2}
%     \end{subfigure}
%     \hfill
%     % Third subfigure
%     \begin{subfigure}[b]{0.3\textwidth}
%         \centering
%         \includegraphics[width=1\textwidth]{}
%         \caption{2D Grid Graph}
%         \label{fig:trim_a_3}
%     \end{subfigure}
%    \caption{Role of \(\sigma_k^2/\gamma_k^2\) for three different communication graphs}
%     \label{fig:trim_a_main}
% \end{figure}

\begin{figure}[htbp]
    \centering
    % First subfigure
    \begin{subfigure}[b]{0.3\textwidth}
        \centering
        \includegraphics[width=1\textwidth]{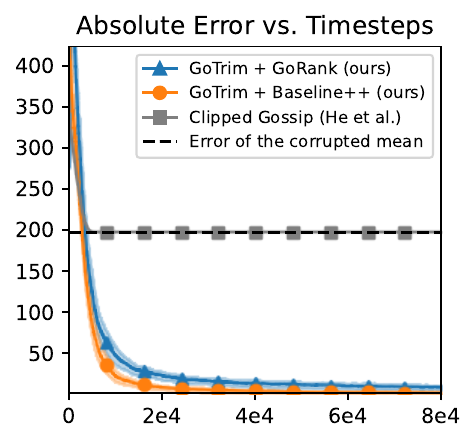}
        \caption{Complete Graph}
        \label{fig:trim_b_1}
    \end{subfigure}
    \hfill
    % Second subfigure
    \begin{subfigure}[b]{0.3\textwidth}
        \centering
        \includegraphics[width=1\textwidth]{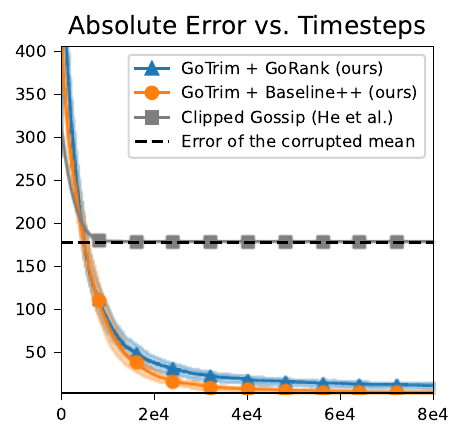}
        \caption{Watts-Strogatz Graph}
        \label{fig:trim_b_2}
    \end{subfigure}
    \hfill
    % Third subfigure
    \begin{subfigure}[b]{0.3\textwidth}
        \centering
        \includegraphics[width=1\textwidth]{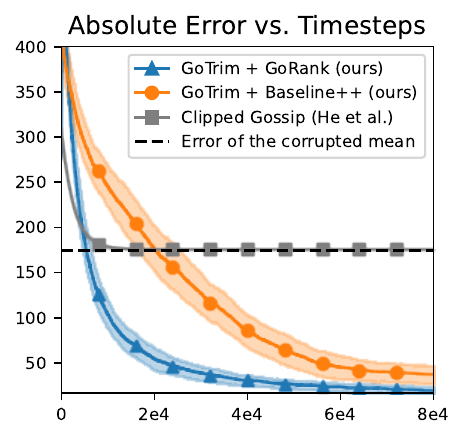}
        \caption{2D Grid Graph}
        \label{fig:trim_b_3}
    \end{subfigure}
   \caption{Comparison of gossip algorithms for robust mean estimation on three different graphs}
    \label{fig:trim_b_main}
\end{figure}

\section{Asynchronous Variant of \textsc{GoRank}}

\begin{algorithm}[htbp]
        \caption{Asynchronous GoRank}
        \label{alg:async-gorank}
        \begin{algorithmic}[1]
        \STATE \textbf{Init:} For each \(k\in[n]\), \(Y_k \gets X_k\), \(R'_k \gets 0\), $C_k \gets 0$.  
        \FOR{\(t=1, 2, \ldots\)}
        \STATE Draw \((i, j) \in E\) uniformly at random.
        \FOR{\(p\in \{i, j\}\)}
        \STATE Set $C_p \gets C_p + 1$.
        \STATE Set $R'_p \leftarrow(1-1/C_p) R'_p +(1/C_p) \mathbb{I}_{\{X_p > Y_p\}}$.
        \STATE Update rank estimate: \(R_p \leftarrow n R'_p + 1\).
        \ENDFOR
         \STATE Swap auxiliary observation: \(Y_i \leftrightarrow Y_j\). 
        \ENDFOR
        \STATE \textbf{Output:} Estimate of ranks \(R_k\). 
        \end{algorithmic}
\end{algorithm}

In this section, we propose an asynchronous variant of \textsc{GoRank} that operates without access to a global clock or a shared iteration counter. Instead, each node $k$ maintains a local counter $C_k$ which tracks the number of times it has participated in an update. Asynchronous \textsc{GoRank}, outlined in \cref{alg:async-gorank} proceeds as follows. When node $k$ is selected, it increments $C_k$ and updates its local rank estimate using a running average, where the global iteration $t$ is replaced by $C_k$. As in the synchronous version of \textit{GoRank}, selected nodes also exchange their auxiliary observations. Note that this asynchronous version is more efficient, as it significantly reduces the number of updates---performing only two updates per iteration instead of $n$. 

\cref{fig:rank_d_main} shows a comparison of Asynchronous GoRank with our two other ranking algorithms: Synchronous GoRank and Baseline++. The results suggest that Asynchronous GoRank converges slightly faster than Synchronous GoRank, thereby outperforming it in both speed and efficiency.

\begin{figure}[htbp]
    \centering
    % First subfigure
    \begin{subfigure}[b]{0.3\textwidth}
        \centering
        \includegraphics[width=1\textwidth]{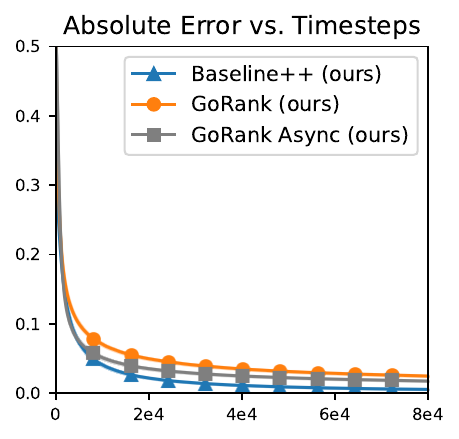}
        \caption{Complete Graph}
        \label{fig:rank_d_1}
    \end{subfigure}
    \hfill
    % Second subfigure
    \begin{subfigure}[b]{0.3\textwidth}
        \centering
        \includegraphics[width=1\textwidth]{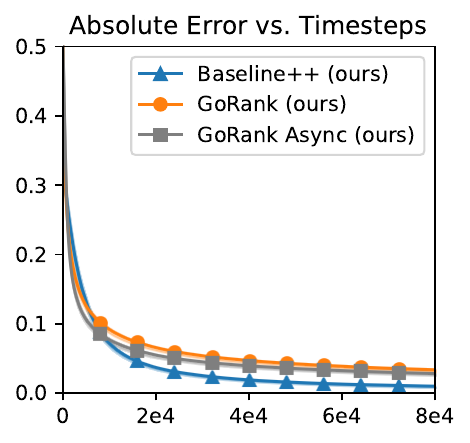}
        \caption{Watts-Strogatz Graph}
        \label{fig:rank_d_2}
    \end{subfigure}
    \hfill
    % Third subfigure
    \begin{subfigure}[b]{0.3\textwidth}
        \centering
        \includegraphics[width=1\textwidth]{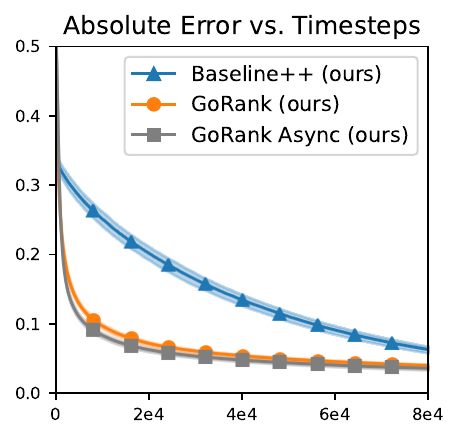}
        \caption{2D Grid Graph}
        \label{fig:rank_d_3}
    \end{subfigure}
   \caption{Comparison of Asynchronous GoRank with other ranking algorithms. Results show that Asynchronous GoRank slightly converges faster than the synchronous version.}
    \label{fig:rank_d_main}
\end{figure}

Note that the convergence analysis in the asynchronous case is more complex because it requires analyzing the ratio of two statistics. Nonetheless, though technically more demanding, it is possible to derive convergence rate bounds using Taylor expansions techniques. We will carry out such an analysis in an extension to this work. 

\section{Large-scale Experiments}

To demonstrate the scalability of our method on large networks, we repeated the experiments from Fig.~(c) in Sections~3.2 and~4.3, originally conducted with $n=500$, on larger networks with $n = 1000$ and $n = 5000$.  

For the experiments related to Section~3.2, the results lead to the same conclusions: \cref{fig:rank_xl} shows that GoRank continues to achieve a low error ($< 0.1$), and the overall performance trends of the other ranking algorithms remain similar on the Watts–Strogatz graph but are significantly worse on the 2D grid graph. These additional results further reinforce GoRank's scalability and robustness, particularly in comparison to the other methods.  

For the experiments related to Section~5.4, \cref{fig:trim_xl} shows that GoTrim~+~GoRank continues to converge efficiently to the trimmed mean, even on larger 2D grid graphs, clearly outperforming the corrupted mean. In contrast, GoTrim~+~Baseline++ converges significantly more slowly on the 2D grid (and does not even outperform the corrupted mean), although it remains efficient on Watts–Strogatz graphs.

\begin{figure}[htbp]
    \centering
    % First subfigure
    \begin{subfigure}[b]{0.3\textwidth}
        \centering
        \includegraphics[width=1\textwidth]{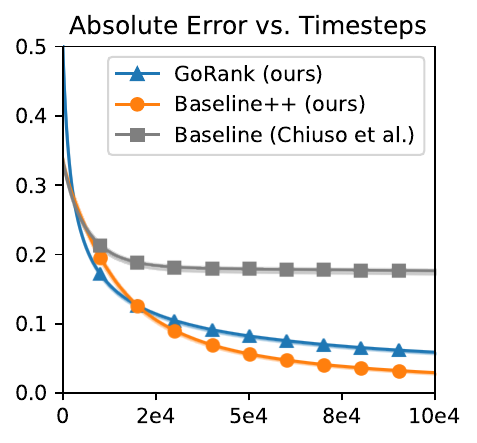}
        \caption{Watts-Strogatz with $n=1000$}
        \label{fig:rank_xl_1}
    \end{subfigure}
    \hfill
    % Second subfigure
    \begin{subfigure}[b]{0.3\textwidth}
        \centering
        \includegraphics[width=1\textwidth]{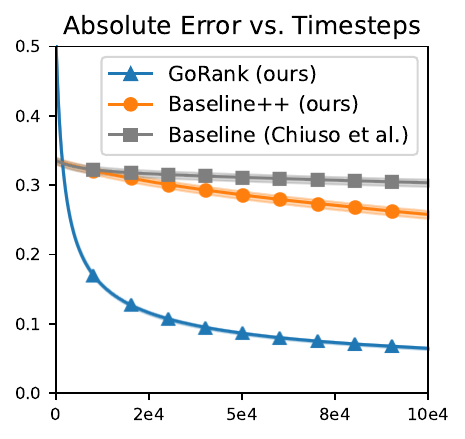}
        \caption{2D grid with $n=1000$}
        \label{fig:rank_xl_2}
    \end{subfigure}
    \hfill
    \begin{subfigure}[b]{0.3\textwidth}
        \centering
        \includegraphics[width=1\textwidth]{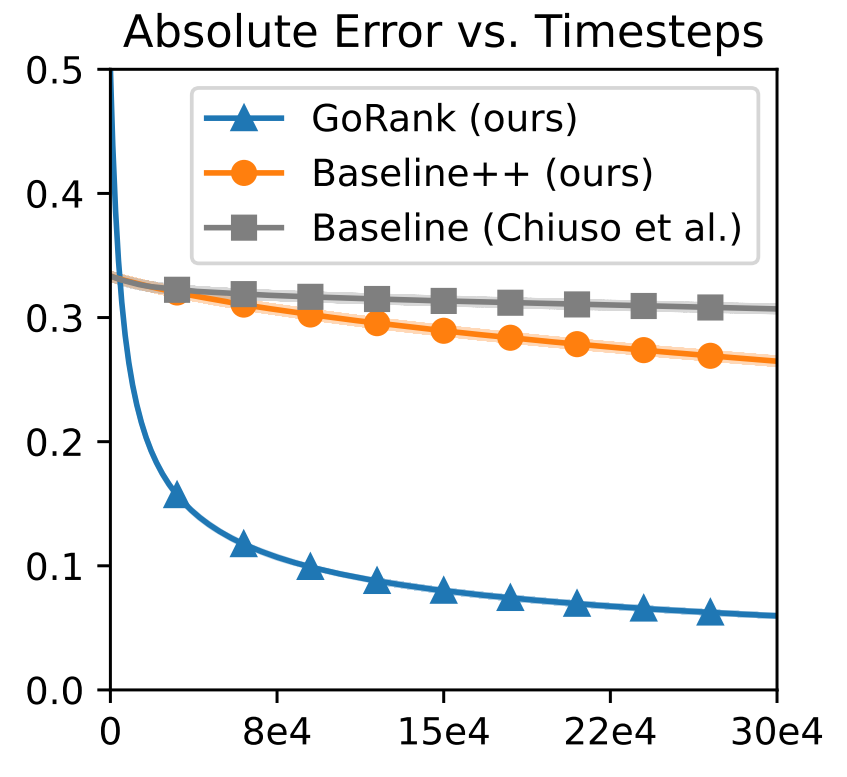}
        \caption{2D grid with $n=5000$}
        \label{fig:rank_xl_3}
    \end{subfigure}
    
   \caption{Comparison of the performance of gossip algorithms for ranking across large networks.}
    \label{fig:rank_xl}
\end{figure}

\begin{figure}[htbp]
    \centering
    % First subfigure
    \begin{subfigure}[b]{0.3\textwidth}
        \centering
        \includegraphics[width=1\textwidth]{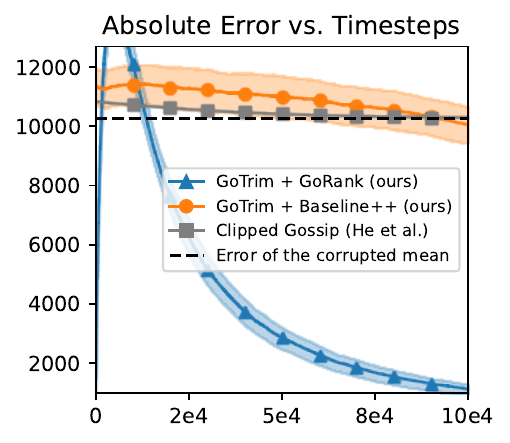}
        \caption{Watts-Strogatz with $n=1000$}
        \label{fig:trim_xl_1}
    \end{subfigure}
    \hfill
    % Second subfigure
    \begin{subfigure}[b]{0.3\textwidth}
        \centering
        \includegraphics[width=1\textwidth]{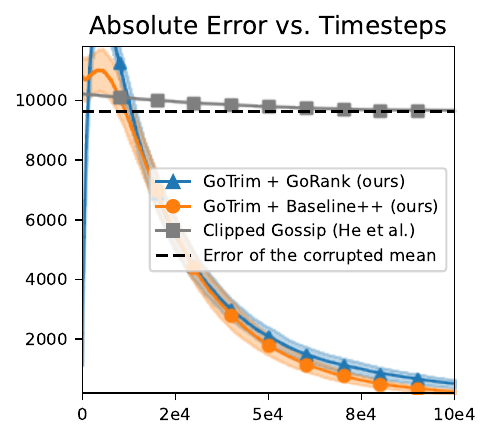}
        \caption{2D grid with $n=1000$}
        \label{fig:trim_xl_2}
    \end{subfigure}
    \hfill
    \begin{subfigure}[b]{0.3\textwidth}
        \centering
        \includegraphics[width=1\textwidth]{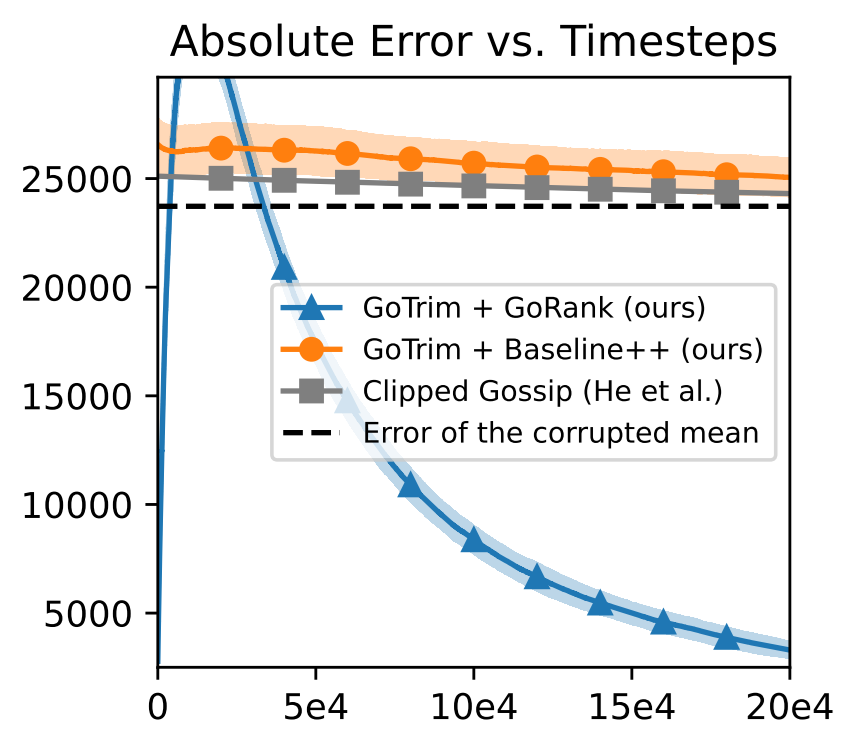}
        \caption{2D grid with $n=5000$}
        \label{fig:trim_xl_3}
    \end{subfigure}

    \caption{Comparison of the performance for trimmed means estimation across large networks.}
    \label{fig:trim_xl}
\end{figure}
\section{Robustness to Network Disruptions}

While robustness to data contamination is important, the robustness of our proposed algorithms to network disruptions (e.g., edge/node failures, network partitioning) is equally crucial in real-world applications.

One natural extension is to introduce a fixed probability of failure for each node or edge (see [2]). This would modify the edge sampling: instead of a uniform distribution, the algorithm would sample edge $e$ with probability $p_e$. In our analysis, this change corresponds to replacing the normalized Laplacian $L/|E|$ with a weighted sum $\sum_e p_e L_e$, where $\sum_e p_e < 1$ due to edge failures and $L_e$ correspond to the elementary Laplacians. The spectral gap, which governs the convergence rate, would now correspond to the connectivity constant $\tilde{c}$ of this weighted Laplacian instead of $c=\lambda_2/|E|$. For instance, if each edge fails independently with probability 0.5, then $p_e = 1/2|E|$ for all $e$, and the effective spectral gap becomes $\tilde{c} = \lambda_2 / (2|E|) < c$, indicating a slower convergence rate. We emphasize that as long as $p_e > 0$ for all $e$, the weighted graph remains connected and non-bipartite if the original graph was.  

Another interesting scenario is network partitioning. One way to model this is by designing graphs composed of tightly connected clusters with only a few inter-cluster edges that are prone to failure. In such a setup, the connectivity constant would degrade significantly, and we expect the convergence rate to reflect this bottleneck. We generated a graph consisting of 500 nodes organized into three well-connected clusters, with only five inter-cluster edges. As expected, we observed a very low connectivity constant $c = 1.86 \times 10^{-6}$ and the convergence behavior of this graph is similar to that of a 2D grid graph, which is consistent with the low overall connectivity.
% . As shown in \cref{fig:clustered},

% \begin{figure}[htbp]
%     \centering
%     % First subfigure
%     \begin{subfigure}[b]{0.4\textwidth}
%         \centering
%         \includegraphics[width=1\textwidth]{distributed/figures/appendix/plot_clustered.pdf}
%         \caption{GoRank }
%         \label{fig:clustered_1}
%     \end{subfigure}
%     \hfill
%     % Second subfigure
%     \begin{subfigure}[b]{0.4\textwidth}
%         \centering
%         \includegraphics[width=1\textwidth]{distributed/figures/appendix/plot_trim_clustered.pdf}
%         \caption{GoTrim}
%         \label{fig:clustered_2}
%     \end{subfigure}
    
%    \caption{Performance of gossip algorithms for estimating rank and trimmed means under network partitioning conditions.}
%     \label{fig:clustered}
% \end{figure}

\section{Experiments on Sparse Graphs}

An interesting question is whether our algorithms performance well on sparse graphs. First, we note that the Watts-Strogatz and 2D grid graphs used in our main experiments are already quite sparse, with both containing fewer than 1000 edges. However, we would like to emphasize that what primarily governs convergence behavior is not sparsity per se, but graph connectivity. For example, a dense graph composed of loosely connected clusters may have poor connectivity, while a sparse graph like a 3-regular graph can exhibit strong connectivity properties. A cycle graph, in addition to being sparse, has very low connectivity and serves as a useful pathological case. To better illustrate the relationship between topology and performance, we conducted additional experiments using three different sparse graphs: 1) Watts-Strogatz graph (connectivity $c = 3.31 \times 10^{-4}$, \~1000 edges), 2) Cycle graph ($c = 3.16 \times 10^{-7}$, 499 edges), 3) 3-regular graph ($c = 2.24 \times 10^{-4}$, 750 edges). All experiments were run for $15 \times 10^4$ iterations. For GoRank, both the Watts-Strogatz and 3-regular graphs achieved fast convergence (absolute error below 0.05), consistent with their strong connectivity properties typical of expander graphs. The cycle graph, as expected, performed worse due to its very low connectivity, but still achieved an error below 0.1. For GoTrim, using the same corruption model as in Fig. 5b, the Watts-Strogatz and 3-regular graphs again performed very well, with errors close to 0. However, the cycle graph exhibited much slower convergence: after $8 \times 10^4$ iterations, the absolute error remained above 40. Doubling the number of iterations reduced the error to below 40. While this is indeed slow, it is important to note that performance remains better than the corrupted mean, and such a topology is highly atypical in real-world sensor networks. In practice, such scenarios would likely require gossip algorithms specifically designed for cycle graphs. These experiments support the conclusion that GoTrim still performs well on sparse graph with good connectivity properties. 

\begin{figure}[htbp]
    \centering
    % First subfigure
    \begin{subfigure}[b]{0.3\textwidth}
        \centering
        \includegraphics[width=1\textwidth]{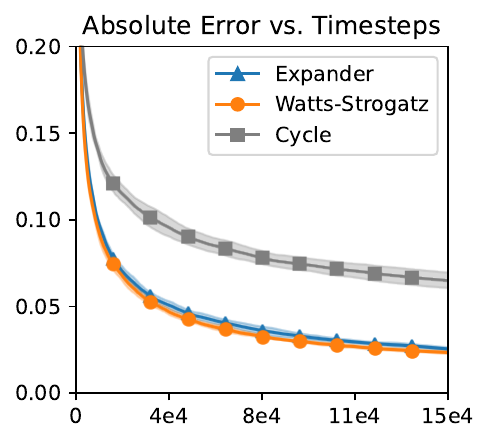}
        \caption{GoRank on sparse graphs}
        \label{fig:sparse_1}
    \end{subfigure}
    \hfill
    % Second subfigure
    \begin{subfigure}[b]{0.3\textwidth}
        \centering
        \includegraphics[width=1\textwidth]{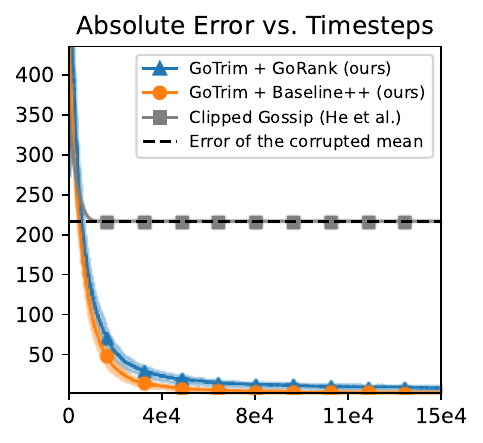}
        \caption{GoTrim on expander graph}
        \label{fig:sparse_2}
    \end{subfigure}
    \hfill
    \begin{subfigure}[b]{0.3\textwidth}
        \centering
        \includegraphics[width=1\textwidth]{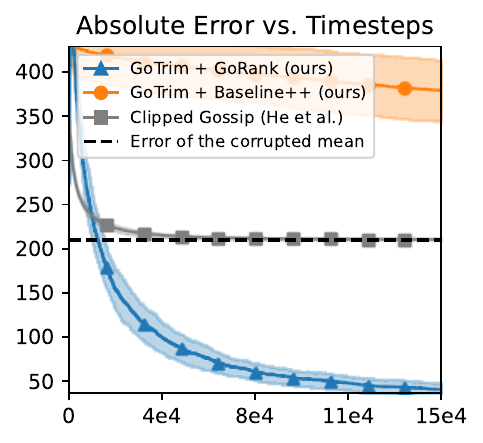}
        \caption{GoTrim on cycle graph}
        \label{fig:sparse_3}
    \end{subfigure}
    
   \caption{Comparison of the performance of GoRank and GoTrim on different sparse graphs.}
    \label{fig:sparse}
\end{figure}

\section{Further Discussion and Future Work}

Our main focus was to develop foundational results for robust decentralized estimation. This section highlights directions for deepening our understanding of the algorithms' properties, extending them to more complex settings, and applying them to related problems.

\paragraph{Optimality of the Bounds.} To the best of our knowledge, there are currently no established lower bounds on the convergence rate for the class of gossip-based algorithms applied to either decentralized ranking or trimmed mean estimation. Thus, the optimality of our algorithms remains an open question. In the special case of a complete graph, we observe that significantly faster convergence than the typical $\mathcal{O}(1/t)$ rate is possible. For example, in the Baseline++ algorithm, which performs direct ranking swaps, we can show exponential convergence of the form $\exp(-t/|E|)$. However, this fast convergence critically relies on the high connectivity of the complete graph and does not generalize well to less connected graphs. In contrast, \textsc{GoRank} demonstrates more robust performance across general graphs, including those with limited connectivity, where achieving $\mathcal{O}(1/t)$ convergence is already non-trivial. Deriving a formal lower bound for arbitrary graphs remains an open and challenging problem. Nevertheless, it can be noted that with the current approach, the \textsc{GoRank} convergence rate of $\mathcal{O}(1/t)$ is tight in the sense that rank estimation involves averaging $t$ random indicator functions. Finally, since \textsc{GoTrim} relies on the rank estimates from \textsc{GoRank}, it inherits the $\mathcal{O}(1/t)$ convergence rate, which is already near-optimal.

\paragraph{Faster Gossip Algorithms.} Designing faster gossip algorithms is an interesting direction, and we have also been exploring this aspect. One promising approach involves optimizing the edge sampling strategy based on the graph connectivity constant $c$ (see \cite{xiao2004fast, boyd2004fastest}). While this strategy improves gossip performance in standard averaging tasks, our empirical experiments suggest that its impact on the \textsc{GoRank} algorithm is more limited. This is likely due to the relatively small variation in $c$, which is insufficient to significantly affect a rate in $1/ct$---though it has a more noticeable effect under a geometric rate.

\paragraph{Extension to Multivariate Data.} Extending the proposed approach to multivariate data is an important direction. A natural extension involves ranking multivariate observations based on their norms. Specifically, one could define a suitable norm depending on the task (e.g., Euclidean), compute the norm for each observation, and then apply our univariate ranking method (GoRank) to these one-dimensional values. Observations with the largest norms can then be treated as potential outliers. Following this, a multivariate version of GoTrim can be defined by discarding the top $k = \lfloor \alpha n \rfloor$ observations with the largest norms, and computing the mean of the remaining points using the standard gossip algorithm. Alternatively, one could explore data depths as a generalization of ranking in multivariate settings (see \cite{mosler2013depth}). Data depth provides a measure of centrality for multivariate data. While this approach is promising, it would require a more in-depth investigation beyond the scope of the current work, since depth computations usually require to solve computationally demanding optimization problems, just like alternative methods recently designed to define multivariate ranks (e.g. based on optimal transport). Finally, we could simply compute the coordinate-wise trimmed mean by applying GoTrim to each coordinate individually \citep{pillutla2022robust, yin2018byzantine}.

\paragraph{Applications.} The proposed methods offer a promising foundation for robust decentralized optimization. In particular, \textsc{GoTrim} could be integrated into existing mean-based optimization algorithms to enhance robustness against outliers \citep{koloskova2019decentralized, colin2016gossip, wu2018review}. Realizing this integration, however, will require further algorithmic and theoretical development. Additionally, our ranking algorithms may prove valuable in extreme value theory, where identifying rare or extreme observations is critical—especially in high-stakes domains such as finance, insurance, and environmental science \citep{jalalzai2018binary}.

\end{document}